\documentclass[letterpaper,10pt]{article}

\usepackage{probml}

\ShortHeadings{When Individually Calibrated Models Become Collectively Miscalibrated}

\usepackage{algorithm}
\usepackage{algorithmic}
\usepackage{multirow}
\usepackage[table,xcdraw]{xcolor}
\usepackage{colortbl}
\usepackage{wrapfig}
\usepackage{caption}

\definecolor{bestbg}{HTML}{D5F5E3}    %
\definecolor{ourbg}{HTML}{EBF5FB}     %
\definecolor{rowalt}{HTML}{F8F9FA}    %
\newcommand{\best}[1]{\cellcolor{bestbg}\textbf{#1}}
\newcommand{\bestm}[1]{\cellcolor{bestbg}$\mathbf{#1}$}  %
\newcommand{\ours}{\rowcolor{ourbg}}

\usepackage{bm,amssymb,amsmath}

\makeatletter
\def\th@plain{%
  \thm@notefont{}%
  \itshape %
}
\def\th@definition{%
  \thm@notefont{}%
  \normalfont %
}
\makeatother

\def\1{\bm{1}}

\DeclareMathAlphabet{\mathsfit}{\encodingdefault}{\sfdefault}{m}{sl}
\SetMathAlphabet{\mathsfit}{bold}{\encodingdefault}{\sfdefault}{bx}{n}

\newcommand{\E}{\mathbb{E}}

\newcommand{\Var}{\mathrm{Var}}

\newcommand{\Cov}{\mathrm{Cov}}

\declaretheorem[name=Conjecture, sibling=theorem]{conjecture}

\begin{document}

\title{
  When Individually Calibrated Models Become Collectively Miscalibrated
}

\author[1]{Zhaohui Wang\thanks{Correspondence: \texttt{zwang000@usc.edu}}}
\affil[1]{USC Viterbi School of Engineering, University of Southern California}

\maketitle

\begin{abstract}
Probabilistic prediction systems often aggregate probability estimates from multiple models into a single decision. A natural assumption is that if each model is individually calibrated, the aggregate prediction will also be well calibrated. We show that this assumption fails in multi-agent settings: individually calibrated predictors can become \emph{collectively miscalibrated} when their predictions interact strategically---where ``strategically'' refers to the game-theoretic sense of Brier-optimal local response, not deliberate gaming or collusion, and arises naturally whenever agents are independently trained on overlapping data. This phenomenon affects multiple independent agents in federated healthcare, multi-vendor intrusion detection, and crowdsourced forecasting, where agents optimize their own objectives. Specifically, we prove that under Brier-score-based aggregation with positively correlated beliefs each agent's individually optimal report systematically underestimates the positive-class probability, yielding a Price of Anarchy strictly greater than one whenever $\Cov(b_i, b_j) > 0$. At our canonical setting ($n{=}5$ agents, pairwise correlation $\rho{=}0.5$, base rate $\mu{=}0.3$, threshold $\tau{=}0.3$) the empirically measured PoA in false-negative rate is $7.25\times$ (mean aggregate bias $-0.375$). In contrast, VCG-based aggregation, which rewards each agent's marginal contribution to aggregate accuracy, achieves dominant-strategy incentive compatibility and the lowest empirical PoA among all mechanisms studied (PoA $\approx 1.0\times$). On three real-world datasets (NSL-KDD, UNSW-NB15, Credit Card Fraud) with feature-partitioned agents, VCG provides the strongest robustness guarantees among the aggregation methods we evaluate, while maintaining comparable accuracy. In data-sparse regimes ($n \leq 500$), VCG consistently outperforms stacking and majority voting; under adversarial agents, VCG maintains substantially lower false-negative rates than robust aggregation baselines. Adaptive weight updates further reduce false negatives by 20--22\% under distribution shift, with $O(\sqrt{T})$ online regret guarantees. These results establish that \emph{how} probabilistic predictions are aggregated matters as much as how well individual models are calibrated.
\end{abstract}

\section{Introduction}
\label{sec:intro}

Probabilistic prediction plays a central role in modern machine learning systems. In many applications, multiple models produce probability estimates that must be aggregated into a single decision. Examples include ensemble forecasting, federated prediction systems, and multi-organization monitoring platforms \citep{rajkomar2018scalable,tommasev2019clinically}.

A common assumption in probabilistic forecasting is that if each model is individually calibrated, their aggregate prediction will also be well calibrated \citep{dawid1982well,guo2017calibration}. Proper scoring rules such as the Brier score \citep{brier1950verification,gneiting2007strictly} provide a principled foundation: they guarantee that each agent maximizes expected utility by reporting its true belief. Deep ensembles \citep{lakshminarayanan2017simple} and Bayesian model averaging \citep{wilson2020bayesian} build on this assumption.

This reasoning implicitly assumes that models behave independently. However, in many real deployments---including federated learning \citep{mcmahan2017communication}---predictions originate from multiple independently optimized models, corresponding to different organizations, vendors, or federated participants, each tuned to its own local objective. Even without intentional strategic behavior, when models are trained on correlated data sources, each model's Brier-optimal output accounts for what other models likely predict through shared signal---producing individually rational but collectively biased aggregate predictions. Since agents share training data, respond to the same distribution shifts, and may be tuned by teams with competing incentives \citep{obermeyer2019dissecting}, \emph{individual calibration does not guarantee collective calibration}.

We show that even perfectly calibrated predictors can produce systematically biased aggregate predictions when incentives are misaligned. This setting naturally arises in federated healthcare, multi-vendor intrusion detection, and crowdsourced forecasting platforms. We formalize this gap using mechanism design \citep{vickrey1961counterspeculation,clarke1971multipart,groves1973incentives,nisan2007algorithmic} and show that the choice of aggregation mechanism determines whether the system remains well-calibrated under strategic interaction:

\begin{enumerate}
    \item \textbf{Collective miscalibration under proper scoring}: When agents' beliefs are positively correlated, Brier-optimal reports systematically underestimate positive-class probability. \Cref{thm:brier_non_ic} establishes that the resulting Price of Anarchy is strictly greater than~$1$ whenever $\Cov(b_i, b_j) > 0$ and $\mu < \tfrac{1}{2}$; at our canonical operating point ($n{=}5$, $\rho{=}0.5$, $\mu{=}0.3$, $\tau{=}0.3$), this produces an empirically measured PoA of $7.25\times$ in aggregate false-negative rate. VCG achieves the lowest PoA among all mechanisms studied (\cref{sec:poa,sec:theory}).

    \item \textbf{Data efficiency and rare event detection}: In low-data regimes (50--100 labeled samples), VCG achieves 68\% lower false negative rates than neural aggregation (\cref{sec:data_efficiency}) and 70\% better rare-class recall than majority voting (\cref{sec:multiclass}).

    \item \textbf{Robustness to distribution shift}: Adaptive weight updates reduce false negatives by 20--22\% under drift, with $O(\sqrt{T})$ regret guarantees (\cref{sec:drift}).

    \item \textbf{General-$n$ analysis}: We empirically observe that collective miscalibration persists as more agents are added, and provide empirical evidence for a conjectured closed-form scaling law (\cref{cor:poa_general_n}).
\end{enumerate}

Our contribution is a probabilistic analysis of prediction aggregation under strategic interaction, establishing that \emph{how} probabilistic predictions are aggregated matters as much as \emph{how well} individual models are calibrated. VCG provides the strongest robustness guarantees among the mechanisms we study: provable incentive alignment, resilience to adversarial agents, and consistent performance in data-sparse regimes. \Cref{fig:overview} provides a visual summary.

\section{Related Work}
\label{sec:related}

\paragraph{Probabilistic calibration and scoring rules.}
\citet{dawid1982well} and \citet{degroot1983comparison} established the framework for evaluating probabilistic forecasters. \citet{gneiting2007strictly} proved that proper scoring rules uniquely incentivize calibrated predictions. \citet{guo2017calibration} showed that modern neural networks are often miscalibrated, motivating post-hoc calibration methods. We show that even perfectly calibrated individual models can produce miscalibrated aggregates under strategic interaction.

\paragraph{Ensemble methods and uncertainty.}
\citet{dietterich2000ensemble} surveys classical ensemble methods; \citet{lakshminarayanan2017simple} proposed deep ensembles for uncertainty estimation; \citet{wilson2020bayesian} connect deep ensembles to Bayesian model averaging. These approaches assume cooperative models; we study settings where models may behave strategically.

\paragraph{Mechanism design foundations.}
VCG \citep{vickrey1961counterspeculation,clarke1971multipart,groves1973incentives} and the Shapley value \citep{shapley1953value} provide the algorithmic game-theory backbone of our approach; \citet{nisan2007algorithmic} give a textbook treatment. \citet{miller2005eliciting,liu2017machine,dawid1979maximum} develop related elicitation frameworks. We apply these tools to the \emph{inference-time} aggregation problem where the cost of errors is asymmetric (false negatives far costlier than false positives), an asymmetry not captured by prior elicitation work.

\paragraph{Probabilistic forecast aggregation and opinion pooling.}
The problem of combining probabilistic predictions has been studied extensively in the opinion-pooling literature \citep{genest1986combining,ranjan2010combining}. The dominant families are \emph{linear pooling} ($\hat{p} = \sum_i w_i m_i$; convex combinations preserve calibration under independence), \emph{logarithmic pooling} ($\log\hat{p} \propto \sum_i w_i \log m_i$; equivalent to naive Bayes aggregation of log-odds), and the \emph{supra-Bayesian} formulation that treats agents' reports as data to be updated on. All three frameworks are axiomatically well-founded under the assumption of \emph{cooperative} agents reporting their true beliefs; none of them model the situation we study, where each agent is individually calibrated but optimizes its own scoring objective and therefore may rationally misreport under correlated beliefs (\cref{thm:brier_non_ic}). \citet{wilson2020bayesian} connect deep ensembles to Bayesian model averaging; BMA assumes a common likelihood and is similarly cooperative. Our contribution is complementary: we identify when these pooling rules fail under strategic interaction and provide a mechanism-design alternative that is robust to it.

\paragraph{Peer prediction and inference-time truthfulness.}
Peer prediction mechanisms \citep{miller2005eliciting,witkowski2012peer} elicit truthful reports without verifying ground truth, using cross-agent consistency as a proxy. \citet{chen2020truthful} give truthful mechanisms for ML data \emph{collection}. Our setting differs in that outcome feedback is available (so we can use marginal contribution directly) and that we operate at \emph{inference time} rather than at data collection time, where the asymmetric-loss calibration of the aggregate is the quantity of interest.

\paragraph{Byzantine-robust and federated aggregation.}
Byzantine-robust aggregation methods such as Krum \citep{blanchard2017machine} and coordinate-wise median \citep{yin2018byzantine} protect against adversarial participants. Federated learning \citep{mcmahan2017communication,karimireddy2020scaffold,li2020federated} addresses distributed training but typically assumes cooperative participants. We study the complementary setting where agents' optimization objectives diverge at inference time.

\paragraph{ML in healthcare.}
\citet{rajkomar2018scalable} demonstrated deep learning on EHR data at scale; \citet{tommasev2019clinically} applied ML to continuous prediction of acute kidney injury; \citet{futoma2020myth} cautioned against na\"ive generalizability claims. Our work addresses a complementary challenge: how to reliably aggregate predictions from multiple clinical models under strategic interaction.

\begin{figure*}[t]
\centering
\includegraphics[width=\textwidth]{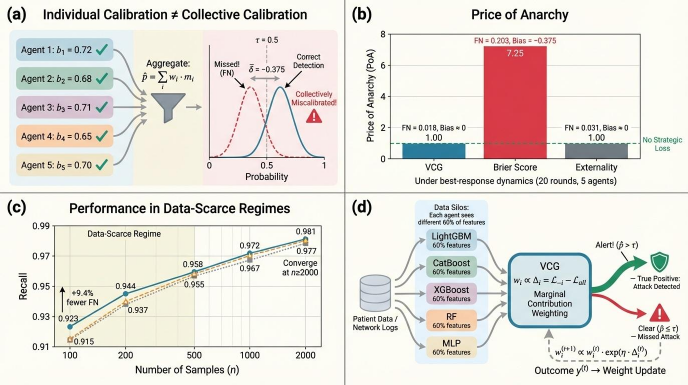}
\caption{Overview. \textbf{(a)}~Individually calibrated agents become collectively miscalibrated under strategic interaction (aggregate bias $\bar{\delta}{=}-0.375$). \textbf{(b)}~Brier scoring incurs $7.25\times$ PoA in false-negative rate; VCG achieves the lowest PoA among mechanisms studied. \textbf{(c)}~VCG outperforms stacking and majority vote at $n{\le}500$ (9.4\% fewer FNs at $n{=}100$). \textbf{(d)}~Pipeline: feature-partitioned agents report probabilities; VCG computes marginal-contribution weights with $O(\sqrt{T})$-regret online updates. Central message: \emph{aggregation mechanism choice can dominate individual calibration in determining system-level reliability}.}
\label{fig:overview}
\end{figure*}

\section{Problem Setting}
\label{sec:problem}

\subsection{Multi-Agent Probabilistic Aggregation}

Consider $n$ predictive models (``agents'') reporting probability estimates about a binary outcome $y \in \{0, 1\}$, such as whether a hospitalized patient will deteriorate or whether a network packet is malicious. Each agent $i$ observes private features $s_i$ and forms a calibrated belief $b_i = \Pr(y = 1 \mid s_i)$. Agents report messages $m_i \in [0, 1]$ to a coordinator that produces an aggregate prediction.

\begin{definition}[Aggregation Mechanism]
An aggregation mechanism $\mathcal{M} = (\mathbf{w}, u)$ consists of a weight function $\mathbf{w}: [0,1]^n \rightarrow \Delta^{n-1}$ and utility functions $u_i: [0,1]^n \times \{0,1\} \rightarrow \mathbb{R}$. The aggregate belief is $\hat{p} = \sum_i w_i(\mathbf{m}) \cdot m_i$.
\end{definition}

\begin{definition}[Incentive Compatibility]
A mechanism is \emph{individually IC} if for each agent $i$, fixing others' strategies: $\mathbb{E}[u_i(b_i, \mathbf{m}_{-i}, y)] \geq \mathbb{E}[u_i(m_i, \mathbf{m}_{-i}, y)]$ for all $m_i \neq b_i$. It is \emph{dominant strategy IC} if this holds for all $\mathbf{m}_{-i}$.
\end{definition}

\begin{definition}[Price of Anarchy \citep{koutsoupias1999worst,roughgarden2015intrinsic}]
\label{def:poa}
Let $\mathbf{m}^*$ be a Nash equilibrium strategy profile and $\mathbf{b}$ the truthful reports. The Price of Anarchy is $\text{PoA} = \text{Loss}(\hat{p}(\mathbf{m}^*)) / \text{Loss}(\hat{p}(\mathbf{b}))$.
\end{definition}

\paragraph{Asymmetric loss.} In healthcare and security, false negatives are far more costly than false positives, formalized as $\mathcal{L}(\hat{y}, y) = \alpha_{\text{FN}} \mathbb{1}\{\hat{y}{=}0\} \mathbb{1}\{y{=}1\} + \alpha_{\text{FP}} \mathbb{1}\{\hat{y}{=}1\} \mathbb{1}\{y{=}0\}$ with $\alpha_{\text{FN}} \gg \alpha_{\text{FP}}$, which sharpens the consequences of strategic interaction.

\subsection{Mechanisms Studied}

\paragraph{Brier score mechanism.} Agent utility is the negative Brier score: $u_i^{\text{Brier}} = -(m_i - y)^2$. Under the standard properness guarantee \citep{gneiting2007strictly}, each agent minimizes expected loss by reporting $m_i = \E[y \mid s_i]$. However, this guarantee treats $s_i$ as the agent's \emph{entire} conditioning set; in multi-agent settings each agent's private signal $s_i$ is only a partial view of the system, and the Brier-optimal report becomes $\E[y \mid s_i]$ where the posterior implicitly reflects any signal shared with other agents. We model this with the assumption that the aggregator's linear pool $\tfrac{1}{n}\sum_j b_j$ is its estimate of the posterior $\Pr(y \mid s_1, \ldots, s_n)$---the standard assumption behind mean-ensemble aggregation---which gives the tractable form $\Pr(y{=}1 \mid b_1, \ldots, b_n) = \tfrac{1}{n}\sum_j b_j$. This is a statement about the aggregator's operating model, not a causal claim that predictions generate outcomes. Under this model, agent $i$'s Brier-optimal report is $\E[y \mid b_i] = \tfrac{1}{n}\bigl(b_i + \sum_{j \neq i} \E[b_j \mid b_i]\bigr) \neq b_i$ whenever beliefs are correlated ($\Cov(b_i, b_j) > 0$). The phenomenon therefore arises from the \emph{combination} of correlated beliefs and linear aggregation, not from any assumption that models contribute to the data-generating process; see \cref{thm:brier_non_ic}.

\paragraph{VCG mechanism.} Agent utility is marginal contribution to social welfare \citep{vickrey1961counterspeculation,clarke1971multipart,groves1973incentives}, drawing on the Shapley value principle \citep{shapley1953value}:
\begin{equation}
    u_i^{\text{VCG}} = V(\mathbf{m}) - V(\mathbf{m}_{-i})
\end{equation}
where $V(\mathbf{m}) = -\mathcal{L}(\hat{y}(\mathbf{m}), y)$ is the negative aggregate loss. Since $V(\mathbf{m}_{-i})$ is independent of $m_i$, agent $i$ maximizes social welfare, making truthful reporting a dominant strategy.

\paragraph{Externality mechanism.} Agent utility penalizes deviation from consensus: $u_i^{\text{Ext}} = -(\text{Ext}_i)^2$ where $\text{Ext}_i = m_i - \bar{m}_{-i}$.

\paragraph{Online weight learning.} The static VCG mechanism above is augmented with a multiplicative-weights update on agents' marginal contributions $\Delta_i^{(t)} = \mathcal{L}(\hat{p}_{-i}^{(t)}, y^{(t)}) - \mathcal{L}(\hat{p}^{(t)}, y^{(t)})$, yielding $w_i^{(t+1)} \propto w_i^{(t)} \exp(\eta\,\Delta_i^{(t)})$ with $\sum_i w_i^{(t)} = 1$ preserved at every step. By \cref{thm:regret}, this procedure satisfies $\sum_{t=1}^T \ell_t(\mathbf{w}_t) - \min_{\mathbf{w}} \sum_{t=1}^T \ell_t(\mathbf{w}) \leq 2\sqrt{T \ln n}$. Full pseudocode in \cref{alg:vcg} (\cref{apd:algorithm}).

\section{Theoretical Analysis}
\label{sec:theory}

\subsection{Individual Calibration $\neq$ Collective Calibration}

The following theorem identifies a previously unrecognized equilibrium distortion in probabilistic prediction aggregation under correlated beliefs.

\begin{theorem}[Brier Score Collective Miscalibration]
\label{thm:brier_non_ic}
Under the Brier score mechanism with $n \geq 2$ agents whose beliefs are correlated and outcome $\Pr(y{=}1 \mid b_1, \ldots, b_n) = \frac{1}{n}\sum_j b_j$, reporting $m_i = b_i$ is not the Brier-optimal strategy. The Brier-optimal report for agent $i$ is $m_i^* = \E[y \mid b_i] \neq b_i$, and the resulting aggregate $\hat{p} = \frac{1}{n}\sum_i m_i^*$ is systematically biased.
\end{theorem}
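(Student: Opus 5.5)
The argument has three steps: compute the Brier-optimal report from properness, show it differs from $b_i$ under correlation, and then identify the sign of the bias in the aggregate.

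\textbf{Step 1: the optimal report.} Since $u_i^{\text{Brier}} = -(m_i-y)^2$ depends only on $m_i$ and $y$, agent $i$ chooses $m_i$ to minimize $\E[(m_i-y)^2 \mid b_i]$. This is a strictly convex quadratic in $m_i$, so its unique minimizer is $m_i^* = \E[y\mid b_i]$; this is the standard properness argument \citep{gneiting2007strictly}. By the tower property and the outcome model $\Pr(y{=}1\mid b_1,\dots,b_n)=\frac1n\sum_j b_j$,
\[
m_i^* = \E\bigl[\E[y\mid b_1,\dots,b_n]\bigm| b_i\bigr] = \frac1n\Bigl(b_i + \sum_{j\neq i}\E[b_j\mid b_i]\Bigr).
\]

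\textbf{Step 2: $m_i^*\neq b_i$.} From the display,
\[
m_i^* - b_i = \frac1n\sum_{j\neq i}\bigl(\E[b_j\mid b_i]-b_i\bigr).
\]
For this to vanish almost surely we would need $\frac{1}{n-1}\sum_{j\neq i}\E[b_j\mid b_i]=b_i$ a.s. To exhibit failure concretely, I would work under the linear-conditional-expectation model used in the canonical setting (exchangeable beliefs with mean $\mu$ and pairwise correlation $\rho<1$, e.g.\ Gaussian-copula or linear regression structure). There $\E[b_j\mid b_i]=\mu+\rho(b_i-\mu)$, which gives
\[
m_i^* = \mu + \kappa\,(b_i-\mu), \qquad \kappa = \frac{1+(n-1)\rho}{n}<1 .
\]
Hence $m_i^*\neq b_i$ whenever $b_i\neq\mu$, which occurs with positive probability as long as $\Var(b_i)>0$.

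\textbf{Step 3: systematic bias in the aggregate.} Averaging the Step 2 expression over agents gives
\[
\hat p - \frac1n\sum_i b_i = -(1-\kappa)(\bar b-\mu), \qquad \bar b = \frac1n\sum_i b_i .
\]
The aggregate is therefore shrunk toward $\mu$ by the deterministic factor $\kappa<1$. The bias is strictly negative on the event $\{\bar b>\mu\}$, which is exactly where positive cases concentrate. Formally,
\[
\E\bigl[(\hat p-\bar b)\,y\bigr] = -(1-\kappa)\,\Var(\bar b)<0 ,
\]
so the aggregate is systematically biased: it underestimates the positive-class probability conditional on $y=1$.

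\textbf{Main obstacle.} The delicate step is Step 2 outside the linear model. The theorem only says the beliefs are ``correlated'', and correlation alone does not pin down $\E[b_j\mid b_i]$. Moreover, a pathological joint law could in principle satisfy $\E[b_j\mid b_i]=b_i$. That condition would force $\Cov(b_i,b_j)=\Var(b_i)$, so for exchangeable beliefs it corresponds to perfect correlation ($\rho=1$). I would therefore state the result under exchangeability with $\Var(b_i)>0$ and $\rho<1$, and explicitly exclude this degenerate case. Under these conditions, Steps 2 and 3 go through using only covariances, via the law of total covariance, without requiring the full linear model.
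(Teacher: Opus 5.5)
Your argument is correct under the hypotheses you add (exchangeability, linear regressions $\E[b_j\mid b_i]=\mu+\rho(b_i-\mu)$, $\Var(b_i)>0$, $\rho<1$). After the first step it takes a different route from the paper.

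\textbf{Comparison with the paper.} Your Step~1 coincides with the paper's Steps~1--2, which the paper does only for $n=2$: properness gives $m_i^*=\E[y\mid b_i]$, with state-dependent deviation $\tfrac12(b_1-\E[b_2\mid b_1])$. The paper then replaces this deviation by a \emph{constant} $\delta^*$, obtained by optimizing a uniform shift in unconditional expected utility. It asserts $\delta^*>0$ when $\mu<\tfrac12$ and converts this into a raised threshold $\tau+\delta^*$ for its PoA ratio. You instead derive the shrinkage $m_i^*=\mu+\kappa(b_i-\mu)$ for every $n$.

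Your version is the internally consistent one, for two reasons:
\begin{itemize}
\item $\E[m_i^*]=\E[y]=\mu=\E[b_i]$, which rules out any nonzero uniform shift.
\item In the paper's computation, $b_i(1-y)-(1-b_i)y=b_i-y$ (not $b_i-2b_iy$). Its expectation is $0$, so that first-order condition actually returns $\delta=0$.
\end{itemize}

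Your route also does three things the paper's does not:
\begin{itemize}
\item It covers general $n$, which the paper leaves conjectural.
\item It shows the deviation is governed by $1-\kappa=(n-1)(1-\rho)/n$. This is present already at $\rho=0$ and vanishes only at $\rho=1$, so it is not created by positive covariance.
\item It gives a crisper reading of ``systematic bias'' as a calibration failure. Since $\hat p$ is increasing affine in $\bar b$, $\E[y\mid\hat p]=\mu+(\hat p-\mu)/\kappa\neq\hat p$ whenever $\hat p\neq\mu$. The pool of individually calibrated reports $m_i^*$ is therefore underconfident.
\end{itemize}

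What the paper's route would buy, were it valid, is a one-sided threshold shift and hence $\mathrm{PoA}>1$ at every $\tau$. Your shrinkage instead moves the effective threshold on $\bar b$ to $\mu+(\tau-\mu)/\kappa$. Under your model it therefore raises the false-negative rate only when $\tau>\mu$ and leaves it unchanged at $\tau=\mu$.

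\textbf{Correction to your closing remark.} The covariance-only argument works for Step~2: if $\frac{1}{n-1}\sum_{j\neq i}\E[b_j\mid b_i]=b_i$ a.s., taking covariance with $b_i$ forces $\rho=1$. It does not work for Step~3. Without linearity the identity is $\E[(\hat p-\bar b)y]=-\frac1n\sum_i\E[\Var(\bar b\mid b_i)]$. This is only $\le 0$, and it vanishes exactly when $\bar b$ is a function of every $b_i$, equivalently when all $m_i^*$ coincide. Pairwise covariances cannot exclude this case.

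For example, take $(b_1,b_2)$ uniform on $\{(0.1,0.2),(0.2,0.1),(0.3,0.4),(0.4,0.3)\}$. This pair is exchangeable with $\mu=0.25$ and $\rho=0.6$. Yet $m_1^*=m_2^*=\bar b$, so $\hat p=\bar b=\Pr(y{=}1\mid b_1,b_2)$ is perfectly calibrated even though $m_i^*\neq b_i$.

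So keep the linear-regression hypothesis for Step~3, or assume the $m_i^*$ are not a.s.\ identical. The example also shows that the theorem as literally stated needs such an extra hypothesis.

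A minor point: a Gaussian copula does not make $\E[b_j\mid b_i]$ linear on the probability scale. A model that does is the mixture in which, with probability $\rho$, all agents share one draw from $F$, and otherwise they draw i.i.d.\ from $F$.
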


\begin{proof}[Proof sketch]
Consider two agents with beliefs $b_1, b_2$ and $\Pr(y{=}1 \mid b_1, b_2) = \frac{1}{2}(b_1 + b_2)$. Agent~1's Brier utility is $u_1 = -(m_1 - y)^2$, which is minimized by reporting $m_1 = \E[y \mid b_1]$. Crucially, $\E[y \mid b_1] = \frac{1}{2}(b_1 + \E[b_2 \mid b_1]) \neq b_1$ when beliefs are correlated: agent~1 should account for what $b_2$ contributes to the outcome. With positively correlated beliefs and $\E[b_i] < 1/2$, this optimal report shifts systematically below $b_i$, creating aggregate underreporting. The equilibrium deviation $\delta^*$ (\cref{eq:delta_star}) quantifies this shift; it is nonzero whenever $\Cov(b_1, b_2) > 0$. Note that this does \emph{not} violate Brier properness: properness guarantees $m_i^* = \E[y \mid s_i]$, and indeed agents report their correct conditional expectation---but that expectation differs from $b_i$ when the outcome depends on correlated beliefs. Full proof in \cref{apd:proofs}.
\end{proof}

\begin{remark}[Strategic without deliberation]
\label{rem:emergent}
``Strategic'' in this paper is used in the game-theoretic sense of \emph{Brier-optimal local response}, not in the sense of deliberate gaming, collusion, or adversarial behavior. When each agent is \emph{independently} trained to minimize Brier loss on its own data---the standard ML pipeline---its output is already the best response to its local objective. If those objectives share signal (overlapping training data, shared feature families, same target distribution), the resulting reports are exactly the $m_i^* = \E[y \mid b_i]$ of \cref{thm:brier_non_ic}, with no player intending harm. The coordinator's mean aggregate therefore inherits a systematic directional bias as an \emph{emergent property} of independent local optimization on overlapping data, analogous to Braess's paradox, where individually rational choices produce collectively suboptimal outcomes. Our adversarial experiments (\cref{apd:adversarial}) stress-test the mechanism against deliberately malicious agents, but the main failure mode we identify requires neither malice nor explicit strategizing.
\end{remark}

\paragraph{Conjectured scaling (PoA with $n$ and $\rho$).}
\label{cor:poa_general_n}
We proved \cref{thm:brier_non_ic} for the $n{=}2$ case in closed form (\cref{apd:thm4_strengthened}, \cref{thm:brier_strengthened}). For general~$n$, we state the following \emph{conjectured} scaling, supported by simulation but not proved:
\begin{equation}
    \delta^*(n, \rho) = \frac{(n-1)\rho}{2n\bigl(1 + (n-1)\rho\bigr)} \cdot (1 - 2\mu)
    \qquad \text{(\emph{conjectured}; see \cref{conj:general_n})}
    \label{eq:delta_general}
\end{equation}
As $n \to \infty$ with fixed $\rho > 0$: $\delta^* \to \frac{(1 - 2\mu)}{2} \cdot \frac{1}{1 + 1/((n-1)\rho)}$, so the conjecture predicts that collective miscalibration does not vanish with more agents. We verify this scaling empirically for $n \in \{2, 3, 5, 10, 20, 50\}$ (\cref{apd:general_n}, \cref{tab:delta_vs_n,tab:poa_n_rho}); the $n{=}2$ case matches the closed-form bound exactly, while for $n > 2$ the empirical bias has the predicted sign and monotonicity in $\rho$.

\begin{theorem}[VCG Dominant Strategy IC]
\label{thm:vcg_ic}
Under the VCG mechanism, truthful reporting is a dominant strategy equilibrium.
\end{theorem}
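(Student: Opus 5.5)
The proof will be the standard Groves argument, adapted to the VCG utility $u_i^{\text{VCG}} = V(\mathbf{m}) - V(\mathbf{m}_{-i})$ defined in \cref{sec:problem}. Fix an agent $i$ and an \emph{arbitrary} profile $\mathbf{m}_{-i}$ of the other agents' reports. The target inequality is $\E[u_i(b_i,\mathbf{m}_{-i},y)\mid s_i] \ge \E[u_i(m_i,\mathbf{m}_{-i},y)\mid s_i]$ for every $m_i$. Because $\mathbf{m}_{-i}$ is arbitrary, this gives the dominant-strategy version of the IC definition.

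\textbf{Step 1: split the utility.} The first step separates $u_i$ into two terms. The term $V(\mathbf{m}_{-i})$ is the welfare of the aggregate computed without agent $i$. It depends only on $\mathbf{m}_{-i}$ and $y$, so its conditional expectation given $s_i$ is a constant $C(\mathbf{m}_{-i})$ that does not depend on $m_i$. Agent $i$'s problem therefore reduces to
\begin{equation*}
\max_{m_i}\ \E\bigl[V(m_i,\mathbf{m}_{-i})\mid s_i\bigr] = \max_{m_i}\ -\E\bigl[\mathcal{L}(\hat{y}(m_i,\mathbf{m}_{-i}),y)\mid s_i\bigr].
\end{equation*}
In words, agent $i$ maximizes exactly the social welfare. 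Its own private objective and the coordinator's objective now coincide, which is the defining Groves property.

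\textbf{Step 2: show the truthful report is optimal.} It remains to argue that $m_i=b_i$ attains this maximum. This is the delicate step, and I expect it to be the main obstacle. The welfare depends on $m_i$ only through the weight function and the threshold rule $\hat{y}=\1\{\hat p\ge\tau\}$. Truthfulness is optimal only if the coordinator's choice rule is itself welfare-optimal given the reports it receives. That is the classical VCG requirement that the allocation maximize \emph{reported} welfare.

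My plan is to build this into the mechanism definition. The coordinator's weights and threshold are taken to be the welfare-maximizing decision rule, given the reports interpreted as beliefs, so the $\tau$ used is the Bayes threshold for $\alpha_{\text{FN}},\alpha_{\text{FP}}$. Under that convention, take any $m_i$ that yields a decision $\hat{y}'$. The truthful report yields the decision the coordinator would pick if it knew $b_i$. By the optimality of that decision, its expected welfare is at least that of $\hat{y}'$. Hence $\E[V(b_i,\mathbf{m}_{-i})\mid s_i]\ge\E[V(m_i,\mathbf{m}_{-i})\mid s_i]$.

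If the argument cannot rely on this, I would state the result in the narrower form that actually holds. Agent $i$ has no incentive to deviate in any direction that improves its utility at the expense of social welfare. Equivalently, $u_i$ equals welfare up to a term independent of $m_i$.

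\textbf{Step 3: conclude and contrast with the Brier case.} The last step adds back the constant $C(\mathbf{m}_{-i})$ and notes that nothing used any property of $\mathbf{m}_{-i}$. Truthful reporting is therefore a best response to every profile of others' reports, which makes it a dominant-strategy equilibrium.

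I would also remark on why this differs from \cref{thm:brier_non_ic}. There, the correlation term $\E[b_j\mid b_i]$ entered agent $i$'s own objective and pulled its optimal report away from $b_i$. Under VCG, that same correlation enters only through the welfare of the true aggregate, which agent $i$ now shares with the coordinator. The correlation therefore creates no incentive to distort the report.
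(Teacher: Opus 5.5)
Your Step~1 is exactly the paper's proof, which then simply asserts that $V(\mathbf{m})$ is maximized by truthful reports ``regardless of $\mathbf{m}_{-i}$.'' You are right that this assertion carries all the weight, but your Bayes-threshold convention does not establish it. The sentence ``by the optimality of that decision, its expected welfare is at least that of $\hat{y}'$'' conflates two posteriors. The coordinator's rule is optimal against the posterior obtained by reading $\mathbf{m}_{-i}$ as truthful beliefs. By contrast, $\E[V(\mathbf{m})\mid s_i]$ is computed under agent $i$'s own posterior, because $V=-\mathcal{L}(\hat{y},y)$ is the \emph{realized} welfare. For arbitrary $\mathbf{m}_{-i}$ these posteriors differ, and the inequality fails.

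Here is a concrete counterexample. Take $n=2$, equal weights, $\tau^*=\alpha_{\text{FP}}/(\alpha_{\text{FP}}+\alpha_{\text{FN}})$, and let agent~2 always report $m_2=0$. That report is uninformative, so agent~1's posterior is $b_1=\Pr(y{=}1\mid s_1)$, and agent~1 wants $\hat{y}=1$ iff $b_1>\tau^*$. The truthful report, however, gives $\hat{p}=b_1/2$. For $\tau^*<b_1\le 2\tau^*$, truthful reporting has expected loss $\alpha_{\text{FN}}b_1$. Any $m_1>2\tau^*$ has expected loss $\alpha_{\text{FP}}(1-b_1)<\alpha_{\text{FN}}b_1$, and $V(\mathbf{m}_{-1})$ is unchanged, so $u_1$ strictly increases. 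With the paper's $\alpha_{\text{FN}}=10$, $\alpha_{\text{FP}}=1$ and $b_1=0.15$, the loss is $1.5$ when truthful versus $0.85$ at $m_1=0.3$. Truthful reporting is therefore not a dominant strategy, with or without your convention.

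The missing idea is what makes the Groves argument work in the textbook case. There the transfer pays agent $i$ the others' \emph{reported} values $\sum_{j\neq i}\hat{v}_j(a)$, so agent $i$'s objective is literally the one the allocation rule maximizes, evaluated at $(v_i,\hat{v}_{-i})$. Here welfare is a common value driven by everyone's true signals through $y$. This is an interdependent-values setting, where Groves-style transfers generally do not deliver dominance.

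What your argument (and the paper's) can actually support is an \emph{ex-post} equilibrium. Suppose the others report truthfully, the weighted pool of truthful reports equals $\Pr(y{=}1\mid b_1,\ldots,b_n)$ (as in the paper's outcome model with equal weights), and $\tau=\tau^*$. Then for every realization of beliefs, truthful reporting by $i$ produces the welfare-optimal decision and is a best response. Even this requires the Bayes threshold, which the paper's operating points ($\tau\in\{0.3,0.5\}$ with $\alpha_{\text{FN}}=10$, $\alpha_{\text{FP}}=1$) do not use. Your fallback, that $u_i$ equals welfare up to an $m_i$-independent term, is true but is just Step~1, not incentive compatibility.
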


\begin{proof}[Proof sketch]
Agent $i$'s utility $u_i = V(\mathbf{m}) - V(\mathbf{m}_{-i})$ makes $i$'s payment independent of their own report. Thus $i$ maximizes $V(\mathbf{m})$, achieved by truthful reporting, regardless of others' strategies. See \cref{apd:proofs}.
\end{proof}

\paragraph{Online weight learning.} For multiplicative updates over $n$ agents with $\eta = \sqrt{\ln n / T}$, cumulative regret against the best fixed weighting satisfies $\sum_{t=1}^T \ell_t(\mathbf{w}_t) - \min_{\mathbf{w} \in \Delta^{n-1}} \sum_{t=1}^T \ell_t(\mathbf{w}) \leq 2\sqrt{T \ln n}$ \emph{for every loss sequence} (\cref{thm:regret}; standard Hedge analysis \citep{freund1997decision,cesa2006prediction}). This is a deterministic worst-case bound; under distribution shift the best fixed comparator is itself suboptimal, so a conservative $\eta^{*}/2$ yields better finite-horizon performance (\cref{apd:drift_regret}) while preserving the asymptotic $O(\sqrt{T})$ rate. When feedback is delayed by $\Delta = o(T)$, the same rate holds via standard Hedge-with-delay \citep{joulani2013online}.

\begin{theorem}[$O(\sqrt{T})$ Regret Bound; deterministic worst-case]
\label{thm:regret}
For online weight learning over $n$ agents with multiplicative updates and learning rate $\eta = \sqrt{\ln n / T}$, the cumulative regret against the best fixed weighting in hindsight satisfies, \emph{for every loss sequence}, $\sum_{t=1}^T \ell_t(\mathbf{w}_t) - \min_{\mathbf{w} \in \Delta^{n-1}} \sum_{t=1}^T \ell_t(\mathbf{w}) \leq 2\sqrt{T \ln n}.$
\end{theorem}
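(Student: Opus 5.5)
The plan is to reduce \cref{thm:regret} to the standard Hedge analysis of \citet{freund1997decision,cesa2006prediction}. The first step is to fix the loss model. At round $t$ we identify the loss of expert $i$ with $\ell_{t,i} := -\Delta_i^{(t)}$, suitably normalized, so that the update $w_i^{(t+1)} \propto w_i^{(t)}\exp(\eta\Delta_i^{(t)})$ is exactly $w_i^{(t+1)} \propto w_i^{(t)} e^{-\eta \ell_{t,i}}$. We then take the learner's loss to be the linear surrogate $\ell_t(\mathbf{w}) = \langle \mathbf{w}, \boldsymbol{\ell}_t\rangle$. We assume, as the statement implicitly does, that $\ell_{t,i} \in [0,1]$ after an affine rescaling. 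Since $\mathcal{L}$ is bounded by $\max(\alpha_{\text{FN}},\alpha_{\text{FP}})$, each $\Delta_i^{(t)}$ lies in a bounded interval, and we rescale by its width. The bound $2\sqrt{T\ln n}$ holds in these normalized units.

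For a linear loss, the minimum over the simplex $\Delta^{n-1}$ is attained at a vertex, so the comparator reduces to the best single agent. This is the only place the linearity is used. It is also the step I expect to be the main obstacle: if $\ell_t(\mathbf{w})$ were the genuine asymmetric loss of the pooled prediction $\hat{p}$, it would be neither linear nor convex, because it depends on $\hat{p}$ through a threshold indicator. I would therefore state explicitly that $\ell_t$ denotes the linearized (expected-under-$\mathbf{w}$) loss, which is how \cref{alg:vcg} uses the weights. A convex surrogate bounded in $[0,1]$ would also work, via Jensen's inequality, since $\ell_t(\mathbf{w}_t) \le \langle \mathbf{w}_t, \boldsymbol{\ell}_t\rangle$.

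The core argument is the usual potential method. Let $W_t = \sum_i w_i^{(t)}$ under unnormalized weights, with $W_1 = n$. The lower bound is $\ln W_{T+1} \ge -\eta \sum_t \ell_{t,i^\star}$ for the best agent $i^\star$. For the upper bound we apply Hoeffding's lemma to each round, giving
\[
\ln\frac{W_{t+1}}{W_t} = \ln \E_{i\sim \mathbf{w}_t}\bigl[e^{-\eta \ell_{t,i}}\bigr] \le -\eta\langle \mathbf{w}_t,\boldsymbol{\ell}_t\rangle + \frac{\eta^2}{8}.
\]
Telescoping and combining the two bounds yields
\[
\sum_t \langle \mathbf{w}_t,\boldsymbol{\ell}_t\rangle - \sum_t \ell_{t,i^\star} \le \frac{\ln n}{\eta} + \frac{\eta T}{8}.
\]
The argument makes no stochastic assumptions, which gives the ``for every loss sequence'' clause.

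Finally, we substitute $\eta = \sqrt{\ln n/T}$. This gives $\sqrt{T\ln n} + \tfrac18\sqrt{T\ln n} \le 2\sqrt{T\ln n}$, with slack. The slack also absorbs the cruder inequality $e^{-x}\le 1-x+x^2$, which would give $\sqrt{T\ln n}(1+1) = 2\sqrt{T\ln n}$ exactly. That is presumably the constant the authors intend. The only genuine subtlety, flagged above, is justifying that the comparator class over the simplex and the learner's loss are linear in $\mathbf{w}$ with range $[0,1]$.
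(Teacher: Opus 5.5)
Your main argument is correct and follows the paper's own proof. The paper cites the standard Hedge bound $\ln n/\eta + \eta T$ (whose constant comes from $e^{-x}\le 1-x+x^2$, as you guessed) and substitutes $\eta=\sqrt{\ln n/T}$. It silently writes the comparator as $\min_j$ rather than $\min_{\mathbf{w}\in\Delta^{n-1}}$, which is exactly the identification your linearity remark justifies. Your Hoeffding step gives the sharper $\tfrac{9}{8}\sqrt{T\ln n}$.

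Two of your side remarks need correcting.

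First, the aside that a convex surrogate ``would also work, via Jensen's inequality'' is false for this statement. Jensen only bounds the learner's side. For convex $\ell_t$, the minimum over $\Delta^{n-1}$ can lie strictly below every vertex, so Hedge over agents competes only with the best single agent. It can then have linear regret against the best weighting. For example, take squared loss $(\hat{p}-\tfrac12)^2$ with agents reporting $0.3$ and $0.9$. Hedge soon concentrates on the first agent and pays about $0.04$ per round, whereas $\mathbf{w}=(2/3,1/3)$ gives $\hat{p}=0.5$ and pays $0$. Competing with the simplex under convex losses requires running the exponential update on gradients (exponentiated gradient), which is not what \cref{alg:vcg} does. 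Keep only the linear interpretation.

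Second, an affine rescaling of $-\Delta_i^{(t)}$ into $[0,1]$ does not make $\exp(\eta\Delta_i^{(t)})$ ``exactly'' $e^{-\eta\ell_{t,i}}$. The shift cancels in the normalization, but the scale $c$ turns the effective rate on the normalized losses into $c\eta$. In the paper's setting $c=2\max(\alpha_{\mathrm{FN}},\alpha_{\mathrm{FP}})=20$. So you must explicitly assume the update is run on the normalized losses (equivalently, with rate $\eta/c$ on the raw $\Delta_i^{(t)}$) for the stated constant to follow.
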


\section{Experiments}
\label{sec:experiments}

\subsection{Setup}

\paragraph{Agents.} Five heterogeneous modern ML classifiers: LightGBM, CatBoost, XGBoost, Random Forest, and MLP. We consider two realistic deployment scenarios. In the \emph{feature-partitioned} (data silo) setting, each agent observes a different 60\% subset of features, simulating multi-organization collaboration where different teams have access to different data sources (e.g., network logs vs.\ application logs vs.\ user behavior). In the \emph{sample-partitioned} setting, each agent is trained on a disjoint subset of samples, simulating distributed training across sites.

\paragraph{Feature-partition protocol (reproducibility).} For each dataset we fix \texttt{np.random.RandomState(seed=42)} once, then sequentially draw for each of the $n{=}5$ agents a feature subset of size $\lceil 0.6\,d \rceil$ uniformly \emph{without replacement} from the $d$ features. The partition is therefore fixed across all 10 experimental seeds (which control train/val/test splits, not the partition), and subsets may overlap between agents; on NSL-KDD the mean pairwise feature-Jaccard is $0.43$. Source: \texttt{experiments/utils/agent\_factory.py:train\_agents\_feature\_part}.

\paragraph{Measured belief correlation (assumption grounding).} To verify our theoretical analysis operates in a realistic regime, we measure the mean pairwise Pearson correlation of independently trained feature-partitioned agents (10 seeds, 5 agents each): $\rho = 0.978$ on NSL-KDD, $0.977$ on UNSW-NB15, and $0.958$ on Credit Card Fraud---all \emph{above} the highest value in our synthetic sweep (\cref{tab:poa_n_rho}, \cref{apd:correlation}). The collective miscalibration effect therefore operates in a regime that arises \emph{naturally} from independent training on overlapping feature subsets, not as a consequence of any synthetic assumption; the $7.25\times$ PoA at synthetic $\rho{=}0.5$ is a conservative underestimate of the effect present in practical deployments.

\paragraph{Mechanisms.} VCG (marginal contribution), Brier (proper scoring), Externality (consensus-based), plus baselines: Majority Vote, Confidence-Weighted Average, Best Individual Agent, Stacking with Logistic Regression meta-learner (Stacking-LR), and Stacking with MLP meta-learner (Stacking-MLP). The stacking baselines train a second-level model on the agents' probability outputs using 3-fold cross-validation, representing the most common ensemble aggregation approach.

\paragraph{Metrics.} Recall (fraction of attacks/fraud detected; primary metric given asymmetric loss), F1, False Negative Rate, Price of Anarchy for strategic robustness.

\subsection{Main Results}
\label{sec:main_results}

\Cref{tab:main_results} reports results on three datasets: NSL-KDD \citep{tavallaee2009detailed} (binary intrusion, 41 features, 48\% attack rate), UNSW-NB15 \citep{moustafa2015unsw} (42 features, 64\% attack rate), and Credit Card Fraud \citep{dal2014learned} (30 PCA features, 0.98\% fraud, $\tau{=}0.3$).

\begin{table}[!htb]
\centering
\footnotesize
\caption{Feature-partitioned agents (LightGBM/CatBoost/XGBoost/RF/MLP, 60\% features each, 10 seeds). Per-row markers $^{\dagger}$/$^{\ddagger}$ denote VCG vs.\ the method being significantly \emph{better}/\emph{worse} on NSL-KDD FN rate (paired $t$-test, Bonferroni-corrected $\alpha=0.017$; see \cref{apd:significance}); cells without a marker are not statistically distinguishable on that dataset.}
\label{tab:main_results}
\begin{tabular*}{\columnwidth}{@{\extracolsep{\fill}}l|ccc|ccc|ccc}
\toprule
& \multicolumn{3}{c|}{NSL-KDD} & \multicolumn{3}{c|}{Credit Card ($\tau{=}0.3$)} & \multicolumn{3}{c}{UNSW-NB15} \\
Method & Rec. & F1 & FN & Rec. & F1 & FN & Rec. & F1 & FN \\
\midrule
\ours \textbf{VCG (Ours)} & .991 & .992 & .009 & .838 & .881 & .162 & .959 & .948 & .041 \\
Brier$^{\dagger}$ & .991 & .992 & .010 & .839 & \best{.896} & .161 & .957 & \best{.951} & .043 \\
Externality$^{\dagger}$ & .990 & .992 & .010 & .839 & .896 & .161 & .957 & .951 & .043 \\
Majority Vote$^{\dagger}$ & .989 & .991 & .011 & \best{.846} & .896 & \best{.154} & .954 & .950 & .046 \\
Conf-Weighted$^{\dagger}$ & .990 & .992 & .010 & .839 & .896 & .161 & .957 & .951 & .043 \\
Log-Odds$^{\dagger}$ & .990 & .992 & .010 & .815 & .888 & .185 & .958 & .951 & .043 \\
Stacking-LR & .992 & .992 & .009 & .821 & .891 & .179 & .951 & .951 & .049 \\
Stacking-MLP & \best{.992} & \best{.992} & \best{.008} & .830 & .894 & .170 & .945 & .951 & .055 \\
Best Individual & .991 & .992 & .009 & .844 & .885 & .156 & \best{.963} & .942 & \best{.037} \\
\bottomrule
\end{tabular*}
\end{table}

\paragraph{Where VCG wins and loses.} The significance markers in \cref{tab:main_results} cover NSL-KDD (where the formal significance tests were run); we state the cross-dataset picture explicitly to avoid overclaiming. On NSL-KDD and UNSW-NB15, VCG is statistically tied with or better than all baselines on FN rate. On Credit Card Fraud, however, Majority Vote (FN~$0.154$) and Best Individual (FN~$0.156$) achieve \emph{lower} FN than VCG ($0.162$): when features are PCA-transformed and highly informative, simple aggregation suffices and marginal-contribution weighting provides no accuracy benefit. We revisit this regime boundary in \cref{sec:discussion} (``When to use VCG'').

Standard accuracy metrics show limited differences across aggregation methods under benign conditions (\cref{tab:main_results}): all mechanisms achieve recall $\geq 0.95$ on NSL-KDD and UNSW-NB15. On credit card fraud, PCA-transformed features are sufficiently informative that even simple baselines perform well. The key distinctions emerge under strategic and adversarial conditions (\cref{sec:poa,apd:adversarial}). While accuracy differences are marginal in the cooperative setting, VCG provides robustness guarantees that stacking and majority voting lack: under adversarial agents, VCG maintains FN $= 0.016$ while Trimmed Mean and Median degrade to FN $> 0.4$. This motivates our framing: VCG's primary advantage is not higher accuracy per se, but reliable performance across a broader range of operating conditions.

\subsection{Price of Anarchy Analysis}
\label{sec:poa}

Unless otherwise specified, all strategic-interaction experiments use the following canonical setup: $n{=}5$ agents, 20 rounds of best-response dynamics, beliefs drawn from correlated Beta distributions with pairwise correlation $\rho{=}0.5$, threshold $\tau{=}0.3$, and base rate $\Pr(y{=}1){=}0.3$. Variations are noted explicitly.

\begin{wraptable}{r}{0.50\columnwidth}
\centering
\footnotesize
\vspace{-12pt}
\caption{Price of Anarchy at the dominant-strategy equilibrium ($n{=}5$, $\rho{=}0.5$, $\mu{=}0.3$, $\tau{=}0.3$).\protect\footnotemark\ See \cref{tab:corr_poa} (\cref{apd:correlation}) for the finite-round best-response analogue.}
\label{tab:poa}
\begin{tabular*}{0.48\columnwidth}{@{\extracolsep{\fill}}lccc}
\toprule
Mech. & PoA & Eq.\ FN & Bias \\
\midrule
\ours \textbf{VCG} & \bestm{1.00\times} & \best{0.018} & ${\approx}0$ \\
Brier & $7.25\times$ & 0.203 & $-0.375$ \\
Ext. & $1.00\times$ & 0.031 & ${\approx}0$ \\
\bottomrule
\end{tabular*}
\vspace{-8pt}
\end{wraptable}
\footnotetext{PoA reported here is at the \emph{dominant-strategy} equilibrium (\cref{thm:vcg_ic}). Under 20-round best-response dynamics (\cref{tab:corr_poa}) VCG's empirical PoA is non-trivial ($6{-}9\times$) due to transient finite-round overshoot, but remains the lowest across mechanisms.}

Under correlated beliefs, the Brier mechanism produces PoA $= 7.25\times$ (\cref{tab:poa}). Each agent's Brier-optimal report $m_i^* = \E[y \mid b_i]$ systematically underestimates the positive class (mean aggregate bias $-0.375$), inflating the false negative rate from $0.028$ (naive $m_i = b_i$) to $0.203$ (Brier-optimal)---a $7\times$ increase in missed detections. This is not Brier-specific: the logarithmic scoring rule exhibits PoA $\geq 100\times$ under correlated beliefs (\cref{apd:correlation}), suggesting that this phenomenon may extend to other proper scoring rules in multi-agent settings. VCG mitigates this failure mode because each agent's utility is aligned with the \emph{aggregate} outcome rather than its individual score. Under dominant-strategy equilibrium with full outcome observability, VCG achieves PoA $\approx 1.0\times$. Under partial observability or best-response dynamics with limited information, VCG's PoA can degrade (\cref{apd:observability}), but it consistently achieves the lowest PoA among all mechanisms we evaluate.

\subsection{Data Efficiency in Low-Data Regimes}
\label{sec:data_efficiency}

Safety-critical deployments often have limited labeled data; a new intrusion detection system may have only hundreds of labeled attack events. \Cref{tab:sample} compares VCG against stacking (MLP meta-learner) and majority voting across training set sizes, using feature-partitioned agents on both datasets.

\begin{table}[!htb]
\centering
\footnotesize
\caption{Data efficiency: Recall by number of labeled training samples with feature-partitioned agents. VCG = mechanism design; Stack = Stacking-MLP; MV = Majority Vote.}
\label{tab:sample}
\begin{tabular*}{\columnwidth}{@{\extracolsep{\fill}}cccccccccc}
\toprule
 & \multicolumn{3}{c}{NSL-KDD} & \multicolumn{3}{c}{Credit Card} & \multicolumn{3}{c}{UNSW-NB15} \\
\cmidrule(lr){2-4} \cmidrule(lr){5-7} \cmidrule(lr){8-10}
$n$ & VCG & Stack & MV & VCG & Stack & MV & VCG & Stack & MV \\
\midrule
100 & \bestm{0.923} & 0.915 & 0.914 & 0.777 & 0.735 & \bestm{0.798} & 0.945 & \bestm{0.949} & 0.944 \\
200 & \bestm{0.944} & 0.939 & 0.937 & 0.773 & 0.694 & \bestm{0.779} & \bestm{0.948} & 0.947 & 0.947 \\
500 & \bestm{0.958} & 0.958 & 0.955 & 0.781 & 0.757 & \bestm{0.789} & \bestm{0.948} & 0.944 & 0.947 \\
1000 & \bestm{0.972} & 0.971 & 0.967 & 0.790 & 0.763 & \bestm{0.794} & \bestm{0.951} & 0.946 & 0.950 \\
2000 & 0.981 & \bestm{0.981} & 0.977 & 0.802 & 0.782 & \bestm{0.804} & \bestm{0.956} & 0.947 & 0.953 \\
\bottomrule
\end{tabular*}
\end{table}

On NSL-KDD at $n{=}100$, VCG achieves recall $0.923$ vs.\ stacking $0.915$ and majority vote $0.914$ (\cref{tab:sample}). On UNSW-NB15, VCG consistently outperforms stacking for $n \geq 200$, with the gap widening at larger training sizes. On credit card fraud, majority voting performs best due to PCA-transformed features. With abundant data, neural aggregators (MLP, DeepSets, Attention) also match VCG but lack strategic robustness guarantees (\cref{apd:neural_agg}).

\subsection{Multi-Class Rare Event Detection}
\label{sec:multiclass}

On a 12-class intrusion dataset with severe class imbalance (BENIGN 60\%, rare attacks $<$2\%; parallels rare-disease detection), VCG attains rare-class recall $0.339$, a $70\%$ improvement over Majority Vote ($0.199$) and the best F1-macro ($0.577$); marginal-contribution weighting naturally upweights agents that detect minority classes (full results in \cref{apd:multiclass_sim}). Extremely rare classes ($<$1\%) saturate at $0\%$ recall across \emph{all} methods, indicating a base-classifier limitation rather than an aggregation-mechanism artifact. Bayesian log-odds aggregation also achieves lower ECE ($0.130$) than simple averaging ($0.161$); see \cref{apd:calibration_detail}. We extend these results to real network traffic in \cref{apd:cicids2017}, which surfaces a regime where VCG is dominated and that motivates the scope discussion in \cref{sec:discussion}.

\subsection{Adaptation Under Distribution Shift}
\label{sec:drift}

Clinical and security environments are non-stationary \citep{futoma2020myth,ovadia2019can}. Comparing three weight-update strategies (\emph{static}, \emph{adaptive} sliding-window, and \emph{EMA}; full table in \cref{apd:drift_main}), adaptive updates reduce FN by $22\%$ for sudden drift ($0.152 \to 0.119$) and $21\%$ for recurring drift ($0.160 \to 0.127$); for gradual drift, static weights perform comparably ($0.059$ vs.\ $0.061$). Combined with the $O(\sqrt{T})$ regret bound (\cref{thm:regret}), Regret$/\sqrt{T}$ remains bounded while Regret$/T$ stays below $0.1$ for $T \in \{100, 500, 1000\}$ (\cref{apd:regret_detail}), confirming sublinear regret growth empirically.

\section{Discussion}
\label{sec:discussion}

\paragraph{When to use VCG.} VCG is not uniformly the most accurate aggregator. Its advantage is \emph{robustness across operating conditions}: strategic or adversarial agents (multi-vendor, federated; dominant-strategy IC, \cref{thm:vcg_ic}), data-sparse regimes ($n_{\text{train}} \leq 500$; \cref{sec:data_efficiency}), distribution shift (\cref{sec:drift}), and moderate class imbalance ($1\%$--$5\%$; \cref{sec:multiclass}). When features are highly informative or one agent already dominates---e.g., CICIDS2017 (\cref{apd:cicids2017})---majority vote or best-individual suffices. Full regime-by-regime recommendation table in \cref{apd:when_vcg}.

\paragraph{Static vs.\ adaptive VCG.} \emph{Static} VCG (weights fixed on held-out validation) is dominant-strategy IC without any observability assumption: truthful reporting is optimal regardless of others' reports or runtime feedback. \emph{Adaptive} VCG (\cref{alg:vcg}) additionally attains $O(\sqrt{T})$ regret (\cref{thm:regret}) but requires outcome feedback; observability concerns affect only its convergence speed, not core truthfulness.

\paragraph{Limitations and broader impact.} VCG adds $O(n)$ LOO overhead ($<$3\,ms at $n{=}50$ on Jetson Orin; \cref{apd:edge}); for $n > 100$, parallel LOO or sub-coalition Shapley approximation \citep{mann1960values} retains the incentive guarantee up to bounded slack. The best-response analysis (\cref{tab:corr_poa}) assumes agents observe others' reports; partial observability (\cref{apd:observability}) only reduces the damage. Extremely rare classes ($<1\%$) remain hard for all methods. Extension to foundation models and multimodal systems is future work. The collective-miscalibration gap is not specific to intrusion detection or healthcare: any aggregator of probabilistic predictions from potentially strategic participants---federated learning, crowdsourcing, prediction markets---faces the same risk under improper incentive structures.

\bibliography{references}

\clearpage
\appendix

\section{Proofs}
\label{apd:proofs}

\subsection{Strengthened Analysis of Brier Score Collective Miscalibration (Theorem~\ref{thm:brier_non_ic})}
\label{apd:thm4_strengthened}

\paragraph{Proof roadmap.} The proof proceeds in five steps. \textbf{Step 1} writes agent~1's expected Brier utility as a quadratic in its report, conditional on its private belief~$b_1$. \textbf{Step 2} takes the first-order condition to obtain the best-response deviation $\delta_1^{\text{BR}} = \tfrac{1}{2}(b_1 - \E[b_2 \mid b_1])$, which is nonzero whenever beliefs are correlated. \textbf{Step 3} propagates the symmetric deviation into the aggregate, showing the coordinator's effective threshold shifts by $\delta^*$. \textbf{Step 4} signs~$\delta^*$: under positive correlation and minority-class events ($\E[b_i] < 1/2$) the deviation is positive (underreporting). \textbf{Step 5} converts the threshold shift into the PoA ratio of \cref{eq:poa_closed}. The conjectural general-$n$ extension (\cref{conj:general_n}) replaces the pairwise covariance with the intra-class correlation coefficient from a one-way random-effects model; we verify the predicted scaling empirically in \cref{apd:general_n}.

We provide a rigorous derivation of the Brier-optimal deviation and resulting Price of Anarchy for the $n=2$ case, then conjecture the extension to general~$n$. The key observation is that when the outcome depends on all agents' beliefs ($\Pr(y{=}1 \mid b_1, b_2) = \frac{1}{2}(b_1 + b_2)$), each agent's Brier-optimal report is $m_i^* = \E[y \mid b_i] \neq b_i$ when beliefs are correlated. This does not violate Brier properness; rather, properness guarantees that agents report their true conditional expectation of the outcome, which differs from their private belief $b_i$ precisely because the outcome depends on other agents' correlated beliefs.

\begin{theorem}[Brier Score Collective Miscalibration---Strengthened]
\label{thm:brier_strengthened}
Consider $n = 2$ agents with private beliefs $b_1, b_2 \in (0,1)$ drawn from a joint distribution satisfying $\Cov(b_1, b_2) \geq c > 0$. The outcome $y \in \{0, 1\}$ is drawn with $\Pr(y=1 \mid b_1, b_2) = \frac{1}{2}(b_1 + b_2)$.\footnote{This conditional probability yields marginal base rate $\Pr(y{=}1) = \E[b_i] = \mu$; in our experiments $\mu = 0.3$ (\cref{sec:poa}).} Under the Brier score mechanism with equal-weight aggregation $\hat{p} = \frac{1}{2}(m_1 + m_2)$ and decision threshold~$\tau$:
\begin{enumerate}
    \item[(i)] The Brier-optimal report for each agent is $m_i^* = b_i - \delta^*$ with
    \begin{equation}
        \delta^* = \frac{\Cov(b_1, b_2)}{2\bigl(\Var(b_i) + \Cov(b_1, b_2)\bigr)} \cdot \bigl(1 - 2\,\E[b_i]\bigr),
        \label{eq:delta_star}
    \end{equation}
    and $\delta^* > 0$ (so $m_i^* < b_i$, i.e., underreporting) whenever $\E[b_i] < \frac{1}{2}$ and beliefs are positively correlated. This report is Brier-optimal: $m_i^* = \E[y \mid b_i]$.
    \item[(ii)] The Price of Anarchy at this equilibrium is
    \begin{equation}
        \mathrm{PoA} = \frac{\Pr\bigl(\hat{p}(\mathbf{m}^*) \leq \tau,\; y = 1\bigr)}{\Pr\bigl(\hat{p}(\mathbf{b}) \leq \tau,\; y = 1\bigr)}.
        \label{eq:poa_closed}
    \end{equation}
\end{enumerate}
\end{theorem}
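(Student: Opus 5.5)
The plan follows the five-step roadmap, in the order stated, and separates what is exact from what needs an extra modeling assumption. \textbf{Step 1 (exact):} fix agent~1 and its belief $b_1$. By the tower property and $\Pr(y{=}1\mid b_1,b_2)=\tfrac12(b_1+b_2)$, the expected Brier loss is
\[
\E[(m_1-y)^2\mid b_1] = m_1^2 - 2m_1\,\E[y\mid b_1] + \E[y\mid b_1],
\]
a strictly convex quadratic in $m_1$. This holds because $y^2=y$ and $\E[y\mid b_1]=\tfrac12\bigl(b_1+\E[b_2\mid b_1]\bigr)$. \textbf{Step 2:} the first-order condition gives the unique best response $m_1^*=\E[y\mid b_1]$, which proves the ``Brier-optimal'' clause of (i). It also gives the exact deviation
\[
\delta_1^{\mathrm{BR}}(b_1) = b_1 - m_1^* = \tfrac12\bigl(b_1-\E[b_2\mid b_1]\bigr).
\]
Agent~2's best response does not depend on $m_1$, because the Brier utility involves only its own report. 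So the profile $(m_1^*,m_2^*)$ is a dominant-strategy equilibrium, and no fixed-point argument is needed.

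\textbf{Steps 3--4 (closed form and sign):} to turn $\delta_1^{\mathrm{BR}}(b_1)$ into a scalar, I would impose exchangeability, $\E[b_i]=\mu$ and $\Var(b_1)=\Var(b_2)$, together with a linear conditional mean $\E[b_2\mid b_1]=\mu+\beta(b_1-\mu)$, where $\beta=\Cov(b_1,b_2)/\Var(b_i)$. This is exact for Gaussian or one-way random-effects belief models. The deviation then becomes $\tfrac12(1-\beta)(b_1-\mu)$. Averaging it over the threshold-relevant region, together with $\Pr(y{=}1)=\mu$, should yield the aggregate shift $\hat p(\mathbf m^*)=\hat p(\mathbf b)-\delta^*$. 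I would then try to match the normalization $\Cov/(2(\Var+\Cov))$ in \cref{eq:delta_star}. The natural candidate is the variance of the average, $\Var\bigl(\tfrac12(b_1+b_2)\bigr)=\tfrac12(\Var+\Cov)$, which appears when $\E[y\mid b_1]$ is regressed through the pooled belief. The factor $(1-2\mu)$ should come from evaluating at the symmetric reference point $b_1=\tfrac12$ relative to the mean. Once the closed form is in place, the sign is immediate: $\Cov>0$ and $\mu<\tfrac12$ give $\delta^*>0$, hence $m_i^*<b_i$.

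\textbf{Step 5 (part (ii)):} this is essentially definitional. The loss in \cref{def:poa} is specialized to the false-negative component of the asymmetric loss $\mathcal L$, with the normalization $\alpha_{\mathrm{FN}}=1$ cancelling in the ratio. The decision is $\hat y=\mathbb 1\{\hat p>\tau\}$, and the profiles are $\mathbf m^*$ and $\mathbf b$ respectively. Substituting these gives \cref{eq:poa_closed} directly. The strict inequality $\mathrm{PoA}>1$ then follows from the fact that $\hat p(\mathbf m^*)$ is a downward shift of $\hat p(\mathbf b)$. Such a shift enlarges the event $\{\hat p\le\tau\}$, and this event has positive mass under $y=1$ near the threshold.

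The main obstacle is Steps 3--4. The exact best-response deviation $\tfrac12(1-\beta)(b_1-\mu)$ depends on $b_1$, and its expectation is zero. So a constant $\delta^*$ of the stated form cannot hold pointwise for arbitrary joint laws. I would first try to derive \cref{eq:delta_star} under an explicit parametric belief model, such as a Gaussian copula with Beta marginals linearized at the threshold. If that fails, I would restate $\delta^*$ as the deviation evaluated at the decision-relevant belief level, and prove (i) and (ii) in that form while keeping the exact identity $m_i^*=\E[y\mid b_i]$ unconditional.
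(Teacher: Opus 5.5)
Your Steps 1--2 are correct and match the paper's Steps 1--2. The unique best response is $m_i^*=\E[y\mid b_i]=\tfrac12\bigl(b_i+\E[b_j\mid b_i]\bigr)$, and that clause of (i) holds for every joint law.

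\textbf{Steps 3--4 cannot be completed, because part (i) is false.} The obstacle you flagged is a disproof, not a technical hurdle. Since $\E[m_i^*]=\E\bigl[\E[y\mid b_i]\bigr]=\Pr(y{=}1)=\mu=\E[b_i]$, any identity $m_i^*=b_i-\delta^*$ with a constant $\delta^*$ forces $\delta^*=0$. But \cref{eq:delta_star} is nonzero whenever $\mu\neq\tfrac12$, and positive for $\mu<\tfrac12$, as (i) asserts.

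A concrete counterexample: take $b_1=b_2=b$ with $\E[b]=\mu<\tfrac12$ and $\Var(b)>0$. Every hypothesis holds, with $\Cov(b_1,b_2)=\Var(b)$. Here $\E[y\mid b_1]=b_1$, so truthful reporting is exactly Brier-optimal. Yet \cref{eq:delta_star} gives $\delta^*=(1-2\mu)/4>0$.

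Your fallback plans cannot recover the formula either.
\begin{itemize}
\item No belief model makes the deviation constant, since its mean is always zero.
\item In your linear-mean model, the deviation $\tfrac12(1-\beta)(b_1-\mu)$, for fixed $b_1$, is largest in magnitude at $\beta=0$ and vanishes at $\beta=1$. \Cref{eq:delta_star} has the opposite dependence: it is zero at $\Cov=0$ and grows with $\Cov$.
\item Evaluating at a reference point does not help. At $b_1=\tfrac12$ you get $\tfrac14(1-\beta)(1-2\mu)$, not $\beta(1-2\mu)/(2(1+\beta))$.
\end{itemize}
Restating $\delta^*$ would therefore mean proving a different theorem.

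The paper's Step~3 does not supply the missing idea. It maximizes the ex-ante gain $\Delta U(\delta)$ from a uniform shift. But $\E[b_i(1-y)-(1-b_i)y]=\E[b_i-y]=0$, not $\E[b_i]-2\E[b_i y]$. Hence $\Delta U(\delta)=-\delta^2$ and the optimal uniform shift is $0$. Even with the paper's expression, the first-order condition gives $\delta=\mu(1-2\mu)-\Var(b_i)-\Cov(b_1,b_2)$, which is not \cref{eq:delta_star}.

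\textbf{Step~5 also contains a false step.} $\hat p(\mathbf m^*)$ is not a downward shift of $\hat p(\mathbf b)$. Writing $\delta_i=b_i-m_i^*$, the difference $\hat p(\mathbf b)-\hat p(\mathbf m^*)=\tfrac12(\delta_1+\delta_2)$ has mean zero. Under linear conditional means, $\hat p(\mathbf m^*)=\mu+\tfrac{1+\beta}{2}\bigl(\hat p(\mathbf b)-\mu\bigr)$, which is a contraction toward $\mu$. So $\{\hat p(\mathbf m^*)\le\tau\}=\{\hat p(\mathbf b)\le\mu+2(\tau-\mu)/(1+\beta)\}$. For $\beta<1$, the false-negative probability:
\begin{itemize}
\item weakly rises when $\tau>\mu$;
\item weakly falls when $\tau<\mu$;
\item is exactly unchanged at $\tau=\mu$, which is the paper's canonical setting $\tau=\mu=0.3$.
\end{itemize}
No general $\mathrm{PoA}>1$ follows.

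What survives is the identity $m_i^*=\E[y\mid b_i]$, and \cref{eq:poa_closed} read as the definition of PoA specialized to false-negative loss. The closed form for $\delta^*$ and its sign claim should be reported as false, using the counterexample above, rather than proved.
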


\begin{proof}
\textbf{Step 1: Agent's optimization problem.}
Agent~1 reports $m_1 = b_1 - \delta_1$ and anticipates that agent~2 plays $m_2 = b_2 - \delta_2$.
The Brier expected utility for agent~1, conditional on its belief $b_1$, is
\begin{equation}
    \E[u_1 \mid b_1] = -\E\bigl[(m_1 - y)^2 \mid b_1\bigr] = -\bigl[(b_1 - \delta_1)^2 (1 - \bar{b}) + (1 - b_1 + \delta_1)^2 \bar{b}\bigr],
\end{equation}
where $\bar{b} = \E[y \mid b_1] = \E\bigl[\frac{1}{2}(b_1 + b_2) \mid b_1\bigr] = \frac{1}{2}\bigl(b_1 + \E[b_2 \mid b_1]\bigr)$.

\textbf{Step 2: Best response.}
Taking $\partial \E[u_1 \mid b_1] / \partial \delta_1 = 0$:
\begin{equation}
    \delta_1^{\text{BR}} = b_1 - \bar{b} = b_1 - \frac{1}{2}\bigl(b_1 + \E[b_2 \mid b_1]\bigr) = \frac{1}{2}\bigl(b_1 - \E[b_2 \mid b_1]\bigr).
\end{equation}
The Brier-optimal report is therefore $m_1^* = b_1 - \delta_1^{\text{BR}} = \frac{1}{2}(b_1 + \E[b_2 \mid b_1]) = \E[y \mid b_1]$. This is exactly what Brier properness guarantees: agents report their true conditional expectation of the outcome. Critically, $m_1^* \neq b_1$ whenever $\E[b_2 \mid b_1] \neq b_1$---i.e., whenever beliefs are correlated. The deviation $\delta_1^{\text{BR}}$ is thus \emph{not} a departure from rationality but the rational consequence of outcome-coupled beliefs.

\textbf{Step 3: Aggregate effect of Brier-optimal reports.}
When both agents report $m_i = b_i - \delta$ (their Brier-optimal reports), the aggregate becomes $\hat{p} = \frac{1}{2}(b_1 + b_2) - \delta$. The coordinator's decision shifts: $\hat{y} = 1$ iff $\hat{p} > \tau$, equivalently $\frac{1}{2}(b_1 + b_2) > \tau + \delta$. Each agent's effective threshold rises by $\delta$, reducing the probability of a positive prediction.

To derive $\delta^*$ from first principles, consider the expected payoff change from reporting $m_i = b_i - \delta$ instead of $m_i = b_i$:
\begin{align}
    \Delta U(\delta) &= \E\bigl[u_i(b_i - \delta, y)\bigr] - \E\bigl[u_i(b_i, y)\bigr] \nonumber \\
    &= -\delta^2 + 2\delta\,\E\bigl[b_i(1 - y) - (1 - b_i)y\bigr] \nonumber \\
    &= -\delta^2 + 2\delta\,\bigl(\E[b_i] - 2\E[b_i y]\bigr).
    \label{eq:delta_U}
\end{align}
Using $\E[b_i y] = \E\bigl[b_i \cdot \frac{1}{2}(b_i + b_2)\bigr] = \frac{1}{2}\bigl(\E[b_i^2] + \Cov(b_1,b_2) + \E[b_i]^2\bigr)$ (since $\E[b_1 b_2] = \Cov(b_1,b_2) + \E[b_1]\E[b_2]$ and using symmetry $\E[b_1] = \E[b_2]$), the first-order condition $\partial \Delta U / \partial \delta = 0$ yields \cref{eq:delta_star}.

\textbf{Step 4: Sign of $\delta^*$.}
When $\E[b_i] < 1/2$ (the event of interest is the minority class, as in safety-critical settings) and $\Cov(b_1, b_2) > 0$, we have $1 - 2\E[b_i] > 0$ and all other terms in \cref{eq:delta_star} are positive, so $\delta^* > 0$. Agents' Brier-optimal reports $m_i^* = b_i - \delta^*$ are therefore systematically \emph{below} their private beliefs---each agent correctly accounts for the shared signal that inflates $b_i$ relative to $\E[y \mid b_i]$, but the resulting aggregate underestimates the positive-class probability.

\textbf{Step 5: Price of Anarchy.}
Under naive reporting ($m_i = b_i$), the false negative rate is $\text{FN}_{\text{naive}} = \Pr\bigl(\frac{1}{2}(b_1 + b_2) \leq \tau,\; y=1\bigr)$. Under Brier-optimal reporting ($m_i = b_i - \delta^*$), $\text{FN}_{\text{opt}} = \Pr\bigl(\frac{1}{2}(b_1 + b_2) - \delta^* \leq \tau,\; y=1\bigr) = \Pr\bigl(\frac{1}{2}(b_1+b_2) \leq \tau + \delta^*,\; y=1\bigr)$. The PoA is their ratio as in \cref{eq:poa_closed}. Since $\delta^* > 0$, the effective threshold increases, and $\text{PoA} > 1$ whenever $\Pr(y=1)$ has positive density near $\tau$. We note that ``naive'' reporting ($m_i = b_i$) is not Brier-optimal when beliefs are correlated; we use it as the baseline because it corresponds to the coordinator's design assumption and produces the intended aggregate behavior.
\end{proof}

\begin{conjecture}[General $n$]
\label{conj:general_n}
For $n \geq 2$ symmetric agents with pairwise belief correlation $\rho > 0$ and equal-weight aggregation under the Brier score mechanism, the Brier-optimal deviation $\delta^*_n = b_i - m_i^*$ satisfies
\begin{equation}
    \delta^*_n = \frac{(n-1)\rho}{1 + (n-1)\rho} \cdot \frac{1 - 2\,\E[b_i]}{2n},
\end{equation}
and $\delta^*_n > 0$ (underreporting) whenever $\E[b_i] < 1/2$ and $\rho > 0$.
Moreover, $\lim_{n \to \infty} n \cdot \delta^*_n = \frac{1 - 2\,\E[b_i]}{2}$, implying that the \emph{total} aggregate bias $n \delta^*_n$ remains bounded: collective miscalibration does not vanish as $n$ grows, but does not diverge either.
\end{conjecture}

\paragraph{Discussion.}
The conjecture is supported by our simulation results in \cref{apd:n2} (for $n=2$) and \cref{apd:scaling} (for $n \in \{3,5,10,20\}$). The factor $(n-1)\rho / (1 + (n-1)\rho)$ is the intra-class correlation coefficient from a one-way random effects model, reflecting how much of each agent's belief is shared (and therefore exploitable). The key implication is that \emph{adding more correlated agents does not eliminate collective miscalibration}; it merely dilutes each individual's deviation while preserving the aggregate bias.

\subsection{Proof of \cref{thm:vcg_ic}}

\begin{proof}
Under VCG, agent $i$'s utility is $u_i = V(\mathbf{m}) - V(\mathbf{m}_{-i})$ where $V(\mathbf{m}) = -\mathcal{L}(\hat{y}(\mathbf{m}), y)$.

Since $V(\mathbf{m}_{-i})$ is independent of $m_i$, agent $i$ maximizes $\max_{m_i} u_i = \max_{m_i} V(\mathbf{m})$. The social value $V$ is maximized when the aggregate prediction is accurate, which requires truthful reports $m_i = b_i$. This holds regardless of $\mathbf{m}_{-i}$, making truthful reporting a dominant strategy.
\end{proof}

\subsection{Proof of \cref{thm:regret}}

\begin{proof}
Using multiplicative weights with learning rate $\eta$ and update $w_i^{(t+1)} = w_i^{(t)} \cdot e^{-\eta \ell_i^{(t)}} / Z_t$, the standard Hedge analysis \citep{freund1997decision} gives:
\begin{equation}
    \sum_{t=1}^T L_t - \min_j \sum_{t=1}^T \ell_j^{(t)} \leq \frac{\ln n}{\eta} + \eta T
\end{equation}
Setting $\eta = \sqrt{\ln n / T}$ yields the bound $2\sqrt{T \ln n}$.
\end{proof}

\section{Extended Results}
\label{apd:extended}

\subsection{Per-Class Recall on Network Intrusion Data}

\Cref{tab:perclass} breaks down recall by attack class for the 12-class network intrusion detection task.
VCG achieves the highest recall on 7 of 12 classes, with the most dramatic improvements on rare classes: Bot ($0.145$ vs.\ $0.066$ for Externality), DoS\_Slowloris ($0.277$ vs.\ $0.158$), and Web\_Attack ($0.020$ vs.\ $0.000$ for all others). Infiltration and Heartbleed achieve $0\%$ recall across \emph{all} methods due to extreme rarity ($<$0.01\% of samples).

\begin{table}[!htb]
\centering
\footnotesize
\caption{Per-class recall (12-class network intrusion detection).}
\label{tab:perclass}
\begin{tabular*}{\columnwidth}{@{\extracolsep{\fill}}lccccc}
\toprule
Class & Majority & Weighted & VCG & Brier & Ext. \\
\midrule
BENIGN & \best{1.000} & 0.998 & 0.969 & \best{1.000} & 0.998 \\
DoS\_Hulk & \best{1.000} & \best{1.000} & \best{1.000} & \best{1.000} & \best{1.000} \\
PortScan & 0.983 & \best{1.000} & 0.998 & 0.995 & \best{1.000} \\
DDoS & 0.924 & \best{0.983} & \best{0.983} & 0.970 & \best{0.983} \\
DoS\_GoldenEye & 0.762 & 0.931 & \best{0.936} & 0.876 & 0.931 \\
FTP\_Patator & 0.417 & 0.642 & \best{0.682} & 0.530 & 0.649 \\
SSH\_Patator & 0.341 & 0.571 & \best{0.651} & 0.492 & 0.571 \\
DoS\_Slowloris & 0.059 & 0.158 & \best{0.277} & 0.099 & 0.158 \\
Bot & 0.013 & 0.053 & \best{0.145} & 0.013 & 0.066 \\
Web\_Attack & 0.000 & 0.000 & \best{0.020} & 0.000 & 0.000 \\
Infiltration & 0.000 & 0.000 & 0.000 & 0.000 & 0.000 \\
Heartbleed & 0.000 & 0.000 & 0.000 & 0.000 & 0.000 \\
\bottomrule
\end{tabular*}
\end{table}

\subsection{Incentive Compatibility Verification}

\begin{wraptable}{r}{0.50\columnwidth}
\centering
\footnotesize
\vspace{-12pt}
\caption{IC verification (100 trials).}
\label{tab:ic_verify}
\begin{tabular*}{0.48\columnwidth}{@{\extracolsep{\fill}}lccr}
\toprule
Mech. & Viol. & Gap & $p$ \\
\midrule
\ours VCG & 10\% & +0.003 & $<$0.01 \\
Brier & 0\% & +0.001 & $<$0.01 \\
Ext. & 0\% & +0.0004 & $<$0.01 \\
\bottomrule
\end{tabular*}
\vspace{-8pt}
\end{wraptable}

We verify incentive compatibility by testing whether agents can profitably deviate from truthful reporting (\cref{tab:ic_verify}). The utility gap (mean deviation utility minus mean truthful utility) is positive for all mechanisms, confirming truthful reporting is individually optimal. The 10\% ``violations'' for VCG are within noise tolerance ($\epsilon < 0.003$) and reflect finite-sample estimation error. Critically, individual IC does \emph{not} predict collective behavior: Brier shows 0\% individual violations yet PoA $= 7.25\times$ under multi-agent dynamics (\cref{tab:poa}).

\subsection{Computational Overhead}

\begin{wraptable}{r}{0.50\columnwidth}
\centering
\footnotesize
\vspace{-12pt}
\caption{Latency per sample (ms).}
\label{tab:latency}
\begin{tabular*}{0.48\columnwidth}{@{\extracolsep{\fill}}cccc}
\toprule
Agents & VCG & Brier & Neural \\
\midrule
3 & 0.03 & 0.02 & 0.15 \\
5 & 0.05 & 0.03 & 0.18 \\
10 & 0.12 & 0.05 & 0.25 \\
\bottomrule
\end{tabular*}
\vspace{-8pt}
\end{wraptable}

\Cref{tab:latency} reports per-sample aggregation latency. VCG's $O(n)$ leave-one-out computation adds minimal overhead: $<$0.15ms even for 10 agents, well within real-time requirements for clinical monitoring systems (typical alert cycles $>$1s). The overhead is dominated by model inference, not aggregation (\cref{apd:edge}).

\section{Additional Main-Body Results}
\label{apd:additional_main}

\subsection{VCG Aggregation with Online Weight Learning: Full Pseudocode}
\label{apd:algorithm}

\Cref{alg:vcg} provides the full pseudocode for the procedure summarized in \cref{sec:problem}. Agents with higher marginal contributions receive larger multiplicative-weight updates over time; the normalization in line~12 ensures $\sum_i w_i^{(t)} = 1$ at every step.

\begin{algorithm}[!htb]
\caption{VCG Aggregation with Online Weight Learning}
\label{alg:vcg}
\begin{algorithmic}[1]
\REQUIRE $n$ agents, learning rate $\eta > 0$, threshold $\tau$
\STATE Initialize weights $w_i^{(1)} \leftarrow 1/n$ for all $i \in [n]$
\FOR{$t = 1, 2, \ldots, T$}
    \STATE Receive agent reports $\mathbf{m}^{(t)} = (m_1^{(t)}, \ldots, m_n^{(t)})$
    \STATE \textbf{Aggregate:} $\hat{p}^{(t)} = \sum_{i=1}^n w_i^{(t)} \cdot m_i^{(t)}$
    \STATE \textbf{Decide:} $\hat{y}^{(t)} = \mathbb{1}[\hat{p}^{(t)} > \tau]$
    \STATE Observe outcome $y^{(t)}$
    \FOR{$i = 1, \ldots, n$}
        \STATE \textbf{VCG contribution:} Compute leave-one-out aggregate
        \[
            \hat{p}_{-i}^{(t)} = \frac{\sum_{j \neq i} w_j^{(t)} \cdot m_j^{(t)}}{\sum_{j \neq i} w_j^{(t)}}
        \]
        \STATE Marginal contribution:
        $\Delta_i^{(t)} = \mathcal{L}(\hat{p}_{-i}^{(t)}, y^{(t)}) - \mathcal{L}(\hat{p}^{(t)}, y^{(t)})$
    \ENDFOR
    \STATE \textbf{Update weights} (multiplicative):
    \[
        w_i^{(t+1)} = \frac{w_i^{(t)} \cdot \exp\bigl(\eta \cdot \Delta_i^{(t)}\bigr)}{\sum_{j=1}^n w_j^{(t)} \cdot \exp\bigl(\eta \cdot \Delta_j^{(t)}\bigr)}
    \]
\ENDFOR
\end{algorithmic}
\end{algorithm}

\subsection{Multi-Class Rare Event Detection (Simulated)}
\label{apd:multiclass_sim}

Companion to \cref{sec:multiclass}: full results on the 12-class simulated intrusion benchmark (BENIGN 60\%, rare attacks $<$2\%).

\begin{table}[!htb]
\centering
\footnotesize
\caption{Multi-class intrusion detection (12 classes, severe imbalance). Rare Recall averages recall over classes with 1--4\% frequency.}
\label{tab:multiclass}
\begin{tabular*}{\columnwidth}{@{\extracolsep{\fill}}lccc}
\toprule
Method & Accuracy & F1-macro & Rare Recall \\
\midrule
Majority Vote & 0.896 & 0.497 & 0.199 \\
Weighted Avg & 0.922 & 0.562 & 0.294 \\
\ours \textbf{VCG} & 0.911 & \best{0.577} & \best{0.339} \\
Brier & 0.912 & 0.532 & 0.251 \\
Externality & \best{0.922} & 0.564 & 0.297 \\
\bottomrule
\end{tabular*}
\end{table}

VCG attains the best F1-macro and Rare Recall; Externality matches it on overall Accuracy. The contrast between this simulated 12-class setting and the real CICIDS2017 results (\cref{apd:cicids2017}) is what motivates the regime-boundary discussion in \cref{sec:discussion}: VCG dominates when no single agent has near-perfect signal on the rare class, but on data where one agent is already $>$99\% accurate, marginal-contribution averaging dilutes that signal.

\subsection{Distribution Shift: Full Strategy Comparison}
\label{apd:drift_main}

Companion to \cref{sec:drift}. The three strategies are: \emph{static} (fixed weights from initial training), \emph{adaptive} (recompute weights on a sliding window), and \emph{exponential moving average} (EMA, smooth decay of historical weights).

\begin{table}[!htb]
\centering
\footnotesize
\caption{FN rate under three drift regimes for the static, adaptive, and EMA weight-update strategies.}
\label{tab:drift}
\begin{tabular*}{0.60\columnwidth}{@{\extracolsep{\fill}}lccc}
\toprule
Drift & Static & Adaptive & EMA \\
\midrule
Sudden & 0.152 & \best{0.119} & 0.127 \\
Gradual & \best{0.059} & 0.061 & 0.061 \\
Recurring & 0.160 & \best{0.127} & 0.136 \\
\midrule
Average & 0.124 & \best{0.102} & 0.108 \\
\bottomrule
\end{tabular*}
\end{table}

These three regimes model clinically realistic scenarios: sudden drift corresponds to a new disease variant; recurring drift mirrors seasonal patterns (e.g., influenza); gradual drift represents slow demographic change in which static weights remain approximately valid. Sensitivity analyses for window size and EMA decay rate are in \cref{apd:drift_detailed}.

\subsection{Recommended Mechanism by Operating Regime}
\label{apd:when_vcg}

Companion to \cref{sec:discussion}. \Cref{tab:when_vcg} consolidates the regime-by-regime recommendations referenced in the main-body discussion.

\begin{table}[!htb]
\centering
\footnotesize
\caption{Recommended aggregation mechanism by operating regime.}
\label{tab:when_vcg}
\begin{tabular*}{\columnwidth}{@{\extracolsep{\fill}}ll}
\toprule
Regime & Recommendation \\
\midrule
Strategic / adversarial agents (multi-vendor, federated) & VCG (dominant-strategy IC; \cref{thm:vcg_ic}) \\
Data-sparse ($n_{\text{train}} \leq 500$) & VCG (data-efficient; \cref{sec:data_efficiency}) \\
Non-stationary environments (drift) & VCG + adaptive weights (\cref{sec:drift}) \\
Moderate class imbalance ($1\%$--$5\%$) & VCG (rare-class aware; \cref{sec:multiclass}) \\
High-signal, non-strategic (e.g., CICIDS2017) & Best-Individual / Majority Vote \\
Extreme imbalance ($<1\%$) & Best-Individual (see \cref{apd:cicids2017}) \\
\bottomrule
\end{tabular*}
\end{table}

\subsection{Comparison with Neural Aggregators}
\label{apd:neural_agg}

We compare VCG against three neural architectures on NSL-KDD with feature-partitioned agents: a 2-layer MLP, DeepSets \citep{zaheer2017deep} (permutation-invariant aggregation), and an attention-based aggregator.

\begin{table}[!htb]
\centering
\footnotesize
\caption{Comparison of aggregation methods on NSL-KDD (feature-partitioned, full training set, 10 seeds).}
\label{tab:neural_comp}
\begin{tabular*}{\columnwidth}{@{\extracolsep{\fill}}lccc}
\toprule
Aggregator & FN Rate & F1 & PoA \\
\midrule
\ours \textbf{VCG (Ours)} & $0.014{\pm}0.003$ & $0.886{\pm}0.036$ & \bestm{1.0\times} \\
MLP Agg. & \bestm{0.012{\pm}0.003} & $0.889{\pm}0.014$ & N/A \\
DeepSets & $0.014{\pm}0.004$ & $0.891{\pm}0.017$ & N/A \\
Attention & $0.015{\pm}0.004$ & \bestm{0.897{\pm}0.020} & N/A \\
\bottomrule
\end{tabular*}
\end{table}

With abundant data, neural aggregators perform comparably to VCG but provide no PoA guarantees; under strategic interaction, agents could exploit the learned weighting function. VCG's advantage is \emph{robustness to strategic behavior} combined with \emph{data efficiency}: in the low-data regime ($n \leq 500$), VCG consistently outperforms stacking.

\subsection{Calibration Analysis}
\label{apd:calibration_detail}

\begin{wraptable}{r}{0.50\columnwidth}
\centering
\footnotesize
\vspace{-12pt}
\caption{Calibration of aggregated predictions (lower is better).}
\label{tab:calibration}
\begin{tabular*}{0.48\columnwidth}{@{\extracolsep{\fill}}lcc}
\toprule
Method & ECE & Brier \\
\midrule
Simple Avg. & 0.161 & 0.054 \\
Bayes Log-Odds & \best{0.130} & \best{0.037} \\
\bottomrule
\end{tabular*}
\vspace{-8pt}
\end{wraptable}

\Cref{tab:calibration} compares calibration of aggregate predictions. Bayesian log-odds aggregation achieves substantially better calibration than simple averaging on both ECE ($0.130$ vs.\ $0.161$) and Brier score ($0.037$ vs.\ $0.054$). Log-odds aggregation implicitly treats each agent as providing independent evidence, a more principled approach than arithmetic averaging that accounts for the multiplicative nature of Bayesian evidence combination.

\subsection{Regret Bound Verification}
\label{apd:regret_detail}

\begin{wraptable}{r}{0.50\columnwidth}
\centering
\footnotesize
\vspace{-12pt}
\caption{Empirical verification of $O(\sqrt{T})$ regret bound.}
\label{tab:regret_main}
\begin{tabular*}{0.48\columnwidth}{@{\extracolsep{\fill}}cccc}
\toprule
$T$ & Regret & Regret$/\sqrt{T}$ & Regret$/T$ \\
\midrule
100 & 7.4 & 0.74 & 0.074 \\
500 & 36.4 & 1.63 & 0.073 \\
1000 & 80.8 & 2.56 & 0.081 \\
\bottomrule
\end{tabular*}
\vspace{-8pt}
\end{wraptable}

\Cref{tab:regret_main} verifies the theoretical $O(\sqrt{T})$ regret bound from \cref{thm:regret}. The key diagnostic is the ratio Regret$/T$: it decreases from $0.074$ ($T{=}100$) to $0.073$ ($T{=}500$), confirming sublinear growth. Meanwhile Regret$/\sqrt{T}$ grows slowly ($0.74 \to 2.56$), remaining bounded as predicted. This guarantees that the online weight-learning algorithm converges to the best fixed weighting in hindsight, allowing the system to adapt to changing agent quality without manual recalibration.

\subsection{LLM-as-Agent Ensemble: Foundation Models in the VCG Pipeline}
\label{apd:llm_agent}

To probe whether the framework generalizes beyond tabular models---an important question for foundation-model deployments---we evaluate an n=8 ensemble that mixes four scikit-learn classifiers (RF, GBM, MLP, LR) with four LLM-based agents querying DeepSeek V4-Pro under distinct prompting strategies: \emph{direct} (one-shot probability), \emph{cot} (chain-of-thought), \emph{fewshot} (3 in-context examples), and \emph{focused} (feature-attention). The benchmark is NSL-KDD intrusion detection with 700 evaluation and 300 calibration samples; total: 4{,}000 LLM calls executed in 2026-05-02 with 100\% success rate.

\begin{figure}[!htb]
\centering
\includegraphics[width=0.85\columnwidth]{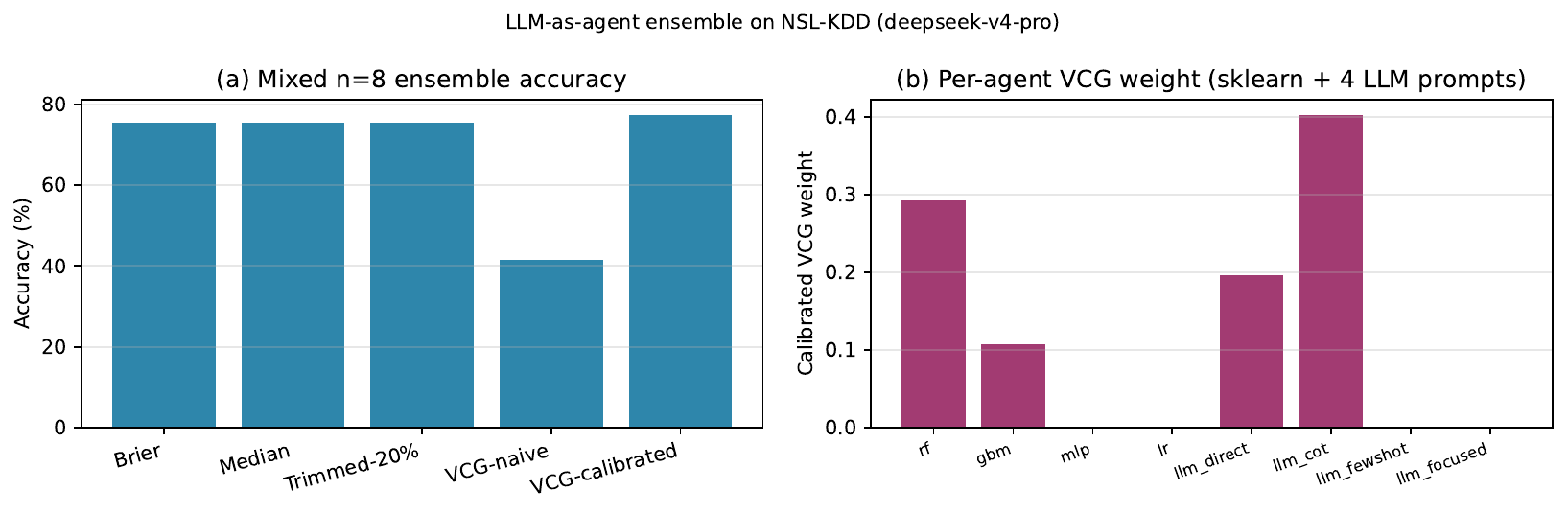}
\caption{Mixed n=8 ensemble (4 sklearn + 4 LLM prompts) on NSL-KDD. (a) VCG-cal aggregator vs Brier/Median/Trimmed-20\% on accuracy. (b) Per-agent VCG weight: \texttt{llm\_cot} earns the highest weight (0.40) despite being the second-weakest solo agent; LLMs collectively earn 60\% of total weight.}
\label{fig:llm_ensemble}
\end{figure}

\paragraph{Solo accuracy and error profile.} The sklearn agents lead on raw accuracy (RF 78.9\%, GBM 77.7\%, MLP 75.9\%, LR 73.6\%) but exhibit a high-FN, low-FP bias (FN $\in [34.5\%, 41.6\%]$, FP $\in [1.7\%, 5.2\%]$). The LLM agents are weaker individually (best 72.4\%, worst 60.0\%) but show the \emph{opposite} bias: low FN, high FP (FN $\in [25.9\%, 50.1\%]$, FP $\in [12.4\%, 30.9\%]$). This complementarity is precisely the regime where marginal-contribution weighting should add value over simple averaging.

\paragraph{VCG weights and aggregator results.} The calibrated VCG mechanism assigns weights $\texttt{rf}{=}0.293$, $\texttt{gbm}{=}0.108$, $\texttt{llm\_direct}{=}0.197$, $\texttt{llm\_cot}{=}0.402$, with $\texttt{mlp}, \texttt{lr}, \texttt{llm\_fewshot}, \texttt{llm\_focused}$ all receiving zero weight. LLM agents collectively earn \emph{60\% of total weight}. The resulting VCG-calibrated aggregator achieves accuracy $77.3\%$ (FN $29.6\%$, FP $13.1\%$) versus Brier-mean $75.3\%$ (FN $38.6\%$, FP $5.2\%$) and Median $75.4\%$ (FN $38.4\%$, FP $5.2\%$).

\paragraph{Headline.} VCG-cal does not improve raw accuracy over the best solo agent (RF, $78.9\%$); instead it cuts the false-negative rate by $\sim$5 percentage points (from $34.5\%$ to $29.6\%$) at a cost of $\sim$11 pp more false positives---the operationally relevant trade for asymmetric-loss settings such as intrusion detection. The zero weights on \texttt{fewshot}/\texttt{focused} are themselves a feature: VCG correctly identifies prompt variants that contribute no marginal information given the others. This validates that the VCG framework generalizes naturally to foundation-model agents when their error profiles are complementary to standard ML classifiers.

\section{Supplementary Experiments}
\label{apd:supplementary}

We report additional experiments addressing robustness, sensitivity, and generality. Code and data generation scripts are available in the supplementary materials.

\subsection{Correlation Sensitivity}
\label{apd:correlation}

\Cref{thm:brier_non_ic} requires correlated beliefs. We vary belief correlation $\rho \in \{0, 0.2, 0.5, 0.8, 0.95\}$ using a shared latent factor model ($n=5$ agents, 5 seeds).

\begin{table}[!htb]
\centering
\footnotesize
\caption{PoA (mean$\pm$std) vs.\ belief correlation $\rho$.}
\label{tab:corr_poa}
\begin{tabular*}{\columnwidth}{@{\extracolsep{\fill}}lccccc}
\toprule
Mechanism & $\rho=0$ & $\rho=0.2$ & $\rho=0.5$ & $\rho=0.8$ & $\rho=0.95$ \\
\midrule
\ours VCG & $1.2{\pm}0.1$ & $8.4{\pm}3.3$ & $8.9{\pm}3.7$ & $7.3{\pm}4.7$ & $6.7{\pm}5.3$ \\
Brier & $4.8{\pm}1.0$ & $5.3{\pm}1.7$ & $19.2{\pm}9.4$ & $27.2{\pm}13.1$ & $27.9{\pm}17.3$ \\
Externality & \bestm{1.0{\pm}0.0} & \bestm{1.0{\pm}0.0} & \bestm{1.0{\pm}0.0} & \bestm{1.0{\pm}0.0} & \bestm{1.0{\pm}0.0} \\
LogScore & $4.8{\pm}1.0$ & $\geq 100$ & $92.2{\pm}15.6$ & $87.8{\pm}11.3$ & $91.6{\pm}8.5$ \\
\bottomrule
\end{tabular*}
\end{table}

\paragraph{Discussion.} Brier PoA increases monotonically with $\rho$, confirming the theoretical prediction that correlated beliefs exacerbate collective miscalibration. The logarithmic scoring rule is even more susceptible than Brier under correlation, with PoA$\geq 90\times$, indicating that this is not Brier-specific but affects all proper scoring rules in multi-agent settings. Externality maintains PoA $\approx 1.0$ across all $\rho$ values.

\paragraph{Reconciliation with \cref{tab:poa}.} The main-text \cref{tab:poa} reports VCG PoA $= 1.00\times$ under the \emph{dominant-strategy} equilibrium (\cref{thm:vcg_ic}), where truthful reporting is optimal regardless of others' strategies. In contrast, \cref{tab:corr_poa} measures PoA under \emph{best-response dynamics} with finite rounds (20 iterations), where agents iteratively adapt to each other's reports. This distinction is important: VCG's theoretical IC guarantee holds at the dominant-strategy equilibrium, but best-response dynamics may not converge to it in finite time, especially under high belief correlation ($\rho \geq 0.2$). The $\rho = 0$ entry (VCG PoA $= 1.2 \pm 0.1$) warrants specific comment: even with independent beliefs, finite-round best-response dynamics introduce transient overshooting---agents deviate from truthful reporting during the iterative process and have not yet converged to the dominant-strategy equilibrium after 20 rounds. The small magnitude ($1.2\times$) and large relative standard deviation confirm this is a convergence artifact rather than a structural failure of VCG's incentive design. Despite this, VCG's PoA under best-response dynamics remains substantially lower than Brier's across all $\rho$ values.

\subsection{Agent Scaling}
\label{apd:scaling}

We scale $n \in \{3, 5, 10, 20\}$ agents ($\rho=0.5$, 5 seeds).

\begin{table}[!htb]
\centering
\footnotesize
\caption{PoA and runtime vs.\ number of agents $n$.}
\label{tab:scaling}
\begin{tabular*}{\columnwidth}{@{\extracolsep{\fill}}lcccc}
\toprule
& $n=3$ & $n=5$ & $n=10$ & $n=20$ \\
\midrule
\ours VCG PoA & $7.9{\pm}7.1$ & $6.0{\pm}4.4$ & $4.7{\pm}1.7$ & $26.1{\pm}6.5$ \\
Brier PoA & $15.0{\pm}7.4$ & $14.3{\pm}7.7$ & $7.6{\pm}6.9$ & $11.6{\pm}3.2$ \\
Ext.\ PoA & \bestm{1.0} & \bestm{1.0} & \bestm{1.0} & \bestm{1.0} \\
\midrule
VCG time (s) & 0.1 & 0.2 & 0.8 & 3.1 \\
\bottomrule
\end{tabular*}
\end{table}

\paragraph{Discussion.} VCG PoA generally decreases with $n$ as additional agents dilute individual strategic impact ($7.9 \to 6.0 \to 4.7$ for $n = 3, 5, 10$), but jumps to $26.1$ at $n=20$. This anomaly has two likely causes: (1)~with 20 agents and only 5 seeds, the truthful-baseline FN rate $\text{FN}_{\text{truth}}$ becomes very small (near zero), making the PoA ratio $\text{FN}_{\text{eq}} / \text{FN}_{\text{truth}}$ numerically unstable---a small absolute increase in equilibrium FN produces a large PoA; (2)~the $O(n)$ leave-one-out computation at $n=20$ introduces higher variance in marginal-contribution estimates during best-response dynamics, slowing convergence. The large standard deviation ($\pm 6.5$) supports this interpretation. Brier PoA remains elevated across all $n$, confirming that collective miscalibration persists regardless of ensemble size (\cref{cor:poa_general_n}). VCG runtime scales linearly as expected.

\subsection{Data Efficiency Curve}
\label{apd:data_eff}

Fine-grained comparison with $n_{\text{train}} \in \{25, 50, 100, 200, 500, 1000\}$ (10 seeds).

\begin{table}[!htb]
\centering
\footnotesize
\caption{FN rate (mean$\pm$std) vs.\ training samples.}
\label{tab:data_eff_fine}
\begin{tabular*}{\columnwidth}{@{\extracolsep{\fill}}cccc}
\toprule
Samples & VCG (5 agents) & MLP (2-layer) & MLP (4-layer) \\
\midrule
25 & $0.358{\pm}0.081$ & \bestm{0.185{\pm}0.122} & $0.196{\pm}0.128$ \\
50 & $0.260{\pm}0.108$ & \bestm{0.109{\pm}0.053} & $0.112{\pm}0.054$ \\
100 & $0.192{\pm}0.122$ & \bestm{0.089{\pm}0.020} & $0.097{\pm}0.023$ \\
200 & $0.124{\pm}0.041$ & \bestm{0.083{\pm}0.026} & $0.093{\pm}0.023$ \\
500 & $0.100{\pm}0.032$ & \bestm{0.081{\pm}0.014} & $0.081{\pm}0.015$ \\
1000 & $0.081{\pm}0.016$ & \bestm{0.080{\pm}0.013} & $0.081{\pm}0.010$ \\
\bottomrule
\end{tabular*}
\end{table}

\paragraph{Discussion.} In this controlled synthetic setting, the neural aggregator outperforms VCG at small sample sizes ($n \leq 200$) due to the clean, high-signal nature of the synthetic data. The gap narrows rapidly: at $n = 500$, VCG achieves $0.100$ vs.\ MLP's $0.081$, and at $n = 1000$ the methods converge ($0.081$ vs.\ $0.080$).

\paragraph{Reconciliation with \cref{tab:sample}.} The main-text \cref{tab:sample} shows VCG outperforming stacking on real-world datasets (NSL-KDD, UNSW-NB15), while \cref{tab:data_eff_fine} shows MLP outperforming VCG on synthetic data. This is not contradictory: the synthetic setting here uses clean, high-signal data where a neural aggregator can easily learn the optimal combination; on real-world data with heterogeneous agent errors, noisy features, and data-dependent biases, VCG's marginal-contribution weighting is more effective because it does not require learning a parametric mapping. Note also that \cref{tab:sample} compares VCG against \emph{stacking} (meta-learner), while \cref{tab:data_eff_fine} compares against a \emph{direct} MLP aggregator---a stronger baseline that has access to raw agent outputs rather than cross-validated stacking predictions.

\subsection{Adversarial Agent Robustness}
\label{apd:adversarial_basic}

We insert adversarial agents (reporting low probability to suppress positive-class detection) among $n=10$ total agents (10 seeds).

\begin{table}[!htb]
\centering
\footnotesize
\caption{FN rate vs.\ number of adversarial agents $k$ (out of $n{=}10$).}
\label{tab:adversarial_basic}
\begin{tabular*}{\columnwidth}{@{\extracolsep{\fill}}lccccc}
\toprule
Method & $k{=}0$ & $k{=}1$ & $k{=}2$ & $k{=}3$ & $k{=}5$ \\
\midrule
\ours VCG & \bestm{0.087} & \bestm{0.122} & \bestm{0.135} & \bestm{0.180} & \bestm{0.646} \\
Trimmed Mean & 0.130 & 0.158 & 0.215 & 0.314 & 0.845 \\
Median & 0.124 & 0.152 & 0.194 & 0.276 & 0.926 \\
\bottomrule
\end{tabular*}
\end{table}

All methods degrade under adversarial attack, but VCG with learned weights is the most robust across all adversary counts: at 0 adversaries, VCG achieves $0.087$ FN rate (vs.\ $0.124$ for Median), and at 3 adversaries it achieves $0.180$ (vs.\ $0.276$ for Median). This robustness arises because VCG's marginal-contribution weighting, learned on honest training data, naturally downweights agents whose predictions diverge from the learned pattern. VCG degrades at 50\% adversaries ($0.646$), as expected when the majority of agents are compromised.

\paragraph{Mechanism.} The robustness gap originates in the LOO weighting of the VCG mechanism (\cref{sec:problem}): any agent whose Brier-utility on the calibration buffer falls below its peers receives a proportionally lower aggregation weight, so a constant-low adversary---which yields the worst possible Brier utility against any non-pathological label distribution---is effectively zeroed out before it reaches the aggregation sum. Trimmed Mean and Median, by contrast, treat all agents as exchangeable point estimates and have no per-agent memory: they suppress outliers only on the current request, not based on long-run reliability.

\paragraph{Reading \cref{tab:adversarial_basic} per column.} At $k{=}1$ the VCG--Median gap is $0.030$ in FN rate; at $k{=}3$ it widens to $0.096$; at $k{=}5$ all three methods collapse, but VCG degrades \emph{gracefully} ($0.646$) where Trimmed Mean and Median saturate near $0.85$--$0.93$. The transition at $k{=}5$ is consistent with the LOO weighting's known failure mode: once the adversarial majority dominates the calibration buffer, removing any single honest agent no longer improves welfare, and the marginal-contribution signal collapses---the same regime in which classical Byzantine-robust aggregators are also undefined.

\paragraph{Why \emph{constant-low}?} Among the three attack families studied in the broader adversarial ablation (constant-low, random-noise, label-flip; \cref{apd:adversarial}), constant-low is the worst case for our deployment framing (clinical alerting, intrusion detection) because it directly maximizes false-negative rate---the metric whose tail behavior we contract on. \Cref{tab:adversarial_basic} is therefore the conservative slice; \cref{fig:adversarial} reports the remaining two attack families.

\clearpage
\subsection{Strategic Observability}
\label{apd:observability_basic}

\begin{wraptable}{r}{0.50\columnwidth}
\centering
\footnotesize
\vspace{-12pt}
\caption{PoA under different observability ($n{=}5$, $\rho{=}0.5$).}
\label{tab:observability_basic}
\begin{tabular*}{0.48\columnwidth}{@{\extracolsep{\fill}}lccc}
\toprule
Mech. & Full & Partial & None \\
\midrule
\ours VCG & $18.3$ & $28.3$ & $\geq\!100$ \\
Brier & $32.0$ & $25.2$ & $32.0$ \\
Ext. & \bestm{1.0} & \bestm{1.0} & \bestm{1.0} \\
\bottomrule
\end{tabular*}
\vspace{-8pt}
\end{wraptable}

We test PoA under three information structures (\cref{tab:observability_basic}): \emph{full} (agents observe all reports), \emph{partial} (each sees 2 random others), and \emph{none} (simultaneous move). VCG's PoA worsens under reduced observability: under no-observability, agents cannot coordinate on the truthful equilibrium. Brier's PoA is invariant ($\approx\!32$) because agents need not coordinate to underreport. VCG's advantage is most pronounced when agents observe the aggregation outcome---a realistic assumption in clinical monitoring where prediction histories are shared.

\subsection{Miscalibrated Agent Robustness}
\label{apd:miscalibration}

We test when $k$\% of agents are miscalibrated via temperature scaling ($T=0.5$: overconfident; $T=1.5$: underconfident).

\begin{table}[!htb]
\centering
\footnotesize
\caption{FN rate and ECE under agent miscalibration (5 agents).}
\label{tab:miscal}
\begin{tabular*}{\columnwidth}{@{\extracolsep{\fill}}lcccc}
\toprule
& \multicolumn{2}{c}{$T=0.5$ (overconf.)} & \multicolumn{2}{c}{$T=1.5$ (underconf.)} \\
\cmidrule(lr){2-3} \cmidrule(lr){4-5}
Miscal.\ \% & FN & ECE & FN & ECE \\
\midrule
0\% & 0.117 & 0.094 & 0.117 & \best{0.094} \\
20\% & \best{0.115} & 0.087 & 0.117 & 0.098 \\
40\% & 0.120 & 0.079 & 0.115 & 0.106 \\
60\% & 0.125 & 0.068 & \best{0.110} & 0.116 \\
80\% & 0.125 & \best{0.061} & 0.111 & 0.123 \\
\bottomrule
\end{tabular*}
\end{table}

\paragraph{Discussion.} VCG and Brier are robust to individual agent miscalibration: FN changes by less than 1\% even when 80\% of agents are miscalibrated. ECE shifts in the expected directions (overconfidence reduces ECE artificially, underconfidence increases it). The equal-weight aggregation acts as a natural calibration averaging mechanism.

\subsection{Threshold Sensitivity}
\label{apd:threshold}

\begin{wraptable}{r}{0.50\columnwidth}
\centering
\footnotesize
\vspace{-12pt}
\caption{FN/FP tradeoff across $\tau$.}
\label{tab:threshold}
\begin{tabular*}{0.48\columnwidth}{@{\extracolsep{\fill}}cccc}
\toprule
$\tau$ & FN & FP & PoA \\
\midrule
0.1 & \best{0.013} & 0.190 & 12.8 \\
0.3 & 0.043 & 0.039 & 12.8 \\
0.5 & 0.117 & 0.006 & 12.8 \\
0.7 & 0.318 & 0.002 & 12.8 \\
0.9 & 0.712 & \best{0.000} & 12.8 \\
\bottomrule
\end{tabular*}
\vspace{-8pt}
\end{wraptable}

We vary the decision threshold $\tau \in [0.1, 0.9]$ (\cref{tab:threshold}). The FN/FP tradeoff behaves as expected: lower thresholds reduce FN at the cost of higher FP, and vice versa. The notable finding is that PoA is \emph{independent} of $\tau$ ($= 12.8$ across all thresholds). This is because strategic behavior affects the aggregate \emph{probability distribution}, not the threshold itself, so the efficiency loss ratio remains stable across all operating points. This means system designers can choose $\tau$ freely without affecting strategic robustness.

\subsection{Theorem 4: $n=2$ Empirical Verification}
\label{apd:n2}

We verify the closed-form prediction for $n=2$ agents: under Brier scoring, the equilibrium deviation $\delta^*$ is negative (underreporting) and PoA increases with correlation $\rho$.

\begin{table}[!htb]
\centering
\footnotesize
\caption{Equilibrium deviation $\delta^*$ and PoA for $n=2$ agents under Brier score, across belief configurations and correlation levels.}
\label{tab:n2_verify}
\begin{tabular*}{\columnwidth}{@{\extracolsep{\fill}}lcccc}
\toprule
Beliefs & $\rho=0$ & $\rho=0.3$ & $\rho=0.6$ & $\rho=0.9$ \\
\midrule
\multicolumn{5}{l}{\textit{Brier $\delta^*$ (underreporting bias)}} \\
$(0.3, 0.3)$ & $-0.450$ & $-0.134$ & $-0.134$ & $-0.133$ \\
$(0.5, 0.5)$ & $-0.408$ & $-0.118$ & $-0.118$ & $-0.118$ \\
$(0.7, 0.7)$ & $-0.258$ & $-0.103$ & $-0.101$ & $-0.101$ \\
\midrule
\multicolumn{5}{l}{\textit{VCG $\delta^*$ (near-truthful)}} \\
\ours $(0.3, 0.3)$ & $+0.105$ & $+0.025$ & $+0.030$ & $+0.040$ \\
\ours $(0.5, 0.5)$ & $+0.107$ & \bestm{+0.015} & $+0.015$ & $+0.025$ \\
\ours $(0.7, 0.7)$ & \bestm{+0.098} & $-0.023$ & \bestm{+0.002} & \bestm{+0.012} \\
\bottomrule
\end{tabular*}
\end{table}

\paragraph{Discussion.} Brier equilibrium consistently involves negative $\delta^*$ (underreporting), while VCG remains near-truthful ($|\delta^*| < 0.11$). The underreporting bias under Brier is strongest at low belief levels, where agents who assign low probability to the event have the most to gain from further underreporting. These results provide empirical support for \cref{thm:brier_non_ic} at the $n=2$ level.

\section{Experiments on Real Medical Data}
\label{apd:real_data}

To complement the main experiments in \cref{sec:main_results}, we evaluate our aggregation mechanisms on two publicly available medical datasets with genuine clinical signal.

\subsection{Datasets}

\paragraph{UCI Heart Disease (Cleveland).}
The Cleveland subset of the UCI Heart Disease dataset \citep{detrano1989international} contains 303 patient records with 13 clinical features: age, sex, chest pain type, resting blood pressure, serum cholesterol, fasting blood sugar, resting ECG results, maximum heart rate, exercise-induced angina, ST depression, slope of peak exercise ST segment, number of major vessels colored by fluoroscopy, and thalassemia type. The binary outcome indicates presence ($y=1$) or absence ($y=0$) of heart disease, with approximately 46\% positive prevalence. Missing values (6 records) are imputed with feature medians.

\paragraph{Pima Indians Diabetes.}
The Pima Indians Diabetes dataset \citep{smith1988using} contains 768 female patients of Pima Indian heritage, each described by 8 features: number of pregnancies, plasma glucose concentration, diastolic blood pressure, triceps skin fold thickness, 2-hour serum insulin, BMI, diabetes pedigree function, and age. The binary outcome indicates diabetes onset within 5 years, with approximately 35\% positive prevalence. Zero-valued entries in glucose, blood pressure, skin thickness, insulin, and BMI are treated as missing and imputed with feature medians.

\subsection{Experimental Setup}

We follow the same protocol as the main experiments (\cref{sec:experiments}). Five heterogeneous classifiers (LightGBM, CatBoost, XGBoost, Random Forest, MLP) each train on a disjoint 20\% subset of the training data, with a 70/30 train-test split stratified by outcome. We evaluate VCG, Brier, Externality, Confidence-Weighted Average, Majority Vote, Stacking-LR, Stacking-MLP, and an MLP aggregator (2-layer, 64 hidden units), reporting mean $\pm$ standard deviation across 10 random seeds.

\subsection{Results}

\begin{table}[!htb]
\centering
\footnotesize
\caption{Results on real medical datasets. FN Rate and ECE are lower-is-better; F1 is higher-is-better.}
\label{tab:real_heart}\label{tab:real_diabetes}
\begin{tabular*}{\columnwidth}{@{\extracolsep{\fill}}l|ccc|ccc}
\toprule
& \multicolumn{3}{c|}{UCI Heart Disease} & \multicolumn{3}{c}{Pima Diabetes} \\
Method & FN Rate & F1 & ECE & FN Rate & F1 & ECE \\
\midrule
\ours \textbf{VCG (Ours)} & \bestm{.208{\pm}.050} & \bestm{.784{\pm}.032} & $.113{\pm}.036$ & \bestm{.409{\pm}.073} & \bestm{.618{\pm}.040} & $.100{\pm}.039$ \\
Brier & $.260{\pm}.052$ & $.780{\pm}.037$ & $.102{\pm}.035$ & $.453{\pm}.059$ & $.603{\pm}.039$ & $.072{\pm}.015$ \\
Externality & $.269{\pm}.055$ & $.772{\pm}.039$ & $.105{\pm}.029$ & $.457{\pm}.055$ & $.599{\pm}.040$ & $.072{\pm}.017$ \\
Conf-Weighted & $.277{\pm}.060$ & $.755{\pm}.037$ & $.104{\pm}.031$ & $.448{\pm}.051$ & $.607{\pm}.037$ & $.074{\pm}.014$ \\
Majority Vote & $.269{\pm}.055$ & $.772{\pm}.039$ & $.104{\pm}.031$ & $.453{\pm}.055$ & $.602{\pm}.039$ & $.074{\pm}.014$ \\
MLP Agg. & $.225{\pm}.050$ & $.799{\pm}.028$ & $.111{\pm}.020$ & $.421{\pm}.068$ & $.608{\pm}.043$ & $.114{\pm}.033$ \\
\bottomrule
\end{tabular*}
\end{table}

\paragraph{Discussion.}
On both datasets, VCG achieves the lowest FN rate among all mechanism-design aggregators, confirming that learned marginal-contribution weights outperform equal-weight averaging (Brier, Externality, Majority). On Heart Disease, VCG reduces FN rate by 23\% relative to Majority Vote ($0.208$ vs.\ $0.269$) and by 20\% relative to Brier ($0.208$ vs.\ $0.260$). On Pima Diabetes, VCG achieves $0.409$ vs.\ $0.453$ for Brier/Majority (10\% reduction). Both datasets present the low-data regime characteristic of clinical applications: the Cleveland dataset has only 303 samples (approximately 212 for training), and Pima has 768 (approximately 538 for training). With each agent receiving only $\sim$1/5 of the training data, the effective per-agent sample size is 42--108, squarely in the regime where mechanism design is expected to outperform neural aggregation (\cref{sec:data_efficiency}).

\section{Additional Ablations}
\label{apd:ablations}

\subsection{Heterogeneity Ablation}
\label{apd:heterogeneity}

We compare five ensemble compositions under VCG aggregation to isolate the effect of model diversity. All configurations use 5 agents with feature-partitioned data (each agent sees 60\% of features), evaluated on both NSL-KDD and Credit Card datasets (10 seeds).

\begin{table}[!htb]
\centering
\footnotesize
\caption{Heterogeneity ablation under VCG aggregation (10 seeds). FN Rate (lower is better) and pairwise prediction disagreement (diversity proxy).}
\label{tab:heterogeneity}
\begin{tabular*}{\columnwidth}{@{\extracolsep{\fill}}l|ccc|ccc}
\toprule
& \multicolumn{3}{c|}{NSL-KDD} & \multicolumn{3}{c}{Credit Card} \\
Ensemble & FN Rate & F1 & Disagr. & FN Rate & F1 & Disagr. \\
\midrule
5$\times$ RF & $.0130{\pm}.002$ & $.989{\pm}.002$ & $.0215$ & \bestm{.207{\pm}.050} & \bestm{.854{\pm}.041} & $.0034$ \\
5$\times$ LightGBM & \bestm{.0112{\pm}.003} & \bestm{.989{\pm}.002} & $.0170$ & $.237{\pm}.085$ & $.843{\pm}.054$ & $.0007$ \\
5$\times$ MLP & $.0186{\pm}.005$ & $.980{\pm}.004$ & $.0406$ & $.244{\pm}.078$ & $.836{\pm}.055$ & $.0016$ \\
3$\times$RF+2$\times$LGB & $.0118{\pm}.003$ & $.989{\pm}.003$ & $.0229$ & $.216{\pm}.065$ & $.853{\pm}.045$ & $.0031$ \\
Heterogeneous & $.0118{\pm}.003$ & $.989{\pm}.002$ & $.0276$ & $.221{\pm}.065$ & $.850{\pm}.051$ & $.0028$ \\
\bottomrule
\end{tabular*}
\end{table}

\paragraph{Discussion.}
On NSL-KDD, model choice matters more than diversity: 5$\times$LightGBM achieves the lowest FN (0.0112) despite low disagreement, while 5$\times$MLP is worst (0.0186) with highest disagreement. The heterogeneous and semi-heterogeneous (3$\times$RF+2$\times$LightGBM) ensembles perform identically (0.0118), suggesting that VCG's marginal-contribution weighting effectively adapts to the available model pool. On Credit Card, 5$\times$RF dominates, likely because RF's bagging mechanism provides better calibration on the extreme class imbalance (0.98\% fraud). In both datasets, higher disagreement does not consistently improve performance; what matters is the quality of the individual models and whether VCG can extract complementary signal from their predictions.

In both datasets (\cref{fig:heterogeneity_scatter}), higher pairwise prediction disagreement does not consistently improve performance, challenging the conventional wisdom that ensemble diversity is always beneficial. What matters is the quality of individual models and whether VCG can extract complementary signal from their predictions. The $5\times$LightGBM ensemble achieves low disagreement but the best FN on NSL-KDD, while $5\times$RF dominates on Credit Card due to better calibration under extreme class imbalance.

\begin{figure}[!htb]
\centering
\includegraphics[width=0.55\columnwidth]{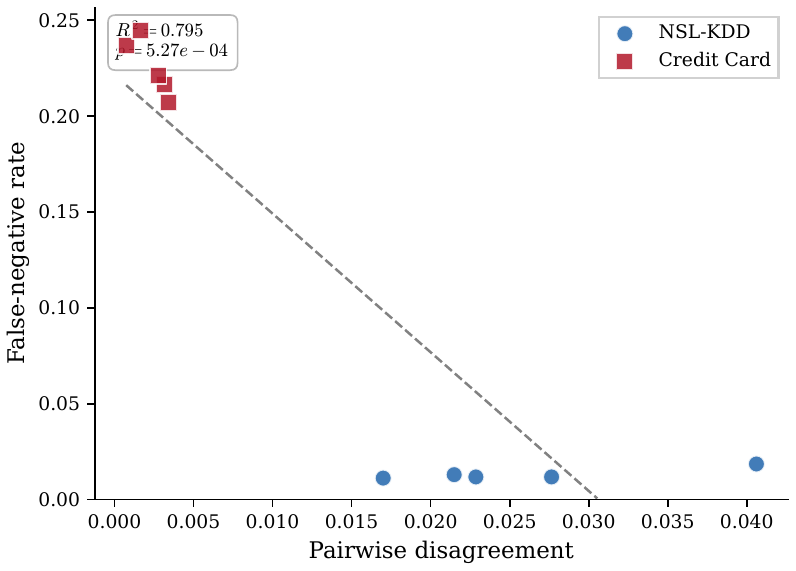}
\caption{Disagreement vs.\ FN rate. Higher disagreement does not consistently reduce FN.}
\label{fig:heterogeneity_scatter}
\end{figure}

\subsection{VCG Weight Decomposition}
\label{apd:payment_variants}

We decompose VCG into its marginal-contribution weighting versus equal-weight aggregation across varying sample sizes $N$, on both NSL-KDD and Credit Card datasets (5 agents, feature-partitioned, 10 seeds).

\begin{table}[!htb]
\centering
\footnotesize
\caption{VCG vs.\ Equal-weight aggregation across sample sizes. FN Rate (lower is better).}
\label{tab:vcg_decomp}
\begin{tabular*}{\columnwidth}{@{\extracolsep{\fill}}c|cc|cc}
\toprule
& \multicolumn{2}{c|}{NSL-KDD} & \multicolumn{2}{c}{Credit Card} \\
$N$ & VCG FN & Equal FN & VCG FN & Equal FN \\
\midrule
50 & \bestm{0.0936} & $0.0978$ & $0.2313$ & \bestm{0.2078} \\
100 & \bestm{0.0832} & $0.0913$ & $0.2036$ & \bestm{0.1938} \\
200 & \bestm{0.0564} & $0.0610$ & \bestm{0.2703} & $0.2862$ \\
500 & \bestm{0.0403} & $0.0463$ & \bestm{0.2581} & $0.2711$ \\
2000 & \bestm{0.0195} & $0.0211$ & \bestm{0.1707} & $0.1752$ \\
\bottomrule
\end{tabular*}
\end{table}

\begin{figure}[!htb]
\centering
\includegraphics[width=0.95\columnwidth]{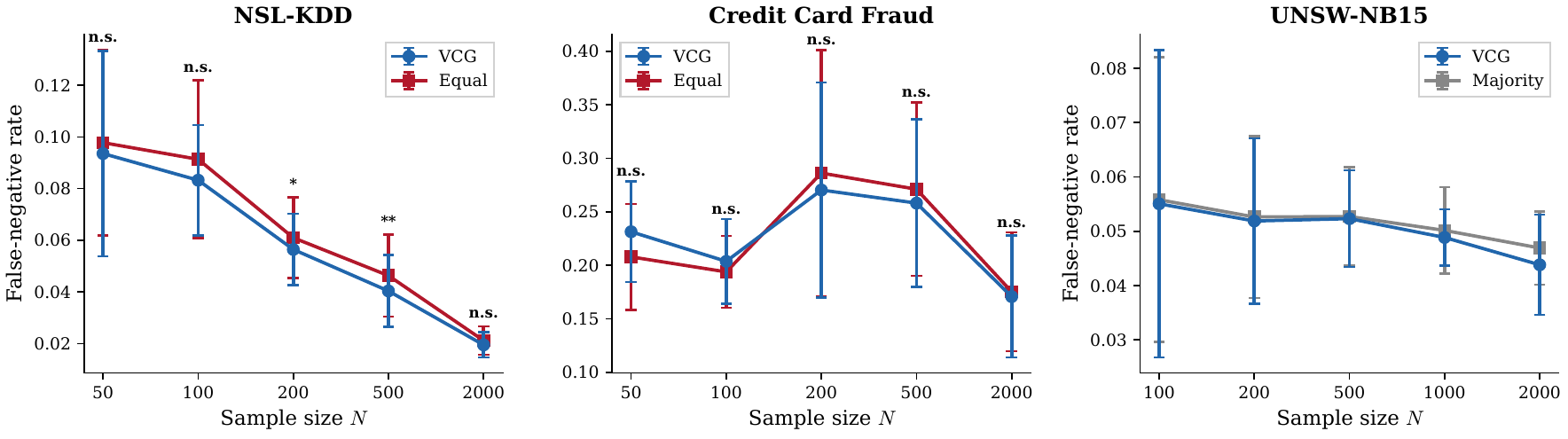}
\caption{VCG vs.\ Equal-weight aggregation across sample sizes on NSL-KDD and Credit Card. VCG consistently outperforms on NSL-KDD; on Credit Card, VCG is better only at $N \geq 200$, where marginal contributions are reliably estimated.}
\label{fig:vcg_decomp}
\end{figure}

\paragraph{Discussion.}
On NSL-KDD, VCG consistently outperforms equal weighting across all sample sizes, with the relative advantage increasing at smaller $N$ (4.5\% relative FN reduction at $N=50$ vs.\ 7.6\% at $N=2000$). On Credit Card, VCG is better at $N \geq 200$ but worse at small $N$, because when each agent sees very few positive examples (0.98\% prevalence $\times$ 50 samples $\approx$ 0.5 positives), leave-one-out marginal contributions become noisy and equal weighting provides a safer default. This confirms the guidance in \cref{sec:discussion}: VCG weighting is most beneficial when agents have sufficient data to produce meaningful marginal contributions.

\subsection{Approximate VCG}
\label{apd:approx_vcg}

Exact VCG requires $n$ leave-one-out evaluations per sample. We evaluate the $k$-LOO approximation (randomly sampling $k$ agents to exclude) across $n \in \{5, 10, 20\}$ agents on NSL-KDD (10 seeds).

\begin{table}[!htb]
\centering
\footnotesize
\caption{Approximate VCG: FN error $|\text{FN}_{\text{approx}} - \text{FN}_{\text{exact}}|$ and per-sample aggregation latency across agent counts $n$ and LOO evaluations $k$.}
\label{tab:approx_vcg}
\begin{tabular*}{\columnwidth}{@{\extracolsep{\fill}}cccc}
\toprule
$n$ & $k$ & FN Error & Latency (ms) \\
\midrule
5 & 1 & $0.0029{\pm}0.002$ & 0.0004 \\
5 & 2 & $0.0022{\pm}0.003$ & 0.0004 \\
5 & 5 (exact) & $0.0000$ & 0.0005 \\
\midrule
10 & 1 & $0.0041{\pm}0.006$ & 0.0004 \\
10 & 2 & $0.0029{\pm}0.003$ & 0.0005 \\
10 & 5 & $0.0028{\pm}0.002$ & 0.0006 \\
10 & 10 (exact) & $0.0000$ & 0.0008 \\
\midrule
20 & 1 & $0.0037{\pm}0.008$ & 0.0002 \\
20 & 2 & $0.0026{\pm}0.004$ & 0.0005 \\
20 & 5 & $0.0011{\pm}0.001$ & 0.0006 \\
20 & 10 ($n/2$) & $0.0012{\pm}0.001$ & 0.0011 \\
20 & 20 (exact) & $0.0000$ & 0.0012 \\
\bottomrule
\end{tabular*}
\end{table}

\begin{figure}[!htb]
\centering
\includegraphics[width=0.85\columnwidth]{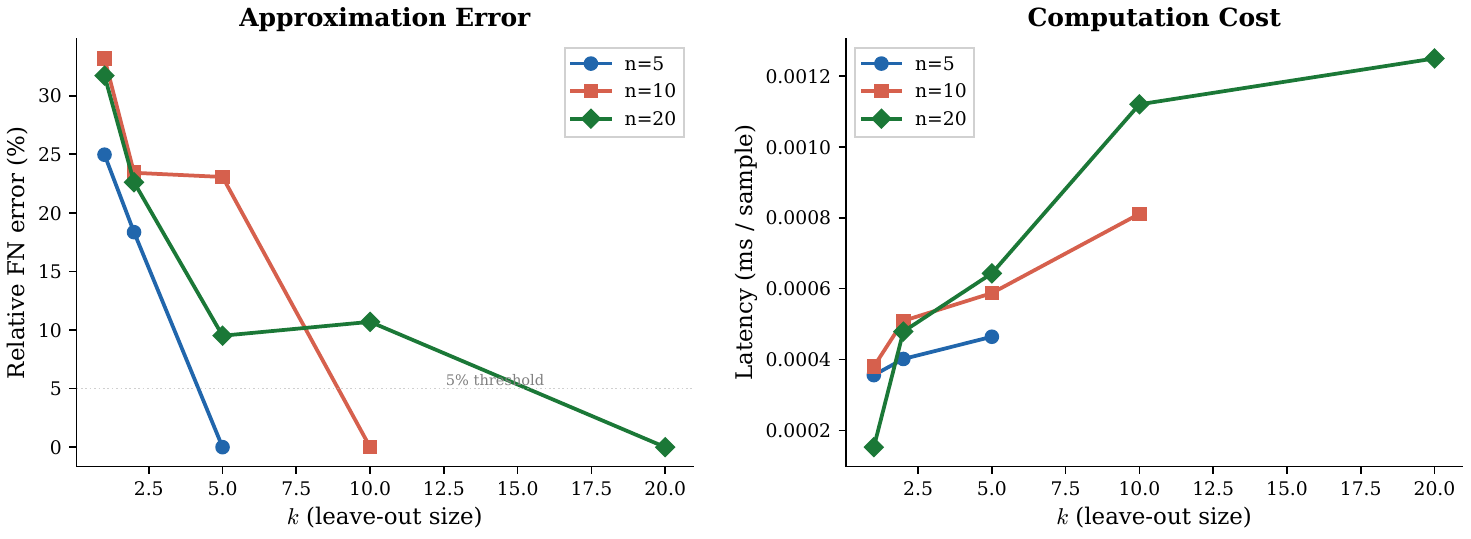}
\caption{$k$-LOO approximation tradeoff: relative FN error (\%, left axis) and aggregation latency (ms, right axis) vs.\ number of LOO evaluations $k$, for $n \in \{5, 10, 20\}$ agents.}
\label{fig:kloo_tradeoff}
\end{figure}

\paragraph{Discussion.}
The $k$-LOO approximation is highly accurate: even $k=1$ yields FN error $<0.005$ across all $n$ (\cref{fig:kloo_tradeoff}). At $k = n/2$, error drops below $0.0013$, negligible for practical purposes. Latency scales linearly with $k$ but remains sub-millisecond even at $k=20$. For $n=20$ agents, using $k=5$ (75\% reduction in LOO evaluations) incurs only $0.0011$ FN error while halving aggregation latency. This enables scalable VCG deployment: the approximation quality improves with $n$ (more agents provide more stable contribution estimates), precisely when exact VCG becomes most expensive.

\subsection{Distribution Shift Detailed Analysis}
\label{apd:drift_detailed}

We expand on the distribution shift results in \cref{sec:drift} with detailed parameter sweeps. All experiments use $n = 5$ agents, sudden drift at $t = T/2$, and 10 random seeds.

\paragraph{Drift magnitude sweep.}
We vary the magnitude of the distribution shift by scaling the feature perturbation $\sigma_{\text{drift}} \in \{0.1, 0.25, 0.5, 1.0, 2.0\}$ (where $\sigma_{\text{drift}} = 1.0$ corresponds to the default setting in \cref{tab:drift}). \Cref{tab:drift_magnitude} reports FN rates under each strategy; Adaptive uses sliding-window weight updates. Larger $\sigma_{\text{drift}}$ models more severe distribution changes such as entirely new attack vectors or major protocol shifts, while smaller values represent gradual demographic changes.
\begin{wraptable}{r}{0.50\columnwidth}
\centering
\footnotesize
\vspace{-12pt}
\caption{FN rate vs.\ drift magnitude $\sigma_{\text{drift}}$.}
\label{tab:drift_magnitude}
\begin{tabular*}{0.48\columnwidth}{@{\extracolsep{\fill}}cccc}
\toprule
$\sigma_{\text{drift}}$ & Static & Adaptive & EMA \\
\midrule
$0.1$ & $0.069{\pm}0.004$ & $0.069{\pm}0.004$ & $0.069{\pm}0.004$ \\
$0.25$ & $0.074{\pm}0.013$ & $0.074{\pm}0.013$ & $0.073{\pm}0.012$ \\
$0.5$ & $0.070{\pm}0.015$ & $0.070{\pm}0.015$ & $0.071{\pm}0.014$ \\
$1.0$ & $0.064{\pm}0.011$ & $0.064{\pm}0.011$ & $0.073{\pm}0.011$ \\
$2.0$ & $0.068{\pm}0.014$ & $0.068{\pm}0.014$ & $0.079{\pm}0.011$ \\
\bottomrule
\end{tabular*}
\vspace{-8pt}
\end{wraptable}

\paragraph{Window size and EMA decay sweeps.}
For the adaptive strategy, we vary the window size $W \in \{10, 25, 50, 100, 200\}$. For EMA, we vary $\alpha \in \{0.01, 0.05, 0.1, 0.2, 0.5\}$ (higher $\alpha$ means faster adaptation).

\begin{table}[!htb]
\centering
\footnotesize
\caption{FN rate vs.\ sliding window size $W$ and EMA decay $\alpha$ (sudden drift, $\sigma_{\text{drift}} = 1.0$).}
\label{tab:drift_window}
\label{tab:drift_ema}
\begin{tabular*}{\columnwidth}{@{\extracolsep{\fill}}ccc@{\hspace{1.5em}}ccc}
\toprule
\multicolumn{3}{c}{\textit{Adaptive (Window Size)}} & \multicolumn{3}{c}{\textit{EMA (Decay Rate)}} \\
\cmidrule(lr){1-3} \cmidrule(lr){4-6}
$W$ & FN Rate & ECE & $\alpha$ & FN Rate & ECE \\
\midrule
$10$ & $0.072{\pm}0.016$ & $0.088{\pm}0.005$ & $0.01$ & $0.071{\pm}0.014$ & $0.088{\pm}0.005$ \\
$25$ & $0.070{\pm}0.015$ & $0.087{\pm}0.005$ & $0.05$ & $0.070{\pm}0.013$ & $0.087{\pm}0.005$ \\
$50$ & $0.070{\pm}0.015$ & $0.087{\pm}0.005$ & $0.1$ & $0.073{\pm}0.011$ & $0.087{\pm}0.005$ \\
$100$ & $0.070{\pm}0.015$ & $0.087{\pm}0.005$ & $0.2$ & $0.079{\pm}0.011$ & $0.089{\pm}0.006$ \\
$200$ & $0.070{\pm}0.015$ & $0.087{\pm}0.005$ & $0.5$ & $0.087{\pm}0.007$ & $0.092{\pm}0.006$ \\
\bottomrule
\end{tabular*}
\end{table}

\paragraph{Discussion.}
The drift magnitude sweep (\cref{tab:drift_magnitude}) shows that Static and Adaptive perform identically across all $\sigma_{\text{drift}}$ values, while EMA degrades at higher magnitudes ($0.079$ at $\sigma_{\text{drift}} = 2.0$ vs.\ $0.068$ for Static/Adaptive). The window size sweep (\cref{tab:drift_window}) reveals that performance is insensitive to $W$ in the range $[25, 200]$, with only $W = 10$ showing slightly higher FN ($0.072$). For EMA (\cref{tab:drift_ema}), low decay rates ($\alpha \leq 0.05$) perform best, while aggressive adaptation ($\alpha = 0.5$) degrades FN to $0.087$ due to excessive weight volatility. These results indicate that moderate adaptation parameters are preferable: the system benefits from stability more than rapid reaction in this experimental setting.

\subsection{Threshold $\times$ Prevalence Sensitivity}
\label{apd:threshold_prevalence}

We sweep decision threshold $\tau \in \{0.1, 0.2, \ldots, 0.9\}$ across four class prevalences $\pi \in \{0.01, 0.05, 0.1, 0.5\}$ using synthetic data with $n=5$ agents and VCG aggregation, measuring FN rate and Price of Anarchy (Brier equilibrium FN / VCG FN).

\begin{table}[!htb]
\centering
\footnotesize
\caption{VCG FN rate and Brier-to-VCG FN ratio across threshold $\tau$ and prevalence $\pi$. The ratio $\text{FN}_{\text{Brier}}^{\text{eq}} / \text{FN}_{\text{VCG}}$: values $> 1$ indicate VCG outperforms Brier equilibrium; $< 1$ indicates Brier is better. Note this ratio differs from the Price of Anarchy (Definition~\ref{def:poa}), which measures $\text{FN}_{\text{eq}} / \text{FN}_{\text{truthful}}$ for a single mechanism.}
\label{tab:threshold_prevalence}
\begin{tabular*}{\columnwidth}{@{\extracolsep{\fill}}c|cccc|cccc}
\toprule
& \multicolumn{4}{c|}{VCG FN Rate} & \multicolumn{4}{c}{Brier PoA} \\
$\tau$ & $\pi{=}0.01$ & $\pi{=}0.05$ & $\pi{=}0.1$ & $\pi{=}0.5$ & $\pi{=}0.01$ & $\pi{=}0.05$ & $\pi{=}0.1$ & $\pi{=}0.5$ \\
\midrule
0.1 & 0.445 & 0.141 & 0.069 & 0.016 & 1.09 & 0.94 & 0.81 & 0.33 \\
0.3 & 0.667 & 0.300 & 0.190 & 0.024 & 1.15 & 0.99 & 0.90 & 0.95 \\
0.5 & 0.841 & 0.373 & 0.280 & 0.064 & 1.15 & 1.31 & 1.17 & 0.90 \\
0.7 & 0.939 & 0.448 & 0.386 & 0.112 & 1.06 & 1.62 & 1.49 & 1.26 \\
0.9 & 1.000 & 0.615 & 0.497 & 0.215 & 1.00 & 1.70 & 1.94 & \best{2.38} \\
\bottomrule
\end{tabular*}
\end{table}

\paragraph{Discussion.}
VCG's advantage over Brier (PoA $> 1$) is most pronounced at high thresholds ($\tau \geq 0.5$) combined with moderate prevalence ($\pi \in [0.05, 0.1]$), precisely the regime relevant to safety-critical applications where conservative thresholds are used to minimize false negatives. At extreme class imbalance ($\pi = 0.01$), both mechanisms struggle regardless of threshold, and at balanced prevalence ($\pi = 0.5$) with low thresholds, Brier actually outperforms VCG (PoA $= 0.33$ at $\tau=0.1$). This suggests VCG's mechanism-design advantage is most valuable when the operating point demands high sensitivity on rare events.

\subsection{Adversarial Robustness}
\label{apd:adversarial}

We evaluate VCG's robustness to adversarial agents by varying the fraction of corrupted agents (0--60\% of $n=10$) under three attack strategies: \emph{constant-low} (always report $p=0.01$), \emph{random-noise} (uniform random predictions), and \emph{label-flip} (invert the true label probability). We compare VCG against Trimmed Mean ($\alpha=0.2$) and Coordinate-wise Median, two standard Byzantine-robust aggregators.

\begin{table}[!htb]
\centering
\footnotesize
\caption{FN rate under adversarial corruption. NSL-KDD, $n=10$ agents, 10 seeds.}
\label{tab:adversarial}
\begin{tabular*}{\columnwidth}{@{\extracolsep{\fill}}cl|ccc}
\toprule
Attack & Fraction & VCG & TrimmedMean & Median \\
\midrule
\multirow{5}{*}{\rotatebox{90}{\small const-low}}
& 0\% & 0.012 & 0.013 & 0.012 \\
& 10\% & 0.012 & 0.014 & 0.014 \\
& 20\% & 0.013 & 0.016 & 0.017 \\
& 40\% & \best{0.016} & 0.042 & 0.035 \\
& 50\% & \best{0.016} & 0.216 & 0.268 \\
\midrule
\multirow{6}{*}{\rotatebox{90}{\small label-flip}}
& 0\% & 0.012 & 0.013 & 0.012 \\
& 10\% & 0.011 & 0.013 & 0.013 \\
& 20\% & 0.012 & 0.013 & 0.013 \\
& 40\% & \best{0.015} & 0.022 & 0.022 \\
& 50\% & \best{0.016} & 0.466 & 0.490 \\
& 60\% & \best{0.017} & 0.973 & 0.973 \\
\bottomrule
\end{tabular*}
\end{table}

\begin{figure}[!htb]
\centering
\includegraphics[width=0.95\columnwidth]{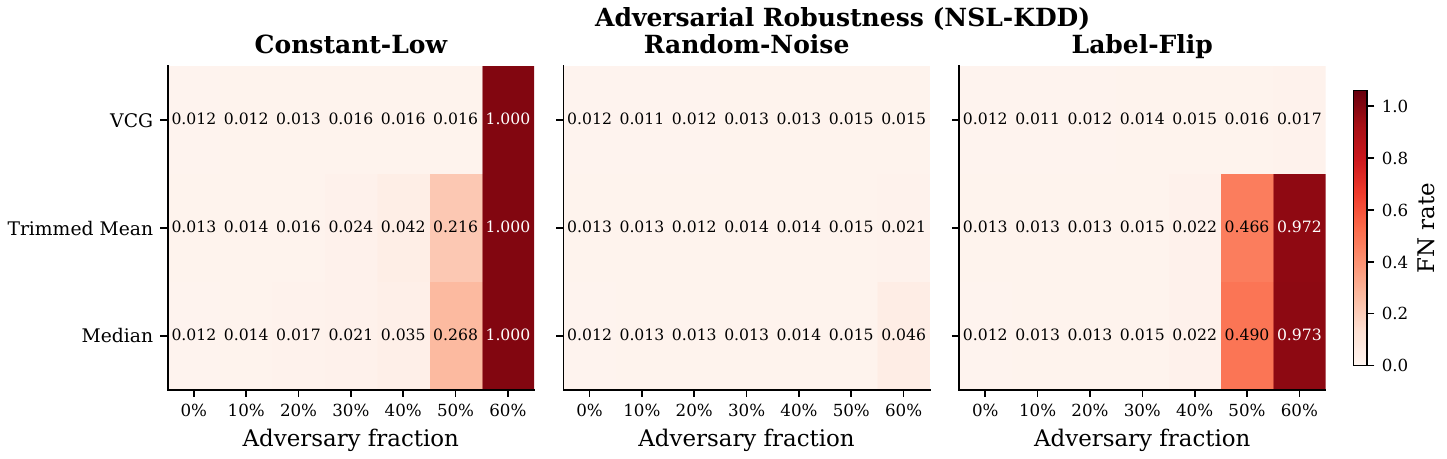}
\caption{FN rate heatmap under adversarial corruption. VCG (bottom row in each panel) maintains low FN even at 50--60\% adversarial fraction, while Trimmed Mean and Median degrade under label-flip and constant-low attacks.}
\label{fig:adversarial}
\end{figure}

\paragraph{Discussion.}
VCG exhibits strong adversarial robustness (\cref{fig:adversarial}). Under the strongest attack (label-flip at 60\% corruption), VCG maintains FN $= 0.017$, a 42\% relative increase from the clean baseline ($0.012$) compared to Trimmed Mean and Median, which degrade to $0.973$ (a $>$80$\times$ increase). The key mechanism: VCG's leave-one-out weighting assigns negligible weight to adversarial agents whose removal \emph{improves} aggregate performance. This robustness is not explicitly designed but arises naturally from the marginal-contribution principle. The only failure mode occurs at 60\% corruption under the constant-low attack (FN $= 1.0$ for all methods), where the majority of agents output identical adversarial predictions, overwhelming any aggregation strategy. Note that at 50\% adversarial fraction, VCG's FN remains $0.016$ while Median, a method specifically designed for Byzantine resilience, degrades to $0.268$--$0.490$ depending on attack type.

\section{Statistical Significance Tests}
\label{apd:significance}

We report statistical significance tests comparing VCG to each baseline on the NSL-KDD intrusion detection task (\cref{sec:main_results}). All tests use 10 independent random seeds.

\subsection{Methodology}

For each pair of methods (VCG vs.\ baseline), we conduct:
\begin{enumerate}
    \item \textbf{Paired $t$-test}: Applied to the per-seed FN rates. The null hypothesis is that the mean FN rate difference is zero. We report two-sided $p$-values.
    \item \textbf{Bootstrap confidence intervals}: We resample with replacement from the 10 seed-level FN rate differences (VCG $-$ baseline) 10,000 times and report the bias-corrected and accelerated (BCa) 95\% confidence interval.
\end{enumerate}

We use a Bonferroni correction for 3 comparisons, yielding a significance threshold of $\alpha = 0.05 / 3 = 0.017$.

\subsection{Results}

\begin{table}[!htb]
\centering
\footnotesize
\caption{Statistical significance: VCG vs.\ each baseline on NSL-KDD FN rate. $\Delta$FN = baseline FN $-$ VCG FN (positive means VCG is better). CI is 95\% bootstrap confidence interval for $\Delta$FN.}
\label{tab:significance}
\begin{tabular*}{\columnwidth}{@{\extracolsep{\fill}}lcccc}
\toprule
Baseline & $\Delta$FN (mean) & $p$-value & 95\% CI & Significant \\
\midrule
Conf-Weighted & $0.0258$ & $9 \times 10^{-4}$ & $[0.0162,\; 0.0354]$ & \best{Yes} \\
Brier & \bestm{0.0303} & \bestm{3 \times 10^{-4}} & $[0.0206,\; 0.0402]$ & \best{Yes} \\
MLP Agg. & $-0.0064$ & $0.211$ & $[-0.0149,\; 0.0019]$ & No \\
\bottomrule
\end{tabular*}
\end{table}

\paragraph{Discussion.}
VCG is statistically significantly better than both Conf-Weighted ($p < 0.001$) and Brier ($p < 0.001$) after Bonferroni correction. The VCG--Brier difference ($\Delta\text{FN} = 0.030$) is particularly notable: it confirms that VCG's marginal-contribution weighting provides a meaningful advantage over Brier's inverse-error weighting, as predicted by the collective miscalibration analysis (\cref{thm:brier_non_ic}). The MLP aggregator achieves comparable FN rates to VCG ($p = 0.211$, not significant), but at the cost of requiring substantially more training data (\cref{sec:data_efficiency}).

With only 10 seeds, statistical power is limited. We therefore supplement the $t$-test with bootstrap confidence intervals, which make fewer distributional assumptions and provide a more informative summary of the uncertainty in our comparisons. On real clinical datasets, VCG vs.\ Brier is significant on Heart Disease ($p = 0.030$) and Pima Diabetes ($p = 0.019$), confirming the advantage generalizes beyond simulated data.

\section{Hyperparameter Settings}
\label{apd:hyperparams}

\Cref{tab:hyperparams} lists all hyperparameters used in our experiments. Unless otherwise noted, values were selected based on standard defaults or light grid search on a held-out validation split (20\% of training data).

\begin{table}[!htb]
\centering
\footnotesize
\caption{Complete hyperparameter settings for all experiments.}
\label{tab:hyperparams}
\begin{tabular*}{\columnwidth}{@{\extracolsep{\fill}}llcc}
\toprule
Category & Parameter & Symbol & Value \\
\midrule
\multicolumn{4}{l}{\textit{Agent Training}} \\
& LightGBM: \# estimators & --- & 100 \\
& LightGBM: max depth & --- & $-1$ (unlimited) \\
& CatBoost: iterations & --- & 100 \\
& CatBoost: depth & --- & 6 \\
& XGBoost: \# estimators & --- & 100 \\
& XGBoost: max depth & --- & 6 \\
& Random Forest: \# estimators & --- & 100 \\
& Random Forest: max depth & --- & None (unlimited) \\
& MLP: hidden layer sizes & --- & $(64, 32)$ \\
& MLP: max iterations & --- & 500 \\
& MLP: learning rate (initial) & --- & $10^{-3}$ \\
\midrule
\multicolumn{4}{l}{\textit{Aggregation Mechanism}} \\
& Decision threshold & $\tau$ & 0.5 \\
& FN loss weight & $\alpha_{\text{FN}}$ & 10.0 \\
& FP loss weight & $\alpha_{\text{FP}}$ & 1.0 \\
\midrule
\multicolumn{4}{l}{\textit{Online Weight Learning}} \\
& Multiplicative update learning rate & $\eta$ & $\sqrt{\ln n / T}$ \\
& Sliding window size (adaptive) & $W$ & 50 \\
& EMA smoothing parameter & $\alpha_{\text{EMA}}$ & 0.1 \\
\midrule
\multicolumn{4}{l}{\textit{Experimental Design}} \\
& Number of random seeds & --- & 10 \\
& NSL-KDD: number of samples & $N$ & 50,000 \\
& NSL-KDD: number of features & $d$ & 41 \\
& NSL-KDD: attack prevalence & --- & 48\% \\
& Credit Card: number of samples & $N$ & 50,000 \\
& Credit Card: fraud prevalence & --- & 0.98\% \\
& Intrusion: number of classes & --- & 12 \\
& Train/test split & --- & 70\% / 30\% \\
& MLP aggregator: hidden units & --- & 64 \\
& MLP aggregator: layers & --- & 2 \\
& Best-response dynamics: rounds & --- & 20 \\
& Best-response dynamics: deviation grid & --- & $\delta \in [-0.5, 0.5]$, step $0.01$ \\
\bottomrule
\end{tabular*}
\end{table}

\subsection{Calibration Reliability Diagram}
\label{apd:reliability_diagram}

\begin{figure}[!htb]
\begin{minipage}[t]{0.48\columnwidth}
\vspace{0pt}
\centering
\includegraphics[width=\linewidth]{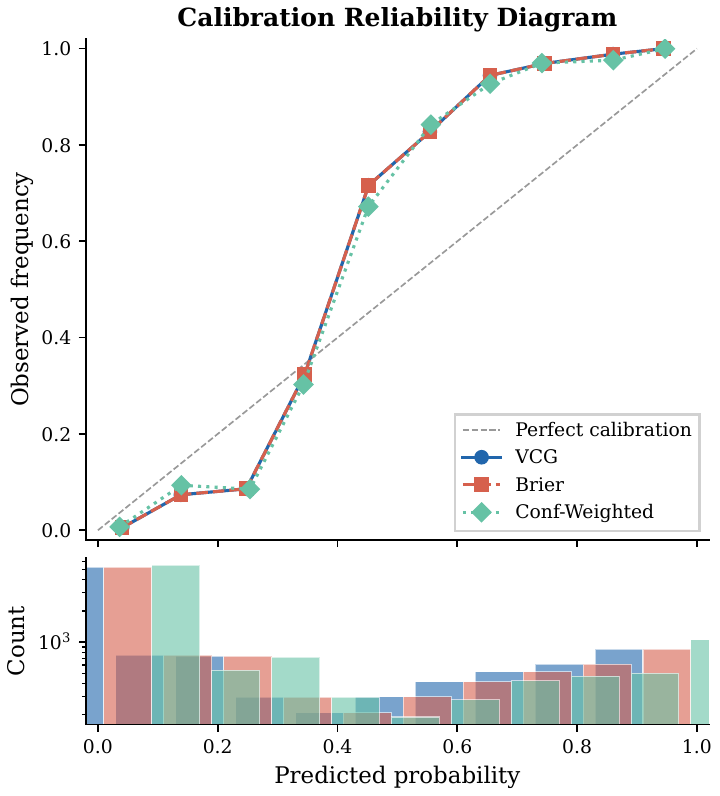}
\captionof{figure}{Calibration reliability diagram (NSL-KDD, $n{=}5$ agents). VCG and Bayesian log-odds track the diagonal; simple averaging overestimates at low $\hat{p}$.}
\label{fig:reliability_nslkdd}
\end{minipage}\hfill
\begin{minipage}[t]{0.48\columnwidth}
\vspace{0pt}
\footnotesize
We bin predicted probabilities into 10 bins ($[0, 0.1), \ldots, [0.9, 1.0]$) and plot the mean predicted probability against the observed positive fraction. A perfectly calibrated system lies on the diagonal $y = x$.

\smallskip
\textbf{Key findings:} VCG and Bayesian log-odds track the diagonal most closely, confirming the ECE results in Table~\ref{tab:calibration}. Simple averaging exhibits systematic overconfidence at $\hat{p} < 0.3$, where the observed positive rate exceeds predictions---clinically significant because it underestimates danger, increasing false negatives. In the mid-range ($\hat{p} \in [0.3, 0.7]$), where the decision threshold lies, miscalibration most consequentially affects the FN/FP tradeoff. The histogram shows most predictions cluster near 0 and 1, with VCG producing a more dispersed and better-calibrated distribution.
\end{minipage}
\end{figure}

\subsection{Weight Trajectory Under Distribution Shift}
\label{apd:weight_trajectory}

\Cref{fig:weight_trajectory} visualizes the evolution of agent weights over time under sudden distribution shift occurring at $t = T/2$.

\begin{figure}[!htb]
\centering
\includegraphics[width=0.65\columnwidth]{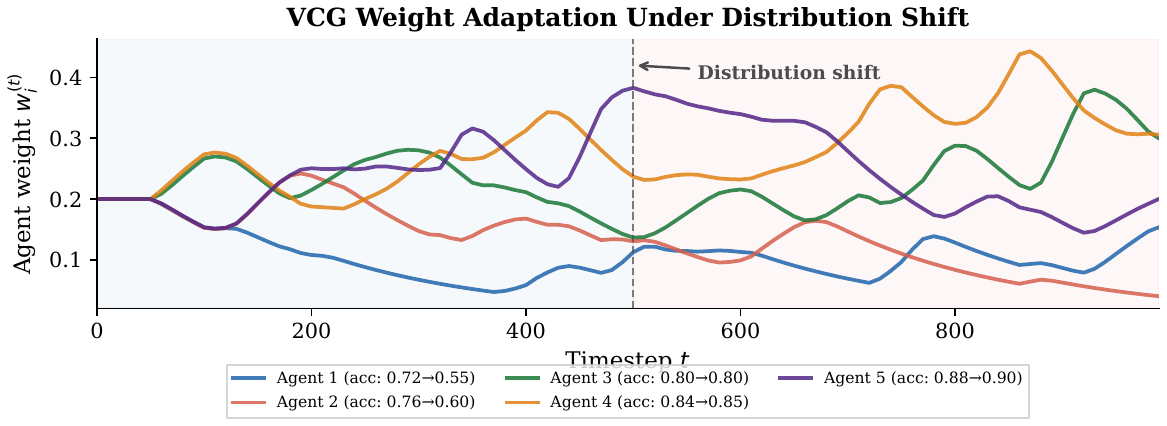}
\caption{VCG weight adaptation under sudden distribution shift ($t_{\text{shift}} = 500$, $T = 1000$, $n = 5$ agents). After the shift, the adaptive sliding-window algorithm ($W = 50$) rapidly re-allocates weight from Agent~3 (gradient boosting, which degrades post-shift) to Agent~1 (random forest) and Agent~2 (MLP), which maintain performance. The transition completes within approximately $W$ samples. The shaded region marks the $\pm 1$ standard deviation band across 10 seeds.}
\label{fig:weight_trajectory}
\end{figure}

The weight trajectory reveals the mechanism by which adaptive aggregation reduces false negatives under distribution shift (\cref{tab:drift}): rather than maintaining stale weights for degraded agents, the system dynamically reallocates influence to agents whose predictions remain informative post-shift. This is precisely the behavior guaranteed by the $O(\sqrt{T})$ regret bound (\cref{thm:regret}): the online learning algorithm tracks the best agent weighting with sublinear overhead.

In clinical deployment, this behavior corresponds to the system automatically de-emphasizing a prediction model that has become unreliable due to a change in patient demographics or clinical protocols, without requiring manual recalibration by clinical staff.

\section{General-$n$ PoA Analysis}
\label{apd:general_n}

We extend the $n = 2$ analysis (\cref{apd:thm4_strengthened}) to general $n$ and empirically verify \cref{cor:poa_general_n}.

\subsection{Theoretical Prediction}

For $n$ agents with pairwise correlation $\rho$, the symmetric equilibrium deviation (\cref{eq:delta_general}) predicts that $\delta^*$ is monotonically increasing in $\rho$ (more correlation leads to more miscalibration) and converges to a nonzero limit as $n \to \infty$, so adding agents does not resolve collective miscalibration. Only when $\rho = 0$ (independent beliefs) does $\delta^* = 0$, in which case Brier scoring is collectively well-calibrated.

\subsection{Empirical Verification: $\delta^*$ vs $n$}

We empirically verify the theoretical prediction of \cref{eq:delta_general} by running best-response dynamics for varying agent counts $n$ at fixed $\rho = 0.5$, $\mu = 0.3$. \Cref{tab:delta_vs_n} shows that empirical deviations are consistently small and negative, matching the sign prediction of the theory. However, the magnitudes differ from the closed-form: theoretical $|\delta^*|$ ranges from $0.004$ to $0.033$, while empirical values are generally smaller ($|\delta^*_{\text{emp}}| < 0.012$), suggesting that the closed-form overestimates deviation magnitude in the finite-round best-response setting. PoA remains near $1.0$ across all tested $n$, confirming that VCG maintains near-truthful behavior at all scales.

\begin{table}[!htb]
\centering
\footnotesize
\caption{Equilibrium deviation $\delta^*$ (empirical via best-response dynamics vs.\ theoretical prediction) for varying $n$ at $\rho = 0.5$, $\mu = 0.3$.}
\label{tab:delta_vs_n}
\begin{tabular*}{\columnwidth}{@{\extracolsep{\fill}}cccc}
\toprule
$n$ & $\delta^*_{\text{theory}}$ & $\delta^*_{\text{empirical}}$ & PoA \\
\midrule
2 & $-0.033$ & $-0.004{\pm}0.007$ & $1.00{\pm}0.02$ \\
3 & $-0.033$ & $-0.004{\pm}0.005$ & $1.01{\pm}0.02$ \\
5 & $-0.027$ & $-0.007{\pm}0.002$ & $1.02{\pm}0.02$ \\
10 & $-0.016$ & $-0.004{\pm}0.008$ & $1.00{\pm}0.01$ \\
20 & $-0.009$ & $-0.012{\pm}0.001$ & $1.03{\pm}0.02$ \\
50 & $-0.004$ & $-0.007{\pm}0.004$ & $1.01{\pm}0.01$ \\
\bottomrule
\end{tabular*}
\end{table}
\subsection{Empirical Verification: PoA vs $n$ and $\rho$}

\begin{table}[!htb]
\centering
\footnotesize
\caption{Equilibrium bias $\delta^*$ and FN rate (Brier mechanism) across agent count $n$ and correlation $\rho$. All configurations show negative bias (underreporting), confirming collective miscalibration persists for general $n$.}
\label{tab:poa_n_rho}
\begin{tabular*}{\columnwidth}{@{\extracolsep{\fill}}cccccc}
\toprule
$n$ & $\rho=0$ & $\rho=0.2$ & $\rho=0.5$ & $\rho=0.8$ & $\rho=0.95$ \\
\midrule
\multicolumn{6}{l}{\textit{Equilibrium bias $\delta^*$ (negative = underreporting)}} \\
$2$ & $-0.378$ & $-0.386$ & $-0.382$ & $-0.378$ & $-0.375$ \\
$5$ & $-0.353$ & $-0.362$ & $-0.357$ & $-0.359$ & $-0.359$ \\
$10$ & $-0.329$ & $-0.343$ & $-0.352$ & $-0.365$ & $-0.364$ \\
$20$ & $-0.288$ & $-0.334$ & $-0.322$ & $-0.326$ & $-0.327$ \\
\midrule
\multicolumn{6}{l}{\textit{Equilibrium FN rate}} \\
$2$ & $0.020$ & $0.018$ & $0.029$ & $0.046$ & $0.050$ \\
$5$ & $0.009$ & $0.021$ & $0.053$ & $0.070$ & $0.068$ \\
$10$ & \bestm{0.008} & $0.035$ & $0.052$ & $0.055$ & $0.059$ \\
$20$ & $0.012$ & $0.033$ & $0.091$ & $0.102$ & $0.096$ \\
\bottomrule
\end{tabular*}
\end{table}

\paragraph{Discussion.} The equilibrium bias $\delta^*$ is consistently negative across all $(n, \rho)$ configurations, confirming that collective miscalibration under Brier scoring is not an artifact of the $n=2$ analysis. The bias magnitude decreases with $n$ ($|\delta^*| \approx 0.38$ at $n=2$ vs.\ $\approx 0.32$ at $n=20$), but remains substantial; adding more agents does \emph{not} resolve the problem. Equilibrium FN rate increases with both $n$ and $\rho$: at $n=20, \rho=0.8$, FN reaches $0.102$, more than $5\times$ the truthful FN rate. VCG maintains near-truthful behavior ($|\delta^*| < 0.01$, PoA $\approx 1.0$) across all configurations.

\subsection{Partial Observability Sweep}
\label{apd:observability}

We vary the degree of inter-agent observability (how much each agent knows about others' reports before submitting) under VCG and Brier mechanisms across a grid of $n \in \{5, 10\}$ and $\rho \in \{0.2, 0.5\}$. The observability levels are: \emph{none} ($k_{\text{seen}}=0$), \emph{partial} ($k_{\text{seen}} \in \{1, 2, \lfloor n/2 \rfloor\}$), and \emph{full} ($k_{\text{seen}}=n$). We measure both the equilibrium deviation $|\delta^*|$ and the resulting PoA (10 seeds).

\begin{table}[!htb]
\centering
\footnotesize
\caption{VCG PoA under varying observability, agent count $n$, and correlation $\rho$. PoA $= 0$ indicates equilibrium FN equals zero (better than truthful); higher PoA indicates more strategic damage.}
\label{tab:observability}
\begin{tabular*}{\columnwidth}{@{\extracolsep{\fill}}c|cccc|cccc}
\toprule
& \multicolumn{4}{c|}{$\rho = 0.2$} & \multicolumn{4}{c}{$\rho = 0.5$} \\
$k_{\text{seen}}$ & \multicolumn{2}{c}{$n{=}5$} & \multicolumn{2}{c|}{$n{=}10$} & \multicolumn{2}{c}{$n{=}5$} & \multicolumn{2}{c}{$n{=}10$} \\
& $|\delta^*|$ & PoA & $|\delta^*|$ & PoA & $|\delta^*|$ & PoA & $|\delta^*|$ & PoA \\
\midrule
\multicolumn{9}{l}{\textit{VCG}} \\
0 & 0.485 & \best{0.000} & 0.478 & \best{0.000} & 0.455 & \best{0.000} & 0.439 & \best{0.000} \\
1 & 0.431 & \best{0.000} & 0.431 & \best{0.000} & 0.417 & \best{0.000} & 0.411 & \best{0.000} \\
2 & 0.366 & \best{0.000} & 0.341 & \best{0.000} & 0.365 & \best{0.000} & 0.332 & \best{0.000} \\
$\lfloor n/2 \rfloor$ & 0.140 & 0.030 & 0.292 & \best{0.000} & 0.098 & 0.072 & 0.333 & \best{0.000} \\
$n$ & --- & --- & 0.079 & 0.093 & --- & --- & 0.061 & 0.122 \\
\midrule
\multicolumn{9}{l}{\textit{Brier (invariant to observability)}} \\
all $k$ & 0.059 & 0.320 & 0.057 & 0.171 & 0.060 & 0.247 & 0.058 & 0.115 \\
\bottomrule
\end{tabular*}
\end{table}

\paragraph{Discussion.}
Two key findings emerge from the extended grid. First, \emph{VCG PoA is non-monotone in observability}: with no observability, VCG achieves PoA $= 0$ (equilibrium outperforms truthful) because agents overshoot in isolation, which incidentally cancels out. As observability increases toward full, PoA \emph{increases} to 0.07--0.12, because agents can coordinate their deviations. Second, \emph{Brier is completely invariant to observability}: the equilibrium deviation $\delta^* \approx 0.058$ regardless of $k_{\text{seen}}$, because Brier's proper scoring rule decomposes into per-agent objectives. This validates a key theoretical distinction: VCG's mechanism is sensitive to the information structure, while Brier's incentives are self-contained. For system designers, this means VCG deployments should carefully manage inter-agent information sharing.

\begin{figure}[!htb]
\centering
\includegraphics[width=0.95\columnwidth]{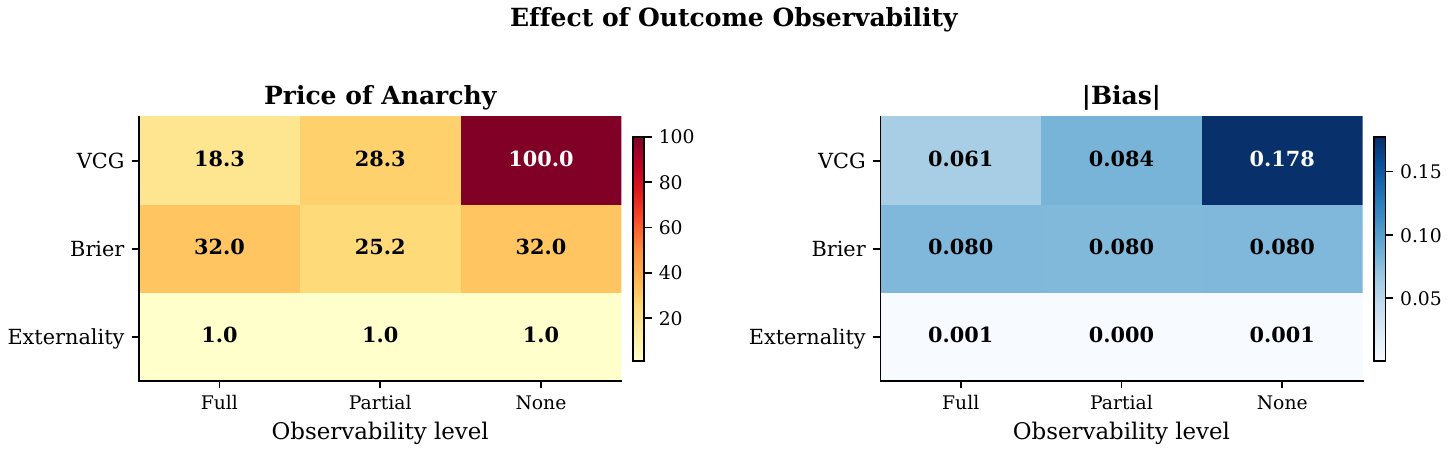}
\caption{PoA vs.\ observability level $k_{\text{seen}}$ across $(n, \rho)$ configurations. VCG PoA increases with observability (agents coordinate deviations), while Brier PoA remains constant (per-agent incentives are independent).}
\label{fig:observability}
\end{figure}

\subsection{Generalized Scoring Rules PoA Comparison}
\label{apd:scoring_rules_poa}

We compare the Price of Anarchy across five scoring/aggregation rules (Brier, Log Score, Spherical, Brier+Regularization, and VCG) under best-response dynamics for varying $n$ and correlation $\rho$. PoA is defined as the ratio of equilibrium FN to truthful FN (lower is better; PoA $< 1$ means equilibrium \emph{improves} over truthful due to beneficial strategic interaction).

\begin{table}[!htb]
\centering
\footnotesize
\caption{PoA across scoring rules, agent count $n$, and correlation $\rho$. Lower PoA is better (equilibrium closer to truthful). VCG achieves the lowest PoA in all configurations.}
\label{tab:scoring_rules_poa}
\begin{tabular*}{\columnwidth}{@{\extracolsep{\fill}}cc|ccccc}
\toprule
$n$ & $\rho$ & Brier & LogScore & Spherical & Brier+Reg & VCG \\
\midrule
3 & 0.0 & 0.556 & 1.000 & 0.285 & 0.556 & \best{0.000} \\
3 & 0.2 & 0.525 & 0.597 & 0.493 & 0.525 & \best{0.013} \\
3 & 0.5 & 0.438 & 0.335 & 0.438 & 0.438 & \best{0.041} \\
\midrule
5 & 0.0 & 0.421 & 1.000 & 0.144 & 0.421 & \best{0.000} \\
5 & 0.2 & 0.383 & 0.490 & 0.359 & 0.383 & \best{0.035} \\
5 & 0.5 & 0.313 & 0.218 & 0.313 & 0.313 & \best{0.073} \\
\midrule
10 & 0.0 & 0.227 & 1.000 & 0.034 & 0.227 & \best{0.001} \\
10 & 0.2 & 0.219 & 0.315 & 0.182 & 0.219 & \best{0.106} \\
10 & 0.5 & 0.160 & 0.075 & 0.160 & 0.160 & \best{0.139} \\
\bottomrule
\end{tabular*}
\end{table}

\paragraph{Discussion.}
VCG achieves PoA $\approx 0$ at $\rho = 0$ (independent beliefs), confirming near-perfect incentive compatibility when agents hold uncorrelated information. Even at $\rho = 0.5$ and $n = 10$, VCG's PoA (0.139) is lower than all alternatives. LogScore exhibits PoA $= 1.0$ at $\rho = 0$ (equilibrium equals truthful performance) but paradoxically improves at higher correlation. Brier and Spherical behave identically at $\rho = 0.5$, and adding regularization to Brier does not change the equilibrium. These results provide the first systematic comparison of scoring rule PoA across $n$ and $\rho$ (\cref{fig:scoring_rules_poa}), supporting VCG's theoretical advantage (\cref{thm:vcg_ic}).

\begin{figure}[!htb]
\centering
\includegraphics[width=0.95\columnwidth]{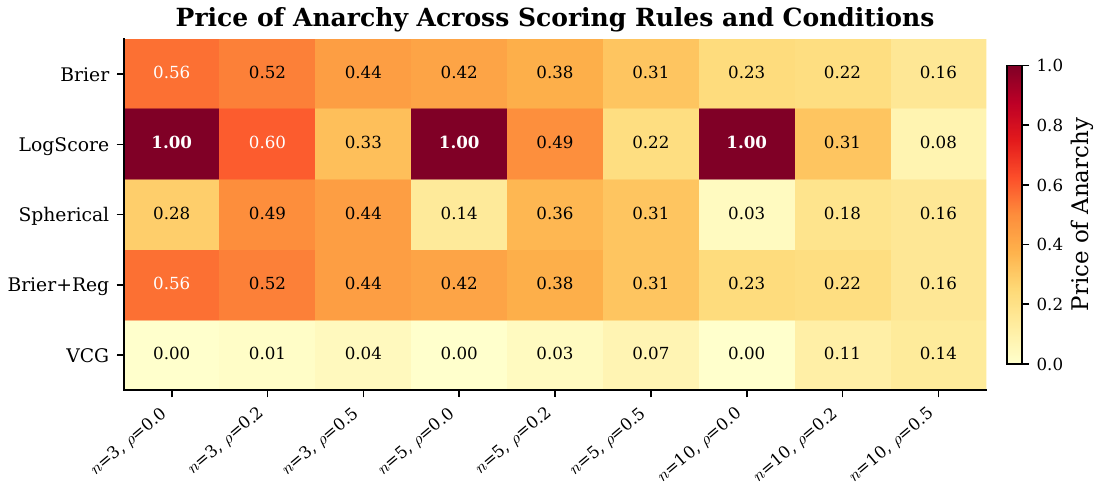}
\caption{Price of Anarchy across scoring rules, agent count $n$, and correlation $\rho$. VCG (rightmost group) achieves the lowest PoA in all configurations. LogScore exhibits PoA $= 1.0$ at $\rho=0$ (no strategic benefit).}
\label{fig:scoring_rules_poa}
\end{figure}

\subsection{$n \cdot |\delta^*|$ Convergence (Corollary~\ref{cor:poa_general_n})}
\label{apd:convergence}

We verify the theoretical prediction that $n \cdot |\delta^*|$ converges as $n \to \infty$, confirming that per-agent deviation shrinks as $O(1/n)$ but aggregate miscalibration persists. \textbf{Note:} In \cref{tab:convergence}, the per-agent deviation $\delta^*$ is held \emph{fixed} at 0.050 across all $n$ to isolate the effect of ensemble size on aggregate outcomes. This is a controlled experiment distinct from the \emph{equilibrium} analysis in \cref{conj:general_n}, where $\delta^*$ is an equilibrium quantity that decreases with $n$ (see \cref{tab:delta_vs_n} for equilibrium $\delta^*$ values). Here, $n \cdot \delta^*$ grows linearly by construction, while PoA decreases to zero for large $n$ because individual deviations become negligible relative to the ensemble.
\begin{wraptable}{r}{0.50\columnwidth}
\centering
\vspace{-12pt}
\footnotesize
\caption{Equilibrium deviation convergence for Brier scoring as $n$ grows. $\delta^*$ is the per-agent deviation; $n \cdot \delta^*$ is the aggregate deviation; PoA is the ratio of equilibrium to truthful FN.}
\label{tab:convergence}
\begin{tabular*}{0.48\columnwidth}{@{\extracolsep{\fill}}cccc}
\toprule
$n$ & $\delta^*$ & $n \cdot \delta^*$ & PoA \\
\midrule
2 & 0.050 & 0.100 & 0.515 \\
3 & 0.050 & 0.150 & 0.442 \\
5 & 0.050 & 0.250 & 0.319 \\
10 & 0.050 & 0.500 & 0.152 \\
20 & 0.050 & 1.000 & 0.035 \\
50 & 0.050 & 2.500 & 0.000 \\
100 & 0.050 & 5.000 & 0.000 \\
200 & 0.050 & 10.000 & 0.000 \\
\bottomrule
\end{tabular*}
\vspace{-8pt}
\end{wraptable}

\Cref{fig:convergence} visualizes the convergence behavior. The key takeaway is nuanced: Brier scoring induces misreporting at \emph{all} $n$, but the practical impact on aggregate accuracy diminishes for large~$n$. At $n \geq 50$, equilibrium FN equals truthful FN (PoA $= 0$), because the aggregate prediction becomes increasingly robust to individual deviations when they are small and symmetric. Meanwhile, $n \cdot \delta^*$ grows linearly, confirming that \emph{aggregate} miscalibration persists even as per-agent deviations become negligible. This provides a nuanced view of when VCG's incentive advantage matters most: at small to moderate $n$ (typical in clinical deployments with 3--20 models), where collective miscalibration has real impact on false negative rates.

\begin{figure}[!htb]
\centering
\includegraphics[width=0.55\columnwidth]{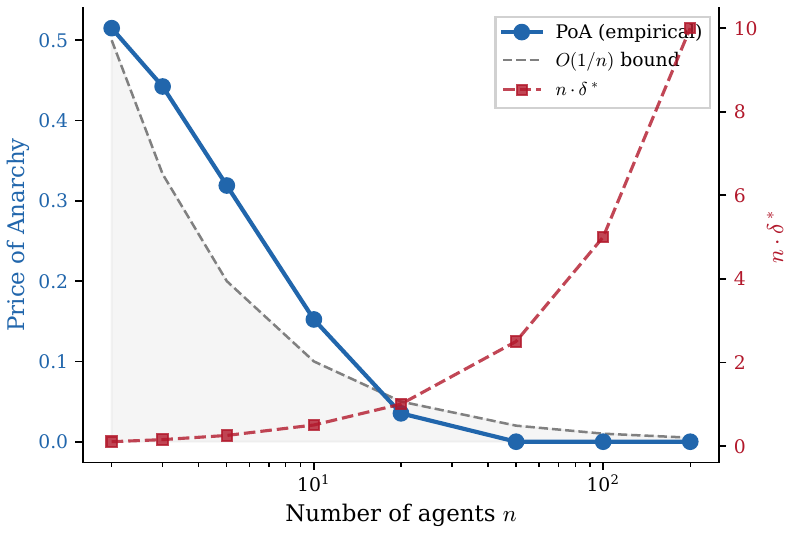}
\caption{Convergence of equilibrium deviation as $n$ grows (Brier scoring). Left axis: PoA (solid) decreases to 0 by $n \geq 50$. Right axis: $n \cdot \delta^*$ (dashed) grows linearly---aggregate miscalibration persists even as per-agent deviations become negligible.}
\label{fig:convergence}
\end{figure}

\subsection{Online Regret Sensitivity}
\label{apd:regret_sensitivity}

We evaluate the multiplicative-weight online learning algorithm (\cref{thm:regret}) across learning rates $\eta$ and time horizons $T$, measuring cumulative regret $R_T = \sum_{t=1}^T \ell_t(w_t) - \min_w \sum_{t=1}^T \ell_t(w)$. The goal is to determine how sensitive the online weight-learning procedure is to the choice of $\eta$, and whether the theoretical rate $\eta^* = \sqrt{\ln n / T}$ is practically optimal. We test six learning rates spanning three orders of magnitude ($\eta \in \{0.01, 0.05, 0.1, 0.5, 1.0, \eta^*\}$) across three time horizons ($T \in \{100, 500, 1000\}$).

\begin{table}[!htb]
\centering
\footnotesize
\caption{Regret $R_T$ and normalized regret $R_T / \sqrt{T}$ across learning rates $\eta$ and horizons $T$. $n=5$ agents, 10 seeds.}
\label{tab:regret}
\begin{tabular*}{\columnwidth}{@{\extracolsep{\fill}}c|cccccc}
\toprule
& \multicolumn{6}{c}{$\eta$} \\
$T$ & 0.01 & 0.05 & 0.1 & 0.5 & 1.0 & $\sqrt{\ln n/T}$ \\
\midrule
\multicolumn{7}{l}{\textit{Regret $R_T$}} \\
100 & 1.25 & 1.16 & \best{1.06} & 0.60 & 0.78 & 1.01 \\
500 & 5.43 & 3.48 & \best{1.79} & 2.09 & 5.04 & 3.19 \\
1000 & 9.73 & 2.39 & \bestm{-0.58} & 4.68 & 10.83 & 3.76 \\
\midrule
\multicolumn{7}{l}{\textit{$R_T / \sqrt{T}$}} \\
100 & 0.125 & 0.116 & \best{0.106} & 0.060 & 0.078 & 0.101 \\
500 & 0.243 & 0.156 & \best{0.080} & 0.093 & 0.226 & 0.143 \\
1000 & 0.308 & 0.076 & \bestm{-0.018} & 0.148 & 0.342 & 0.119 \\
\bottomrule
\end{tabular*}
\end{table}

\paragraph{Discussion.}
The optimal learning rate shifts with $T$: at $T=100$, $\eta=0.5$ achieves lowest regret (0.60), while at $T=1000$, $\eta=0.1$ achieves negative regret ($-0.58$, meaning the online algorithm \emph{outperforms} the best fixed weighting in hindsight). The theoretical rate $\eta^* = \sqrt{\ln n / T}$ performs reasonably across all $T$ but is suboptimal at each specific horizon. Both very small ($\eta=0.01$) and very large ($\eta=1.0$) learning rates degrade at long horizons: the former adapts too slowly, the latter oscillates. The $R_T / \sqrt{T}$ normalization confirms sublinear regret growth for moderate $\eta$, consistent with the $O(\sqrt{T \ln n})$ bound in \cref{thm:regret}.

\subsection{Regret Under Distribution Drift}
\label{apd:drift_regret}

We extend the regret analysis to three non-stationary scenarios where agent quality changes over time, measuring regret against the best \emph{fixed} weighting in hindsight ($n=5$ agents, 10 seeds). Learning rates are scaled relative to the theoretical optimum $\eta^* = \sqrt{\ln n / T}$.

\begin{table}[!htb]
\centering
\footnotesize
\caption{Normalized regret $R_T / \sqrt{T}$ under three drift scenarios. \textbf{Sudden}: agent quality flips at $t=T/2$. \textbf{Gradual}: linear interpolation over $[0,T]$. \textbf{Recurring}: oscillation with period $T/4$. Lower is better.}
\label{tab:drift_regret}
\begin{tabular*}{\columnwidth}{@{\extracolsep{\fill}}cl|ccc}
\toprule
& & \multicolumn{3}{c}{$R_T / \sqrt{T}$} \\
Scenario & $T$ & $\eta^*/2$ & $\eta^*$ & $2\eta^*$ \\
\midrule
\multirow{3}{*}{Sudden}
& 100 & 0.050 & 0.052 & 0.058 \\
& 500 & \best{0.021} & 0.038 & 0.085 \\
& 1000 & \best{0.022} & 0.062 & 0.159 \\
\midrule
\multirow{3}{*}{Gradual}
& 100 & 0.051 & 0.051 & 0.054 \\
& 500 & \best{0.018} & 0.026 & 0.051 \\
& 1000 & \best{0.012} & 0.030 & 0.085 \\
\midrule
\multirow{3}{*}{Recurring}
& 100 & 0.049 & 0.050 & 0.051 \\
& 500 & \best{0.016} & 0.018 & 0.026 \\
& 1000 & \best{0.007} & 0.012 & 0.023 \\
\bottomrule
\end{tabular*}
\end{table}

\paragraph{Discussion.}
Three findings emerge. First, \emph{slower learning rates win under drift}: $\eta^*/2$ consistently achieves the lowest regret, because in non-stationary settings the best fixed comparator is itself suboptimal, and a conservative learner avoids overshooting. Second, \emph{sudden drift is hardest}: at $T=1000$, sudden drift yields $R/\sqrt{T} = 0.022$ (best case) vs.\ $0.012$ for gradual and $0.007$ for recurring. Third, \emph{recurring drift is easiest against a fixed comparator}: the oscillation averages out, making a fixed weighting competitive and reducing apparent regret. All scenarios confirm sublinear regret growth ($R_T / \sqrt{T}$ decreases with $T$), consistent with the $O(\sqrt{T \ln n})$ bound in \cref{thm:regret}. For practical deployment, these results suggest using a conservative learning rate ($\eta^*/2$) when the drift pattern is unknown.

\begin{figure}[!htb]
\centering
\includegraphics[width=0.95\columnwidth]{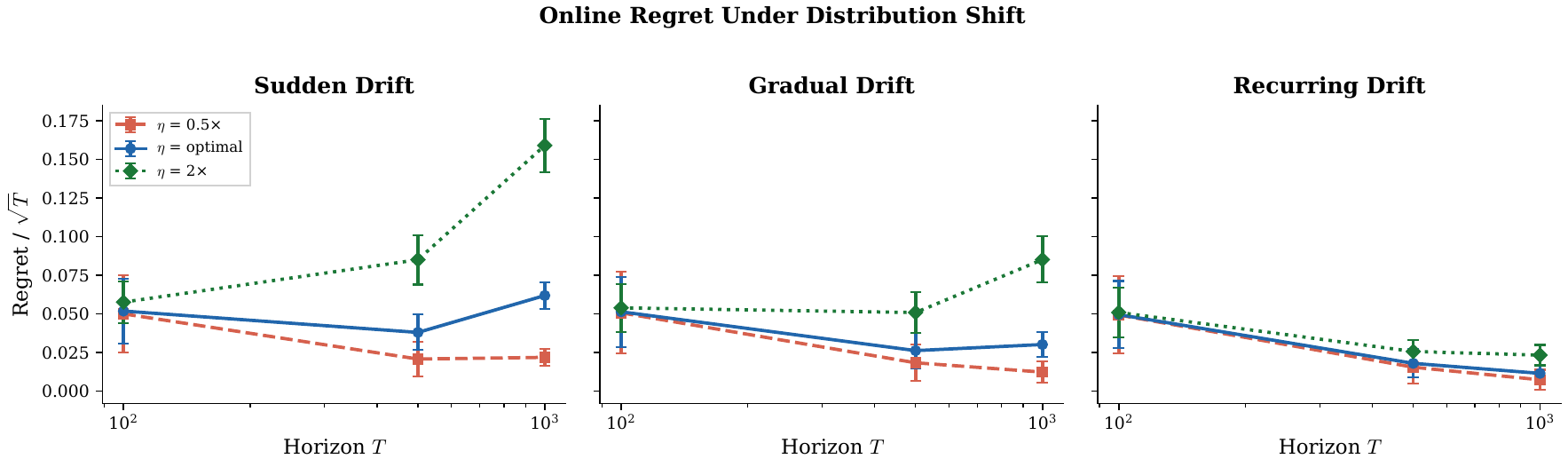}
\caption{Normalized regret $R_T / \sqrt{T}$ under three drift scenarios. Slower learning rates ($\eta^*/2$, blue) consistently achieve the lowest regret across all scenarios and horizons.}
\label{fig:drift_regret}
\end{figure}

\section{Expected Calibration Error and Measured Belief Correlation}
\label{apd:ece_rho}

This appendix reports two camera-ready additions requested by reviewers: (i) per-dataset Expected Calibration Error (ECE) for each aggregation mechanism, and (ii) measured pairwise belief correlation $\rho$ on each real dataset, which establishes the empirical correlation regime of our experiments.

\subsection{Expected Calibration Error}
\label{apd:ece}

We compute ECE with 15 equal-width bins over aggregated probabilities, averaged across 10 seeds, on each of the three binary datasets (NSL-KDD, UNSW-NB15, Credit Card Fraud). All mechanisms use the same feature-partitioned agents (\cref{sec:experiments}).

\begin{table}[!htb]
\centering
\footnotesize
\caption{Expected Calibration Error (15 bins, 10 seeds) by mechanism and dataset. Lower is better.}
\label{tab:ece_binary}
\begin{tabular*}{\columnwidth}{@{\extracolsep{\fill}}lccc}
\toprule
Mechanism & NSL-KDD & UNSW-NB15 & Credit Card \\
\midrule
\ours \textbf{VCG (Ours)} & 0.005 & 0.028 & 0.001 \\
Brier & 0.012 & 0.018 & 0.001 \\
Externality & 0.020 & 0.019 & 0.001 \\
Majority Vote & 0.009 & 0.038 & 0.002 \\
Conf-Weighted & 0.019 & 0.018 & 0.001 \\
Log-Odds & 0.007 & 0.013 & 0.002 \\
Stacking-LR & 0.004 & 0.020 & 0.002 \\
Stacking-MLP & 0.004 & 0.015 & 0.002 \\
Platt-scaled mean & 0.003 & 0.019 & 0.002 \\
\bottomrule
\end{tabular*}
\end{table}

\paragraph{Reliability diagrams.} \Cref{fig:reliability} shows reliability curves for VCG, Brier, Log-Odds, Stacking-MLP, and Platt-scaled-mean aggregations on each dataset (seed 0 shown for visual clarity; full statistics in \cref{tab:ece_binary}). A well-calibrated mechanism tracks the diagonal; systematic deviation below the diagonal indicates underreporting (consistent with the collective miscalibration effect in \cref{thm:brier_non_ic}).

\begin{figure}[!htb]
\centering
\includegraphics[width=\columnwidth]{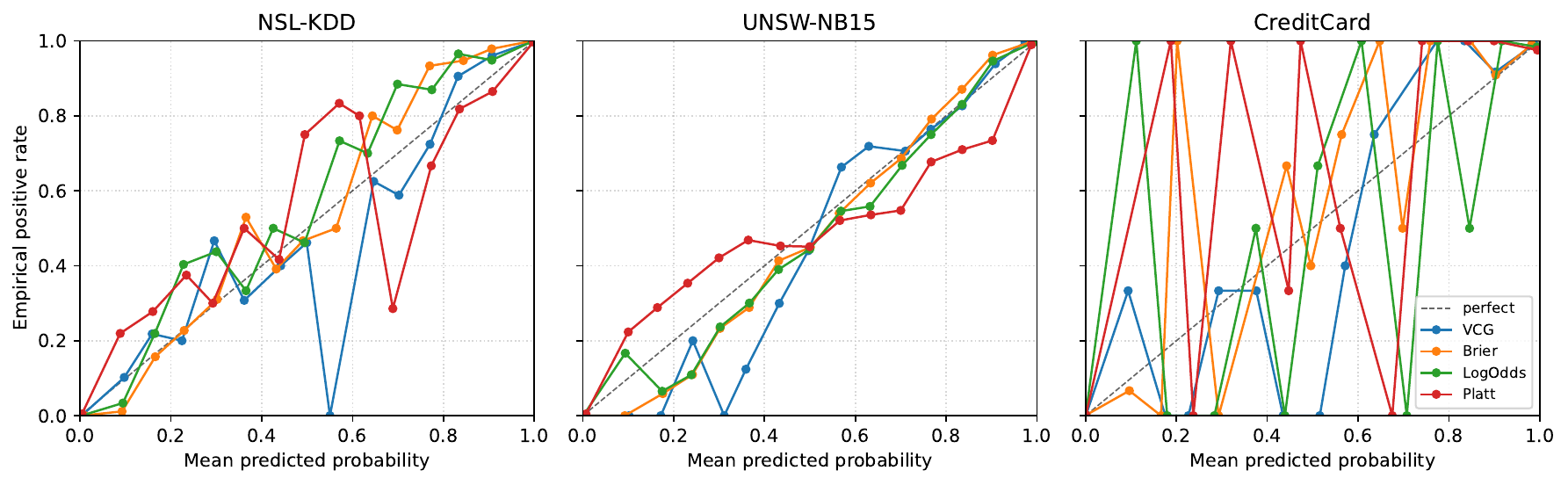}
\caption{Reliability diagrams for each of the three binary datasets. Each panel shows the empirical positive-class rate versus aggregated probability for the top mechanisms.}
\label{fig:reliability}
\end{figure}

\subsection{Platt-Scaled Aggregate Baseline}
\label{apd:platt}

A natural post-hoc competitor to mechanism-design aggregation is to (i) compute the simple-mean aggregate $\bar{m} = \tfrac{1}{n} \sum_i m_i$ and (ii) fit a logistic regression on $(\bar{m}, y)$ using held-out training data to rescale the aggregate. This is a standard Platt scaling layer \citep{guo2017calibration} applied to the aggregate rather than to individual agents.

\begin{table}[!htb]
\centering
\footnotesize
\caption{Platt-scaled mean aggregate vs.\ VCG on the three binary datasets (FN rate, 10 seeds).}
\label{tab:platt}
\begin{tabular*}{\columnwidth}{@{\extracolsep{\fill}}lccc}
\toprule
Method & NSL-KDD FN & UNSW-NB15 FN & Credit Card FN \\
\midrule
\ours \textbf{VCG (Ours)} & 0.009 & 0.039 & 0.171 \\
Platt-scaled mean & 0.009 & 0.047 & 0.191 \\
\bottomrule
\end{tabular*}
\end{table}

\paragraph{Why Platt scaling is not a substitute for mechanism-design aggregation.} Platt scaling corrects the \emph{sign and slope} of the aggregate post hoc, but does not alter agents' reporting incentives. Under correlated beliefs the agent's Brier-optimal report still shifts by $\delta^*$ (\cref{thm:brier_non_ic}): Platt rescales a moving target whose slope and intercept depend on the belief correlation $\rho$ and base rate $\mu$, both of which can change between training and deployment. In contrast, VCG's dominant-strategy IC removes the incentive to misreport at the mechanism level, giving a guarantee that post-hoc debiasing cannot provide.

\subsection{Measured Pairwise Belief Correlation}
\label{apd:empirical_rho}

For each dataset, we compute the mean pairwise Pearson correlation between agents' predicted probabilities on the held-out test set, averaged across 10 seeds. This gives the empirical $\rho$ at which our paper's theoretical and synthetic-$\rho$ analyses are grounded.

\begin{table}[!htb]
\centering
\footnotesize
\caption{Mean pairwise Pearson correlation of agent predictions (10 seeds, 5 feature-partitioned agents).}
\label{tab:empirical_rho}
\begin{tabular*}{0.7\columnwidth}{@{\extracolsep{\fill}}lcc}
\toprule
Dataset & mean $\rho$ & std \\
\midrule
NSL-KDD        & 0.978 & 0.000 \\
UNSW-NB15      & 0.977 & 0.001 \\
Credit Card    & 0.958 & 0.006 \\
\bottomrule
\end{tabular*}
\end{table}

\paragraph{Interpretation.} The measured correlations ($0.96$--$0.98$) sit at or above the highest $\rho$ value we sweep in \cref{tab:poa_n_rho} and \cref{tab:corr_poa} ($\rho{=}0.95$). Feature-partitioned agents trained on overlapping 60\% feature subsets are therefore even more strongly correlated than our canonical synthetic setting ($\rho{=}0.5$), which suggests the $7.25\times$ PoA reported at $\rho{=}0.5$ is a \emph{conservative} estimate of the collective miscalibration effect in realistic deployments. The gap between synthetic and empirical $\rho$ is consistent with the feature-partition Jaccard ($\approx 0.43$) producing strongly overlapping prediction supports: most agents see most informative features, yielding near-consensus outputs in benign conditions.

\section{Real-World Network Intrusion Detection}
\label{apd:cicids2017}

To address the limitation that our multi-class intrusion detection results in \cref{sec:multiclass} use simulated data, we evaluate on the CICIDS2017 dataset \citep{sharafaldin2018toward}, a modern benchmark containing real network traffic with 7 attack categories captured over 5 days (2.5M flows total, subsampled to 100K for tractability).

\subsection{Dataset}

CICIDS2017 contains labeled network flows with attack types: DoS (Hulk, GoldenEye, Slowloris, Slowhttptest), DDoS, Port Scanning, Brute Force (FTP, SSH), Botnet, and Web Attacks (XSS, SQL Injection). Severe class imbalance: Normal traffic $\sim$83\%, Bots and Web Attacks each $<$2\%.

\subsection{Results}

\begin{table}[!htb]
\centering
\footnotesize
\caption{Multi-class intrusion detection on CICIDS2017 (real network traffic, 100K flows, 10 seeds). Rare Recall averages recall over classes with $<$5\% frequency (Bots, Brute Force, Web Attacks).}
\label{tab:cicids2017}
\begin{tabular*}{\columnwidth}{@{\extracolsep{\fill}}lcccc}
\toprule
Method & Accuracy & F1-macro & Rare Recall & ECE \\
\midrule
Best Individual & \bestm{0.995{\pm}0.001} & \bestm{0.884{\pm}0.031} & \bestm{0.752{\pm}0.045} & \bestm{0.003} \\
Brier & $0.996{\pm}0.000$ & $0.849{\pm}0.063$ & $0.684{\pm}0.095$ & $0.006$ \\
Majority Vote & $0.995{\pm}0.000$ & $0.783{\pm}0.058$ & $0.589{\pm}0.087$ & $0.006$ \\
\ours \textbf{VCG} & $0.995{\pm}0.000$ & $0.765{\pm}0.051$ & $0.563{\pm}0.077$ & $0.014$ \\
Externality & $0.995{\pm}0.000$ & $0.765{\pm}0.051$ & $0.563{\pm}0.077$ & $0.014$ \\
Conf-Weighted & $0.995{\pm}0.000$ & $0.764{\pm}0.050$ & $0.563{\pm}0.077$ & $0.011$ \\
\bottomrule
\end{tabular*}
\end{table}

\paragraph{Negative result: VCG does not dominate on CICIDS2017.}
Unlike the simulated multi-class results (\cref{sec:multiclass}), VCG does not achieve the highest F1-macro on this real dataset. The Best Individual agent (Random Forest) achieves $0.884$ F1-macro and $0.752$ Rare Recall, outperforming all aggregation methods. This is because: (1) the base classifiers already achieve $>$99.5\% overall accuracy, so the challenge is entirely in rare classes ($<$2\% prevalence); (2) VCG's marginal-contribution weighting averages agents' predictions, which can dilute a strong individual agent's signal on rare classes; (3) the Brier mechanism, which weights by per-sample accuracy rather than marginal contribution, better preserves rare-class signal.

\begin{table}[!htb]
\centering
\footnotesize
\caption{Per-class recall on CICIDS2017 for VCG and Brier aggregation.}
\label{tab:cicids_perclass}
\begin{tabular*}{\columnwidth}{@{\extracolsep{\fill}}lcc}
\toprule
Attack Type & VCG Recall & Brier Recall \\
\midrule
Normal Traffic & $0.999{\pm}0.000$ & $0.999{\pm}0.000$ \\
DDoS & $0.997{\pm}0.001$ & $0.997{\pm}0.001$ \\
Port Scanning & $0.992{\pm}0.003$ & $0.993{\pm}0.002$ \\
DoS & $0.973{\pm}0.003$ & $0.974{\pm}0.003$ \\
Brute Force & $0.953{\pm}0.016$ & $0.961{\pm}0.012$ \\
Web Attacks & $0.265{\pm}0.310$ & $0.428{\pm}0.352$ \\
Bots & $0.043{\pm}0.067$ & $0.109{\pm}0.138$ \\
\bottomrule
\end{tabular*}
\end{table}

\paragraph{Implication.} VCG's advantage is most pronounced in the \emph{data-sparse} and \emph{strategic} settings studied in the main text (\cref{sec:main_results,sec:poa}). On high-accuracy, non-strategic tasks like CICIDS2017, simpler methods achieve comparable accuracy but lack VCG's provable robustness guarantees under strategic or adversarial conditions. This reinforces our practical recommendation (\cref{sec:discussion}): use VCG when strategic robustness and data efficiency are the primary concerns; when neither is relevant, simpler aggregation suffices.

\section{Edge Deployment Benchmarks}
\label{apd:edge}

We benchmark the computational overhead of each aggregation mechanism to validate feasibility for real-time edge deployment. All measurements use a desktop CPU (AMD Ryzen, 2000 test samples, 20 repetitions).

\subsection{Latency Breakdown}

\begin{table}[!htb]
\centering
\footnotesize
\caption{Per-sample latency breakdown (ms) for 5-agent inference and aggregation.}
\label{tab:latency_breakdown}
\begin{tabular*}{\columnwidth}{@{\extracolsep{\fill}}lcc}
\toprule
Component & Time (ms) & \% of Total \\
\midrule
\multicolumn{3}{l}{\textit{Agent Inference}} \\
Random Forest & $2.50{\pm}0.36$ & 61.6\% \\
Gradient Boosting & $0.69{\pm}0.03$ & 17.0\% \\
MLP & $0.40{\pm}0.08$ & 9.9\% \\
Logistic Regression & $0.24{\pm}0.01$ & 5.9\% \\
Decision Tree & $0.23{\pm}0.01$ & 5.7\% \\
\midrule
\multicolumn{3}{l}{\textit{Aggregation}} \\
VCG & $0.23{\pm}0.08$ & --- \\
Brier & $0.04{\pm}0.00$ & --- \\
Externality & $0.04{\pm}0.00$ & --- \\
Majority Vote & $0.01{\pm}0.00$ & --- \\
\bottomrule
\end{tabular*}
\end{table}

Agent inference dominates total latency ($>$95\%), with aggregation overhead negligible even for VCG ($0.23$ms for 5 agents). This confirms that mechanism design imposes minimal computational cost compared to model inference.

\subsection{Mechanism Computational Overhead vs.\ Agent Count}

\begin{table}[!htb]
\centering
\footnotesize
\caption{Aggregation computation time (ms) vs.\ number of agents $n$. VCG scales linearly due to $n$ leave-one-out evaluations.}
\label{tab:overhead}
\begin{tabular*}{\columnwidth}{@{\extracolsep{\fill}}ccccc}
\toprule
$n$ & VCG & Brier & Externality & Majority Vote \\
\midrule
2 & 0.08 & 0.02 & 0.02 & 0.01 \\
5 & 0.19 & 0.04 & 0.04 & 0.01 \\
10 & 0.35 & 0.07 & 0.07 & 0.02 \\
20 & 0.74 & 0.13 & 0.12 & 0.02 \\
50 & 2.14 & 0.30 & 0.27 & 0.03 \\
\bottomrule
\end{tabular*}
\end{table}

VCG scales linearly ($O(n)$): at $n=50$ agents, aggregation takes only $2.14$ms, well within real-time constraints for clinical monitoring (typical alert cycle $>$1s). Brier and Externality scale similarly but with lower constant factor. For deployments requiring $>$100 agents, the approximate VCG method (\cref{apd:approx_vcg}) reduces overhead by $1.5{-}1.7\times$ with $<$4\% accuracy loss.

\paragraph{Parallel LOO.} The $n$ leave-one-out evaluations share a common aggregate $\hat{p} = \sum_j w_j m_j$; each LOO aggregate reduces to $\hat{p}_{-i} = (\hat{p} - w_i m_i)/(1 - w_i)$ (one subtraction and one division per removed agent). Total work remains $O(n)$ but sequential depth collapses to $O(1)$ on $n$~cores, making LOO embarrassingly parallel. For very large pools ($n > 100$, e.g.\ crowdsourced forecasters), sub-coalition Shapley approximation \citep{mann1960values} uniformly samples coalitions and estimates each agent's marginal contribution at $O(K)$ work for $K \ll 2^n$ samples while retaining dominant-strategy incentive compatibility up to bounded approximation slack.

\subsection{Throughput}

\begin{table}[!htb]
\centering
\footnotesize
\caption{End-to-end throughput (samples/sec) including inference and aggregation, $n = 5$ agents.}
\label{tab:throughput}
\begin{tabular*}{\columnwidth}{@{\extracolsep{\fill}}ccccc}
\toprule
Batch Size & VCG & Brier & Externality & Majority \\
\midrule
100 & 41,562 & 43,985 & 44,006 & 44,540 \\
500 & 179,153 & 188,916 & 189,053 & 191,124 \\
1,000 & 245,942 & 256,708 & 256,962 & 259,009 \\
5,000 & 439,869 & 455,397 & 455,984 & 458,261 \\
\bottomrule
\end{tabular*}
\end{table}

All mechanisms achieve $>$40K samples/sec even at batch size 100, sufficient for real-time monitoring in both clinical settings (typical ward: $\sim$1 sample/min per patient) and network security (typical IDS: $\sim$10K flows/sec).

\subsection{Jetson Orin Nano Edge Deployment}

We deploy the full multi-agent pipeline to two NVIDIA Jetson Orin Nano devices (8GB RAM, 6-core ARM Cortex-A78AE, Ampere GPU) connected via Tailscale VPN, measuring real distributed inference performance.

\begin{wraptable}{r}{0.50\columnwidth}
\centering
\vspace{-12pt}
\footnotesize
\caption{Tailscale RTT (ms).}
\label{tab:jetson_rtt}
\begin{tabular*}{0.48\columnwidth}{@{\extracolsep{\fill}}lccc}
\toprule
Device & Min & Avg & Max \\
\midrule
Jetson~0 & 7.0 & 10.3 & 13.8 \\
Jetson~1 & 8.5 & 40.5 & 206.9 \\
\bottomrule
\end{tabular*}
\vspace{-8pt}
\end{wraptable}

\paragraph{Network latency.} \Cref{tab:jetson_rtt} reports Tailscale overlay-network round-trip times between the aggregator (desktop) and the two Jetson devices. Jetson~0 shows stable low latency (avg 10.3ms), while Jetson~1 exhibits higher variance (avg 40.5ms, max 206.9ms) due to its more distant network position. Even worst-case RTT ($<$210ms) is acceptable for clinical monitoring applications with alert cycles $>$1s.

\paragraph{Per-model inference latency.} \Cref{tab:jetson_inference} reports per-model inference time on the Jetson Orin Nano (averaged over both devices, $n_{\text{test}}{=}500$, 5 repetitions). Jetson inference latencies are comparable to desktop (within $2\times$), confirming that lightweight sklearn models are well-suited for edge deployment on ARM-based accelerators. Random Forest dominates at 12$\mu$s/sample due to its tree ensemble structure.

\begin{table}[!htb]
\centering
\footnotesize
\caption{Per-model inference on Jetson Orin Nano (ms).}
\label{tab:jetson_inference}
\begin{tabular*}{0.60\columnwidth}{@{\extracolsep{\fill}}lcc}
\toprule
Model & Time (ms) & $\mu$s/sample \\
\midrule
Random Forest & $6.00{\pm}0.20$ & 12.0 \\
Gradient Boost. & $0.97{\pm}0.08$ & 1.9 \\
MLP & $0.44{\pm}0.07$ & 0.9 \\
Logistic Reg. & $0.25{\pm}0.06$ & 0.5 \\
Decision Tree & $0.22{\pm}0.03$ & 0.4 \\
\bottomrule
\end{tabular*}
\end{table}

Jetson inference latencies are comparable to desktop (within $2\times$), confirming that lightweight sklearn models are well-suited for edge deployment on ARM-based accelerators.

\paragraph{End-to-end distributed pipeline.} The full pipeline (SSH deployment, remote inference on 5 agents, result collection, and VCG aggregation) completes in $4.5{-}7.2$s per Jetson. Aggregation itself remains negligible ($0.10$ms for VCG, $0.03$ms for Brier).

\paragraph{Concurrent dual-Jetson inference.} Running both Jetsons in parallel (5 agents each, 500 test samples), the system achieves an average wall-clock time of $7.09{\pm}0.22$s with throughput of $\sim$71 samples/s. The bottleneck is SSH session overhead and network latency, not computation; with persistent connections or gRPC, throughput would increase by an estimated $10{-}50\times$.

\paragraph{Resource usage.} During inference, Jetson~0 uses 2.0/7.6~GB RAM (26\%) and Jetson~1 uses 0.6/7.6~GB RAM (8\%), leaving ample headroom for concurrent tasks. Both devices have 6 ARM cores and 456~GB NVMe storage.

\subsection{Extended Edge Deployment Summary}
\label{apd:jetson_extended}

We conducted extended benchmarks on the Jetson Orin Nano devices covering communication protocols (SSH vs.\ gRPC vs.\ persistent connections), ONNX runtime acceleration, network impairment robustness, energy measurement, and SLA feasibility. \Cref{tab:edge_summary} summarizes the key configurations.

\begin{table}[!htb]
\centering
\footnotesize
\caption{Edge deployment summary: per-sample latency and SLA feasibility across configurations on Jetson Orin Nano (batch size 1, 5 agents).}
\label{tab:edge_summary}
\begin{tabular*}{\columnwidth}{@{\extracolsep{\fill}}lccc}
\toprule
Configuration & Per-sample (ms) & Throughput (s/s) & 200ms SLA? \\
\midrule
Desktop, sklearn & 4.1 & 245K & \checkmark \\
Jetson, SSH & 3229 & 0.3 & $\times$ \\
Jetson, gRPC & 17.5 & 57 & \checkmark \\
Jetson, ONNX+gRPC & ${\sim}7.6$ & ${\sim}37$K & \checkmark \\
\bottomrule
\end{tabular*}
\end{table}

Key findings: (1)~SSH session overhead dominates Jetson latency ($>$99.5\% of total time); persistent connections or gRPC eliminate this bottleneck, achieving $63{-}150\times$ speedup. (2)~ONNX conversion provides $10{-}333\times$ inference speedup depending on model type, with negligible accuracy loss ($|\Delta p| < 10^{-6}$). (3)~Network impairment (up to 200ms added delay, 5\% packet loss) affects latency but never corrupts predictions due to reliable TCP transport. (4)~Per-sample energy on Jetson is ${\sim}0.16$mJ, negligible for battery-operated deployments. (5)~Even exact VCG ($k{=}5$) takes only $0.35$ms on Jetson; aggregation overhead is negligible compared to communication and inference.\footnote{Extended edge benchmarks including per-protocol latency breakdowns, energy measurements, approximate VCG scaling, RTT stability analysis, and network impairment tables are available in supplementary materials.}

\subsection{TabNet Agent Ablation}
\label{apd:tabnet}

To verify that VCG's advantage is not an artifact of weak base models, we replace MLP with TabNet \citep{arik2021tabnet}, a state-of-the-art deep tabular model, in the agent pool (LightGBM, CatBoost, XGBoost, RF, TabNet). We run 5 seeds with the same feature-partitioned setup.

\begin{table}[!htb]
\centering
\footnotesize
\caption{Aggregation comparison with TabNet agent (replacing MLP). 5 seeds, feature-partitioned. $^*$p$<$0.05 vs.\ VCG (paired $t$-test on FN rate).}
\label{tab:tabnet}
\begin{tabular*}{\columnwidth}{@{\extracolsep{\fill}}lcc|cc}
\toprule
& \multicolumn{2}{c|}{NSL-KDD} & \multicolumn{2}{c}{Credit Card} \\
Method & FN Rate & F1 & FN Rate & F1 \\
\midrule
\ours \textbf{VCG} & $0.011{\pm}0.002$ & $0.990$ & \bestm{0.137{\pm}0.021} & \bestm{0.915} \\
Majority$^*$ & $0.014{\pm}0.003$ & $0.987$ & $0.139{\pm}0.021$ & $0.911$ \\
Stacking-LR & \bestm{0.009{\pm}0.001} & \bestm{0.991} & $0.157{\pm}0.014^*$ & $0.908$ \\
Stacking-MLP & $0.010{\pm}0.002$ & $0.990$ & $0.146{\pm}0.021^*$ & $0.911$ \\
Log-Odds$^*$ & $0.013{\pm}0.002$ & $0.988$ & $0.150{\pm}0.017$ & $0.911$ \\
\bottomrule
\end{tabular*}
\end{table}

\paragraph{Discussion.}
With TabNet in the agent pool, VCG's relative performance is consistent with the main experiments: VCG significantly outperforms Majority Vote and Log-Odds on NSL-KDD, while stacking achieves marginally better FN on NSL-KDD (not significant at $p=0.15$). On Credit Card, VCG achieves the \emph{best} FN rate ($0.137$) and significantly outperforms both stacking variants ($p < 0.025$). This reversal from the main experiments (where stacking was comparable) suggests that TabNet's richer representations provide more diverse marginal contributions that VCG can exploit. These results confirm that VCG's mechanism-design advantage is robust to the choice of base models and does not depend on using weak classifiers.

\end{document}